\newtheorem{theorem}{Theorem}
\newtheorem{definition}{Definition}
\newtheorem{lemma}{Lemma}
\begin{document}

\title{A Certified Radius-Guided Attack Framework to Image Segmentation Models}



\author{%
  \IEEEauthorblockN{%
    Wenjie Qu\IEEEauthorrefmark{1}\textsuperscript{\textsection},
    Youqi Li\IEEEauthorrefmark{2}\textsuperscript{\textsection},
    Binghui Wang\IEEEauthorrefmark{3}\textsuperscript{$\clubsuit$}%
  }%
  \IEEEauthorblockA{\IEEEauthorrefmark{1} Huazhong University of Science and Technology, Wuhan, China, wen\_jie\_qu@outlook.com}%
  \IEEEauthorblockA{\IEEEauthorrefmark{2} School of Computer Sci. \& Tech., Beijing Institute of Technology, Beijing, China, liyouqi@bit.edu.cn}%
  \IEEEauthorblockA{\IEEEauthorrefmark{3} Department of Computer Science, Illinois Institute of Technology, Chicago, USA, bwang70@iit.edu}%
}

\maketitle
\begingroup\renewcommand\thefootnote{\textsection}
\footnotetext{Equal contribution}
\endgroup
\begingroup\renewcommand\thefootnote{$\clubsuit$}
\footnotetext{Wenjie did this research when he was an intern in Wang's group}
\endgroup

\maketitle

\begin{abstract}
Image segmentation is an important problem in many safety-critical applications such as medical imaging and autonomous driving. 
Recent studies show that modern image segmentation models are vulnerable to adversarial perturbations, while existing attack methods mainly follow the idea of attacking image classification models. We argue that image segmentation and classification have inherent differences, and design an attack framework specially for image segmentation models.  
Our goal is to thoroughly explore the vulnerabilities of  modern segmentation models, i.e., aiming to misclassify as many pixels as possible under a perturbation budget in both white-box and black-box settings.

Our attack framework is 
inspired by certified radius, which was originally used by \textit{defenders} 
to defend against adversarial perturbations to classification models. 
We are the first, from the \textit{attacker} perspective,
to leverage the properties of certified radius and propose a certified radius guided attack framework {against} 
image segmentation models. 
Specifically, 
we first adapt randomized smoothing, the state-of-the-art certification method for classification models, to derive the 
pixel's  
certified radius.  
A larger certified radius of a pixel 
means the pixel is \textit{theoretically} more robust to adversarial 
perturbations. 
This observation inspires us to focus more on disrupting pixels with relatively smaller certified radii. 
Accordingly, we design a pixel-wise certified radius guided loss, when plugged into \textit{any} existing white-box 
attack, yields our certified radius-guided white-box attack.

Next, we propose the first black-box attack to image segmentation models via bandit. 
A key challenge 
is no gradient information is available. To address it, we design a novel gradient estimator, based on bandit feedback, which 
is query-efficient and provably unbiased and stable. We use this gradient estimator to design a projected bandit gradient descent (PBGD) attack. We further use pixels' certified radii and design 
a certified radius-guided PBGD (CR-PBGD) attack.  
We prove our PBGD and CR-PBGD attacks can achieve asymptotically optimal attack performance with an optimal rate.  
We evaluate our certified-radius guided white-box and black-box attacks on multiple modern image segmentation models and datasets. Our results validate the effectiveness of our certified radius-guided attack framework. 

\end{abstract}

\section{Introduction}
\label{sec:intro}

Image segmentation 
(also called pixel-level classification) 
is the task of labeling pixels in an image such that pixels belonging to the same object are assigned a same label. 
Image segmentation is an important problem in many safety-critical applications such as medical imaging~\cite{harmon2020artificial} (e.g., tumor detection), autonomous driving~\cite{deng2017cnn} (e.g., traffic sign detection). 
However, recent studies~\cite{xie2017adversarial,fischer2017adversarial,cisse2017houdini,hendrik2017universal,arnab2018robustness} showed that modern image segmentation models~\cite{zhao2017pyramid,zhao2018psanet,sun2019high} are vulnerable to adversarial perturbations: A carefully designed imperceptible perturbation to a testing image can mislead an image segmentation model to 
misclassify a substantial number of pixels in this image. 
Such vulnerabilities would  cause serious consequences
in the  safety-critical applications. 
For example, an attacker can attack a traffic sign prediction system such that a ``STOP" sign image 
after segmentation will be misclassified as, e.g., ``SPEED" or ``LEFTTURN"---This causes safety issues. 
For another example, insurance companies often use their disease diagnosis systems to test medical images before reimbursing a medical claim. However, an attacker (e.g., an insider) can fool the disease diagnosis systems (e.g., no tumor diagnosed as tumor) by imperceptibly modifying the attacker's medical images and sending fraud insurance claims—This causes insurance companies' financial loss.

While these recent attack methods to image segmentation models have been proposed, they majorly follow the idea of attacking image \textit{classification} models based on, e.g.,  the Fast Gradient Sign (FGSM)~\cite{goodfellow2014explaining,kurakin2016adversarial} and Projected Gradient Descent (PGD)~\cite{madry2018towards} attacks. However, we emphasize that image segmentation models and {classification models} have inherent differences: image classification models have a \textit{single prediction for an entire image}, while image segmentation models have \textit{a prediction for each pixel}. The per-pixel prediction 
can provide more information for an attacker to be exploited, e.g., an attacker can {collectively} leverage \textit{all} pixels' predictions to perform the attack.    
{In this paper, 
we would like to design an {optimized} attack framework specially for image segmentation models by leveraging the pixels' predictions.}
Specifically, given a target image segmentation model and a testing image, our goal is to generate an adversarial perturbation to this image under a perturbation budget, such that \textit{as many pixels as possible} 
in the image are wrongly predicted by the target model.

{
To achieve the goal, our attack needs to first uncover the \textit{vulnerable} pixels in the testing image, where a pixel is more vulnerable means it is easier to be misclassified when facing an adversarial perturbation, and then distributes the perturbation budget on more vulnerable pixels in order to misclassify more pixels. 
However, a key challenge is how can we \textit{inherently} characterize the  vulnerability of pixels.
To address it, we are 
inspired by certified robustness/radius~\cite{scheibler2015towards,carlini2017provably,ehlers2017formal,katz2017reluplex,wong2017provable,wong2018scaling,raghunathan2018certified,raghunathan2018semidefinite,cheng2017maximum,fischetti2018deep,bunel2018unified,dvijotham2018dual,gehr2018ai2,mirman2018differentiable,singh2018fast,weng2018towards,zhang2018efficient,lecuyer2018certified,li2018second,cohen2019certified,lee2019stratified,zhai2020macer,levine2020robustness,yang2020randomized,kumar2020curse,mohapatra2020higher,wang2020certifying,jia2020certifiedcommunity,wang2021certified,jia2020certified,jia2022almost,hong2022unicr}.}
Certified radius was originally used by \textit{defenders} to guarantee the robustness of image \textit{classification} models against adversarial perturbations. Given a classification model and a testing image, the certified radius of this image is the maximum (e.g., $l_p$) norm of a worst-case perturbation such that when the worst-case 
perturbation is added to the testing image, the perturbed image can be still accurately predicted by the classification model.
In other words, a testing image with a larger/smaller certified radius indicates  
it is \textit{theoretically} less/more vulnerable to adversarial perturbations. 
{
Though certified radius is derived mainly for doing the good, we realize that {attackers}, on the other hand, can also leverage it to do the bad in the context of image segmentation.
Particularly,  attackers can first attempt to obtain the certified radius 
of pixels, and use them to reversely reveal the inherently vulnerable pixels in the image. Then they can design better attacks using these vulnerable pixels. 
} 

\noindent {\bf Our work:}  
We use the 
property of pixel's certified radius and  
design the first certified radius-guided attack framework to image segmentation models.   
To thoroughly understand the vulnerabilities, we study both white-box and black-box attacks. 
{However, there are several technical challenges: i) How can we obtain pixel-wise certified radius for modern image segmentation models? ii) How can we design an attack framework that is applicable in both white-box and black-box settings? iii) Furthermore, for the black-box attack, can we have guaranteed attack performance to make it more practical?}

{\bf   Obtaining pixel-wise certified radius via randomized smoothing:}
{Directly calculating the pixel-wise certified radius 
for segmentation models 
is challenging. First, there exists no certification method for segmentation models; Second, though we may be able to adjust existing certification methods for image classification models to segmentation models (e.g., \cite{scheibler2015towards,carlini2017provably,ehlers2017formal,katz2017reluplex,wong2017provable,wong2018scaling,raghunathan2018certified,raghunathan2018semidefinite,cheng2017maximum,fischetti2018deep,bunel2018unified,dvijotham2018dual,gehr2018ai2,mirman2018differentiable,singh2018fast,weng2018towards,zhang2018efficient}), the computational overhead can be extremely high. Note that all the existing certification methods for \textit{base} image classification models are not scalable to large models. To address the challenges, we propose to adopt  randomized smoothing~\cite{cohen2019certified,salman2019provably}, which is the state-of-the-art certification method for \textit{smoothed} image classification models, and the only method that is scalable to large models. We generalize randomize smoothing to derive pixel-wise certified radius for image segmentation models (See Theorem~\ref{thm:pwcr}). 
}

{\bf Designing a certified radius-guided attack framework:}
 A larger certified radius of a pixel indicates  
 this pixel is theoretically more robust to  adversarial perturbations. In other words, an attacker needs a larger perturbation to make  the target image segmentation model misclassify this pixel. 
 This observation motivates us to focus more on perturbing pixels with relatively smaller certified radii under a given perturbation budget. 
To achieve the goal, we design a certified radius-guide loss function, where we modify the conventional (pixel-wise) loss function in the target  model by assigning each pixel a weight based on its certified radius. 
Specifically, a pixel with a large/smaller certified radius
will be assigned a smaller/larger weight. 
By doing so, losses for pixels with smaller certified radii will be enlarged, 
and thus more pixels with be wrongly predicted with the given perturbation budget.

\textit{Certified radius-guided white-box attacks:}
In white-box attacks, an attacker 
has full knowledge of the target model, which makes \textit{gradient-based} attacks possible. 
Our aim is then to increase our certified radius-guided loss and generate adversarial perturbations via a gradient-based white-box attack algorithm, e.g., the PGD attack~\cite{arnab2018robustness,madry2018towards}. 
We emphasize that, as our pixel-wise certified radius is plugged 
into the loss function of the target model, any existing gradient-based 
white-box
attack can be used as the base attack in our framework.

\textit{Certified radius-guided black-box attacks:}
In black-box attacks, 
an attacker cannot access the internal configurations of the target  model.
Hence, performing black-box attacks is much more challenging than white-box attacks, 
as no gradient information is available to determine the  perturbation direction. 
Following the existing black-box attacks to image classification models~\cite{ilyas2018black,li2019nattack,lin2020nesterov},
we assume the attacker knows pixels' confidence scores by querying the target segmentation model\footnote{ 
 We note that many real-world systems provide confidence scores, e.g., image classification systems such as Google Cloud Vision~\cite{googlecloud} and Clarifai~\cite{clarifai} return confidence scores when querying the model API.}. 
To design effective black-box attacks, one key step is to 
estimate the gradient of the attack loss with respect to the perturbation.
Generally speaking, there are two types of approaches to estimate the gradients ~\cite{larson2019derivative}—deterministic methods and stochastic methods. The well-known deterministic method is 
the zeroth-order method (i.e., ZOO~\cite{spall2005introduction,chen2017zoo}), and stochastic methods include natural evolutionary strategies (NES)~\cite{ilyas2018black}, SimBA~\cite{guo2019simple} and bandit~\cite{ilyas2018prior,wang2022bandits}. 
When performing real-world black-box attacks, query efficiency and gradient estimation accuracy 
are two critical factors an attacker should consider. 
However, ZOO is very query inefficient, while NES and SimBA are neither query efficient nor have accurate gradient estimation (More detailed analysis are in Section~\ref{sec:bbattack}). 
Bandit methods, when appropriately designed, can achieve the best tradeoff. Moreover, as far as we know, 
bandit is the only framework, under which, we can derive theoretical 
bounds 
when the exact gradient is unknown. Based on these good properties, we thus use bandit as our black-box attack methodology.

Specifically, bandit is  a family of optimization framework with 
partial information (also called bandit feedback)~\cite{hazan2019introduction,hazan2016optimal,bubeck2017kernel,saha2011improved,suggala2021efficient}. 
We notice that black-box attacks with only knowing 
pixels' predictions naturally fit this framework.
With it, we formulate the black-box attacks to image segmentation models as a bandit optimization problem. Our goal is to design a gradient estimator based on the model query and bandit feedback such that the \textit{regret} (i.e., the difference between the expected observed loss through the queries and the optimal loss) is minimized.
We first design a novel gradient estimator, which is query-efficient (2 queries per round) and accurate. 
Then, we design a projected bandit gradient descent (PBGD) attack algorithm based on our gradient estimator.  
As calculating pixels' certified radii only needs to know pixels' predictions, 
our derived pixel-wise certified radius can be seamlessly incorporated into the PBGD attack as well. With it, we further propose a certified
radius-guided PBGD 
attack algorithm to enhance the black-box attack performance.

{{\bf Theoretically guaranteed black-box attack performance:}
We prove that our novel gradient estimator is unbiased and stable. 
We further prove that our 
designed PBGD attack
algorithm achieves a \textit{tight sublinear regret}, which means
the regret tends to be 0 with an optimal rate as the number
of queries increases. 
Finally, our certified radius-guided PBGD attack also obtains
a tight sublinear regret. Detailed theoretical results are seen in Section~\ref{sec:theorems}.}

{\bf Evaluations:} We evaluate our certified radius-guided 
white-box and black-box attacks on modern image segmentation models (i.e., PSPNet~\cite{zhao2017pyramid}, PSANet~\cite{zhao2018psanet}, and HRNet~\cite{sun2019high}) and benchmark datasets (i.e., Pascal VOC~\cite{pascal-voc-2012}, Cityscapes~\cite{cordts2016cityscapes}, and ADE20K~\cite{zhou2019semantic}). In white-box attacks, we choose the state-of-the-art PGD~\cite{arnab2018robustness,madry2018towards} attack as a base attack. The results show that our certified radius-guided PGD attack can substantially outperform the PGD attack.
For instance,  
our attack can have a 
50\% relative gain over the PGD attack in reducing the pixel accuracy on testing images from the datasets.   
In black-box attacks, we show that our gradient estimator achieves 
the best trade-off among query-efficiency, accuracy, and stability, compared with the existing ones. The results also demonstrate the effectiveness of our PBGD attack and that incorporating pixel-wise certified radius can further enhance the attack performance.    

 We also evaluate the state-of-the-art empirical defense FastADT~\cite{wong2020fast} and provable defense SEGCERTIFY~\cite{fischer2021scalable} against our attacks. To avoid the sense of false security~\cite{carlini2019evaluating}, we mainly defend against our white-box CR-PGD attack. Our finding is 
these defenses can mitigate our attack to some extent, but are still not effective enough. 
For example, with an $l_2$ perturbation as 10 on Pascal VOC, the pixel accuracy with SEGCERTIFY and FastADT are 22\% and 41\%, respectively, while the clean pixel accuracy is 95\%. 
Our defense results thus show the necessity of designing stronger defenses in the future.

Our key contributions are summarized as follows:
\begin{itemize}[leftmargin=*]

    \item 
    We propose a certified radius-guide attack framework to study both white-box and black-box attacks to image segmentation 
    models. This is the first work to use certified radius for an attack purpose. Our framework can be seamlessly incorporated into any existing and future loss-based attacks.
    
    \item 
    We are the first to study black-box attacks to image segmentation models 
    based on bandits.  We design a novel gradient estimator 
    for black-box attacks that is query-efficient, 
     and provably 
     unbiased and stable. Our black-box attacks also achieve a \textit{tight} sublinear regret.

    \item Evaluations on modern image segmentation models and datasets 
    validate the effectiveness of our 
    certified radius-guided 
    attacks and their advantages over the compared ones mainly for image classification methods. 
\end{itemize}

\section{Background and Problem Setup}

\subsection{Image Segmentation}

Image segmentation is the task of labeling pixels of an image, where pixels belonging to the same object (e.g., human, tree, car) aim to be classified as the same label.
Formally, given an input image $\bm{x}  = \{x_n\}_{n=1}^N \subset \mathcal{X} $ 
with $N$ pixels 
and groundtruth pixel labels $y=\{y_n\}_{n=1}^N$, where each pixel $x_n$ has a label $y_n$ 
from a label set 
$\mathcal{Y}$,
an image segmentation model learns a mapping $F_\theta: \mathcal{X} \rightarrow \mathbb{P}^{N\times |\mathcal{Y}|}$, parameterized by $\theta$, 
where each row in $\mathbb{P}$ is the set of probability distributions over $\mathcal{Y}$, i.e., the sum of each row in $\mathbb{P}$  equals to 1.  
Different image segmentation methods design different loss functions to learn $F_\theta$. 
Suppose we have a set of training images $\mathbb{D}_{tr} = \{(x,y)\} $, a common way to learn $F_\theta$
is by minimizing 
a  pixel-wise 
loss function $L$ 
defined on the training set as follows:
\begin{equation}
\small
    {\min _\theta }  \sum_{\small (x,y) \in \mathbb{D}_{tr}} L(F_\theta(x), y) = - \sum_{(x,y) \in \mathbb{D}_{tr}} \sum_{n=1}^N 1_{y_n} \odot \log F_\theta(x)_{n},
    \label{equ:seg}
\end{equation}
where we use the cross entropy as the loss function. 
$1_{y_n}$ is an $|\mathcal{Y}|$-dimensional indicator vector whose $y_n$-th entry is 1, and 0 otherwise. $\odot$ is the element-wise product. 
After learning $F_\theta$, giving a testing image $\underline{x}$, each pixel $\underline{x}_n$ is predicted a label $\hat{y}_n  = \arg\max_j F_\theta(\underline{x})_{n,j}$.

\subsection{Certified Radius}
\label{bg_cr}

We introduce the certified radius achieved via state-of-the-art randomized smoothing   methods~\cite{cohen2019certified,salman2019provably}. Certified radius was originally 
derived to measure the certified robustness of an image classifier against adversarial perturbations. Generally speaking, for a testing image, if it has a larger certified radius under the classifier, then it is provably more robust to 
adversarial 
perturbations. 

Suppose we are given a testing image $x$ with a label $y$, and a 
(base) soft classifier $f$, which maps ${x}$ to confidence scores. 
Randomized smoothing first builds a smoothed soft classifier $g$ from the base 
$f$ and then calculates the certified radius for $x$ on the smoothed 
soft classifier 
$g$.  Specifically, given a noise distribution $\mathcal{D}$, $g$ is defined as: 
\begin{align}
g(x) = \mathbb{E}_{\beta \sim \mathcal{D}} [f(x + \beta)],
\end{align}
where $g(x)_c$ is the probability of the noisy $x + \beta$ predicts to be the label $c$, with the noise $\beta$ sampled from $\mathcal{D}$.

Assuming that $g(x)$ assigns to $x$ the true label $y$ with probability $p_A = g(x)_y$, and assigns to $x$ the ``runner-up" label $y'$ with probability $p_B = \max_{y'\neq y} g(x)_{y'}$. 
Suppose $\mathcal{D}$ is a {Gaussian}
distribution with mean 0 and variance $\sigma^2$. 
Then, authors in \cite{cohen2019certified,salman2019provably} derive the following \textit{tight} certified radius of the smoothed soft classifier $g$ for the testing image $x$ against an $l_2$ perturbation:
\begin{align}
\label{cr_raw}
cr(x) = \frac{\sigma}{2} [\Phi^{-1}(p_A) - \Phi^{-1}(p_B)],
\end{align}
where $\Phi^{-1}$ is the inverse of the standard Gaussian cumulative distribution function (CDF). 
That is, $g$ provably has the correct prediction $y$ for $x$ over all 
adversarial 
perturbations $\delta$, i.e., $\arg\max_{c} g(x + \delta)_c = y$, when  $||\delta||_2 \leq cr(x)$.
Note that calculating the exact probabilities $p_A$ and $p_B$ is challenging. Authors in~\cite{cohen2019certified, salman2019provably} use the Monte Carlo sampling algorithm to estimate a lower bound $\underline{p_A}$ of $p_A$ and an upper bound $\overline{p_B}$ of $p_B$ 
with arbitrarily high probability over the samples. 
They further set $\overline{p_B} = 1 - \underline{p_A}$ for simplicity. Then, $cr(x) = \sigma \Phi^{-1}(\underline{p_A})$.

\subsection{Bandit}
\label{bg:bandits}

In the continuous optimization setting, 
the bandit method optimizes a black-box function over an infinite domain with feedback. The black-box function means the specific form of the function is not revealed but its function value can be observed. Due to this property, bandit can be a natural tool to design black-box algorithms.
Next, we describe the three components:
action, (bandit) 
feedback, and goal,
in a bandit optimization problem.

\begin{itemize}[leftmargin=*]
\item \textbf{Action}: A learner plans to maximize a time-varying reward function $r_t(\cdot)$ with $T$ rounds' evaluations. In each round $t$, the learner selects an \textit{action} $x_t$ from a action space, $\mathcal{S}$, 
which is often defined as  
a convex set.  

\item \textbf{Feedback}: When the learner performs an action $x_t$ and submits the decision to the environment in round $t$, he will observe a reward $r_t(x_t)$ at $x_t$. 
As the observed information about the reward function $r_t$ is partial (i.e., only the function value 
instead of the function itself) and incomplete (the function value may be noisy), the observed information is often called \textit{bandit feedback}.

\item \textbf{Goal}: 
As no full information in advance, the learner uses \textit{regret} to measure the performance of his policy $\mathcal{P}$. The goal of the learner is to design a policy $\mathcal{P}$ to minimize the {regret}, which is defined as the gap between the expected cumulative rewards achieved by the selected actions and the maximum cumulative rewards achieved by the optimal action in hindsight, i.e.,
\begin{equation}
R_\mathcal{P}(T) = \mathbb{E}[\sum_{t=1}^Tr_t(x_t) - \max_{x\in\mathcal{S}}r_t(x)],
\end{equation}
where the expectation is taken over the randomness in 
the policy 
$\mathcal{P}$. 
When the policy $\mathcal{P}$ achieves a \textit{sublinear} regret (i.e., $R_{\mathcal{P}}(T) = o(T)$), we say it is \textit{asymptotically optimal} as the incurred regret disappears when $T$ is large enough, i.e., $\lim_{T\to\infty}R_{\mathcal{P}}(T)/T = 0$. 

\end{itemize}

\subsection{Problem Setup}

Suppose we have a target image segmentation model $F_\theta$, a testing image $x = \{x_n\}_{n=1}^N$ with true pixel labels $y = \{y_n\}_{n=1}^N$.
We consider that an attacker can add an adversarial perturbation $\delta = \{\delta_n\}_{n=1}^N$ with a bounded $l_p$-norm $\epsilon$ to
$x$, i.e., $\delta \in \Delta = \{\delta: ||\delta||_p \le \epsilon\}$. The attacker's goal is to maximally mislead 
$F_\theta$ on the perturbed testing image $x +\delta$, i.e., making as many pixels as possible wrongly predicted by $F_\theta$. 
Formally, 
\begin{align}
\label{attack_prob_raw}
& \max_{\delta} \sum_{n=1}^N 1[\arg \max_{c \in \mathcal{Y}} F_\theta(x+\delta)_{n,c} \neq y_n], \, \textrm{s.t., }
\delta \in \Delta.
\end{align}
The above problem is challenging to solve in that the indicator function $1[\cdot]$ is hard to optimize. In practice, the attacker will solve an alternative optimization problem that maximizes an \textit{attack loss} to find the perturbation $\delta$. 
S/He can use any attack loss in the existing works~\cite{xie2017adversarial,fischer2017adversarial,cisse2017houdini,hendrik2017universal,arnab2018robustness}.   
For instance, s/he can simply maximize the loss function $L$:  
{
\begin{align}
\max_{\delta} L(F_\theta(x+\delta),y) = \sum_{n=1}^N L(F_\theta(x+\delta)_n ,y_n), \textrm{s.t., } \delta \in \Delta. \label{attack_loss} 
\end{align}
}
In this paper, we consider both \textit{white-box attacks} and \textit{black-box attacks} to image segmentation models.  

\begin{itemize}[leftmargin=*]

\item {\bf White-box attacks:} An attacker knows the full knowledge about 
$F_\theta$, e.g., model parameters $\theta$, architecture. 

\item {\bf Black-box attacks:} An attacker has no knowledge about the internal configurations of 
$F_\theta$, and s/he only knows the confidence scores $F_\theta(x_q)$ via querying $F_\theta$ with an input $x_q$, 
following the existing black-box attacks to image classification models~\cite{ilyas2018black,li2019nattack,lin2020nesterov}.
\end{itemize}

\section{Certified Radius Guided White-Box Attacks to Image Segmentation 
}
\label{sec:cr_white}

\subsection{Overview}

Existing white-box attacks to image segmentation models 
majorly 
follow the idea of attacking image \textit{classification} models~\cite{kurakin2016adversarial,madry2018towards}. 
However, these attack methods 
are suboptimal.
This is because 
the per-pixel prediction in  segmentation models can provide much richer information for an attacker to be exploited, while classification models only have a {single} prediction for an entire image. 
We propose to exploit the \textit{unique} {pixel-wise certified radius}  information from pixels' predictions. 
We first observe an inverse relationship between a pixel's certified radius and the assigned perturbation to this pixel (See Figure~\ref{fig:observation}) and  
derive pixel-wise certified radius via
randomized smoothing~\cite{cohen2019certified,salman2019provably}. 
Then, we assign each pixel a weight 
based on its certified radius, and 
design a novel certified radius-guided attack loss, where we incorporate the pixel weights into the conventional attack loss.
Finally, we design our certified radius-guided white-box attack framework to image segmentation models based on our new attack loss.

\subsection{Attack Design}

Our attack is inspired by certified radius. 
We first define our \textit{pixel-wise} certified radius that is customized to image segmentation models. 

\begin{definition}[Pixel-wise certified radius]
\label{def:pwcr}
Given a {base} (or smoothed) image segmentation model $F_\theta$ (or $G_\theta$) and a testing image $x$ with pixel labels ${y}$. We define  certified radius of a pixel $x_n$, i.e., $cr(x_n)$, as the maximal value, such that  $F_\theta$ (or $G_\theta$) correctly predicts the pixel $x_n$ against any adversarial perturbation $\delta$ when its (e.g., $l_p$) norm is not larger than this value. Formally, 
\begin{align}
\label{eqn:pwcrdef}
& cr(x_n) = \max r,  \nonumber \\
& \textrm{s.t. } \arg \max_{c \in \mathcal{Y}} G_\theta(x+\delta)_{n,c} = y_n, \forall ||\delta||_p \leq r.
\end{align}
\end{definition}

From Definition~\ref{def:pwcr}, 
the certified radius of a pixel describes the extent to which the image segmentation model can provably has the correct prediction for this pixel against the worst-case adversarial perturbation. Based on this, we have the following 
observation that reveals the \textit{inverse} relationship between the pixel-wise certified radius and the 
perturbation when designing an effective attack.

\vspace{+0.5mm}
\noindent {\bf Observation 1: A pixel with a larger (smaller) certified radius should be disrupted with a smaller (larger) perturbation on the entire image.}
If a pixel has a larger certified radius, it means this pixel is more robust to adversarial perturbations. To wrongly predict this pixel, an attacker should allocate a larger perturbation.  In contrast, if a pixel has a smaller certified radius, this pixel is more vulnerable to adversarial perturbations. 
To wrongly predict this pixel, an attacker just needs to allocate a smaller perturbation. 
Thus, to design more effective attacks with limited perturbation budget, an attack should avoid 
disrupting pixels with 
relatively 
larger certified radii, but focus on 
pixels with relatively smaller certified radii.

With the above observation, our attack needs to solve three closely related 
problems: i) How to obtain the pixel-wise certified radius? ii) How to allocate the perturbation budget in order to perturb the pixels with smaller certified radii? and iii) How to generate adversarial perturbations to better attack image segmentation models? To address  i), we adopt the efficient randomized smoothing method~\cite{cohen2019certified,salman2019provably}. 
To address  ii), we design a certified-radius guided attack loss, by maximizing which an attacker will put more effort on perturbing pixels with smaller certified radii.   
To address iii), we design a certified radius-guide attack framework, where any existing loss-based attack method 
can be adopted as the base attack.

\begin{figure*}[!t]
\centering
\subfigure[Raw image]{\includegraphics[width=0.16\textwidth]{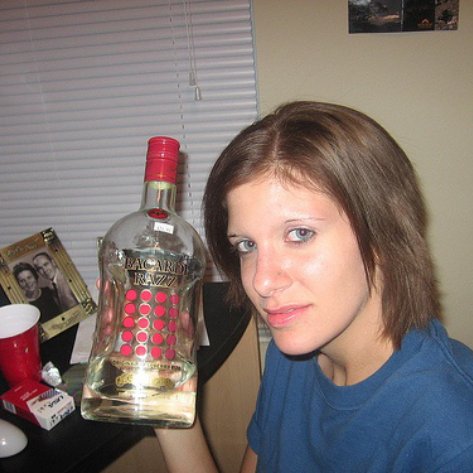}}
\subfigure[True seg. labels]{\includegraphics[width=0.16\textwidth]{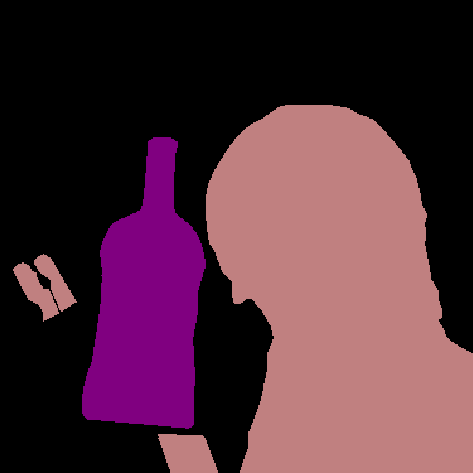}}
\subfigure[PGD pert. image]{\includegraphics[width=0.16\textwidth]{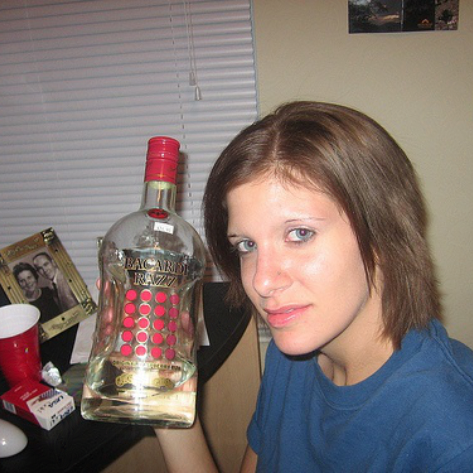}}
\subfigure[PGD perturbation]{\includegraphics[width=0.16\textwidth]{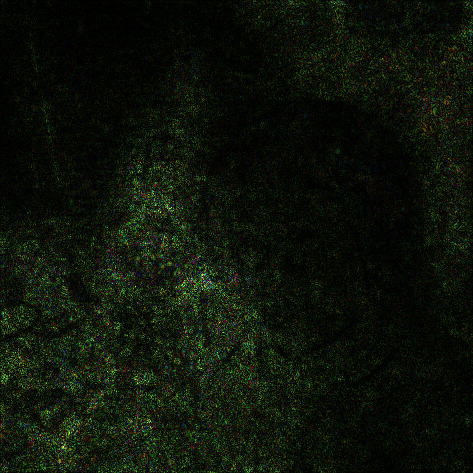}}
\subfigure[{\footnotesize CR-PGD pert. img.}]{\includegraphics[width=0.16\textwidth]{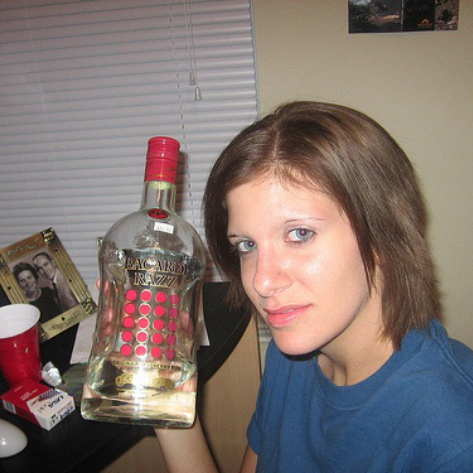}}
\subfigure[CR-PGD perturba.]{\includegraphics[width=0.16\textwidth]{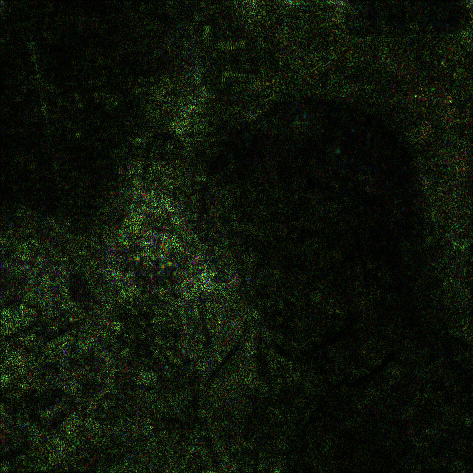}}
\subfigure[PGD predictions]{\includegraphics[width=0.16\textwidth]{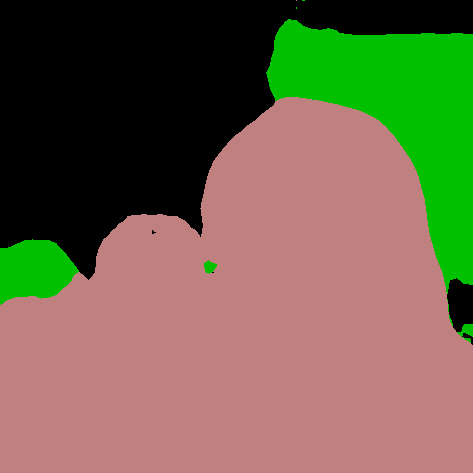}}
\subfigure[CR-PGD preds.]{\includegraphics[width=0.16\textwidth]{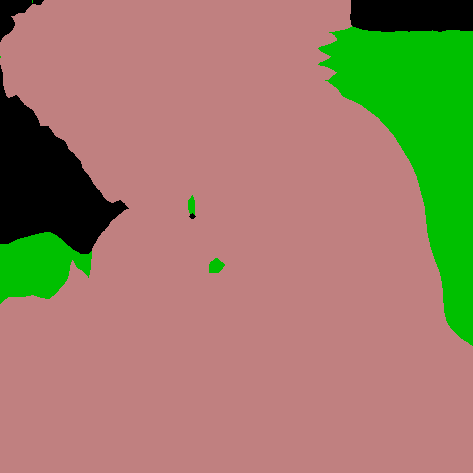}}
\subfigure[Pixel-wise CR]{\includegraphics[width=0.185\textwidth]{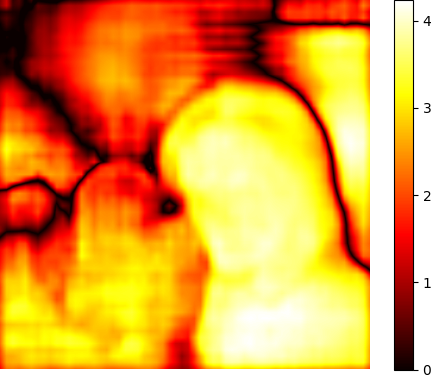}}
\subfigure[Pixel weights]{\includegraphics[width=0.185\textwidth]{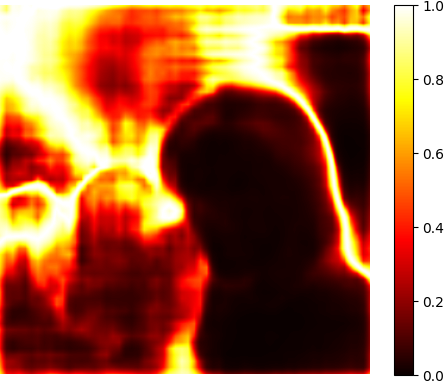}}
\subfigure[Certified radius vs. weight]{\includegraphics[width=0.26\textwidth]{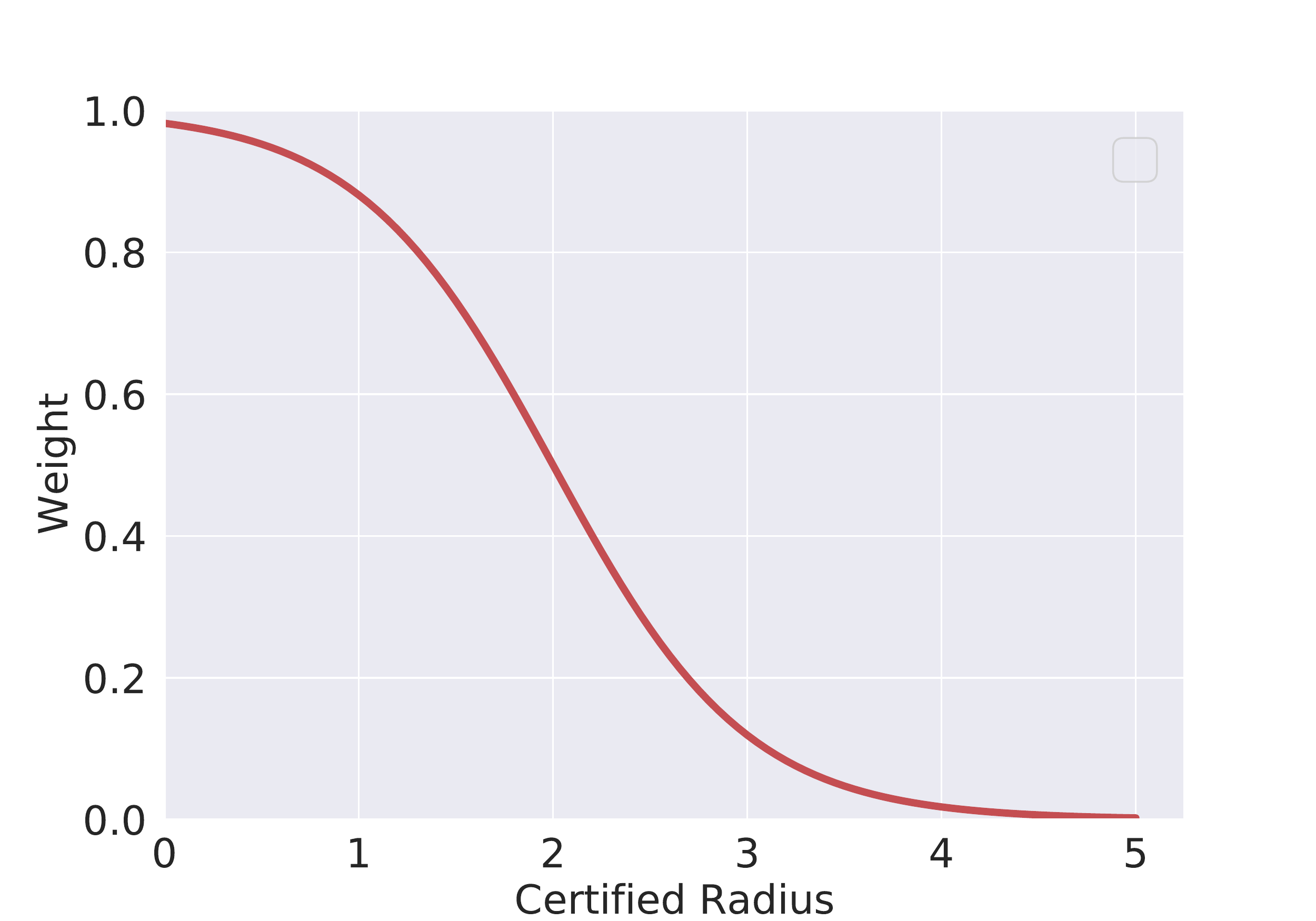}}
\vspace{-2mm}
\caption{(a)-(j) Illustration of our certified radius guided $l_2$ PGD attack on a random image in Pascal VOC. 
We observe that using pixels' certified radii, our CR-PGD attack focuses more on perturbing pixels with relatively smaller certified radii, while PGD does not. Thus, our CR-PGD can misclassify more pixels than PGD. {    
Regarding the inverse relationship in our observations: In (i), the left-bottom pixels have relatively larger certified radii, while the  top-left/middle pixels have relatively smaller certified radii. Then, CR-PGD in (f) assigns less perturbations on the left-bottom region, but more perturbations on the top-left/middle region of the image. In contrast, PGD in (d) assigns more perturbations on the left-bottom region, but less on the top-lef/middle region.  As a result, CR-PGD in (h) causes most pixels at the top-lef/middle to be misclassified, but  PGD in (g) does not.}
(k) Relationship between pixel-wise certified radius and pixel weight in Equation~\ref{eqn:weight}, where $a =2$ and $b =-4$. 
}
\label{fig:observation}
\vspace{-4mm}
\end{figure*}

\subsubsection{Deriving the pixel-wise certified radius via randomized smoothing} 
{Directly calculating the pixel-wise certified radius for segmentation models faces two challenges: no certification method exists; and adjusting existing certification methods for (base) image classification models to segmentation models has extremely high computational overheads. For instance, one can use the approximate local Lipschitz proposed in \cite{wengevaluating}. 
However, as our results shown in Section~\ref{sec:discussion}, it is infeasible to apply \cite{wengevaluating} to calculate certified radius for segmentation models.

To address these challenges, we adapt the state-of-the-art randomized smoothing-based efficient certification method~\cite{cohen2019certified,salman2019provably} for \textit{smoothed} image classification models.}
Specifically, we first  build a smoothed image segmentation model for the target segmentation model and then derive the pixel-wise certified radius on the smoothed model via randomized smoothing as below:

\begin{restatable}[]{theorem}{pwcr}
\label{thm:pwcr}
Given an image segmentation model $F_\theta$ and a testing image $x$, we build a smoothed  segmentation model as $G_\theta(x) = {\mathbb{E}_{\beta  \sim \mathcal{N}(0,{\sigma^2}I)}}F_\theta (x + \beta )$. Then for each pixel $x_n$, its certified radius for $l_2$ perturbation is: 
\begin{align}
    \label{equ:pwcr}
    cr(x_n) = \sigma \Phi^{-1}(\max_c G_\theta(x)_{n,c}).
\end{align}
\end{restatable}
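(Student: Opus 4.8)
The plan is to reduce the pixel-wise certified radius statement to the single-pixel case and then invoke the standard Cohen et al.\ randomized smoothing certification verbatim. The key structural observation is that the smoothed segmentation model $G_\theta(x) = \mathbb{E}_{\beta \sim \mathcal{N}(0,\sigma^2 I)} F_\theta(x+\beta)$ produces, for each fixed pixel index $n$, a smoothed soft classifier $G_\theta(x)_{n,\cdot}$ over the label set $\mathcal{Y}$. That is, if I define the base soft classifier $f^{(n)}(x) := F_\theta(x)_{n,\cdot}$, then $g^{(n)}(x) := G_\theta(x)_{n,\cdot} = \mathbb{E}_{\beta}[f^{(n)}(x+\beta)]$ is exactly a Gaussian-smoothed soft classifier in the sense of Section~\ref{bg_cr}. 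The whole theorem then decouples across pixels: the certified radius of pixel $x_n$ is precisely the certified radius of the image $x$ under the smoothed classifier $g^{(n)}$.

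With that reduction in hand, the proof is essentially a citation of the tight $l_2$ certification in Equation~\ref{cr_raw}. First I would set $c_A := \arg\max_c g^{(n)}(x)_c$ and $p_A := g^{(n)}(x)_{c_A} = \max_c G_\theta(x)_{n,c}$, and $p_B := \max_{c \neq c_A} g^{(n)}(x)_c$. Applying the result of \cite{cohen2019certified,salman2019provably} to $g^{(n)}$ gives that $\arg\max_c g^{(n)}(x+\delta)_c = c_A$ for all $\|\delta\|_2 \leq \frac{\sigma}{2}[\Phi^{-1}(p_A) - \Phi^{-1}(p_B)]$, and that this radius is tight. Since by Definition~\ref{def:pwcr} the certified radius $cr(x_n)$ is measured against the model's own top prediction being preserved, the top class $c_A$ plays the role of the ``true label'' $y_n$ in the single-pixel certification (the radius certifies stability of the argmax, independent of whether it matches the groundtruth). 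To recover the clean form in Equation~\ref{equ:pwcr}, I would then apply the simplification used at the end of Section~\ref{bg_cr}: setting $\overline{p_B} = 1 - \underline{p_A}$, whence $\frac{\sigma}{2}[\Phi^{-1}(p_A) - \Phi^{-1}(1-p_A)] = \sigma\Phi^{-1}(p_A)$ by the identity $\Phi^{-1}(1-t) = -\Phi^{-1}(t)$. Substituting $p_A = \max_c G_\theta(x)_{n,c}$ yields exactly $cr(x_n) = \sigma\Phi^{-1}(\max_c G_\theta(x)_{n,c})$.

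The main thing to be careful about, rather than a deep obstacle, is justifying that the per-pixel smoothing is legitimate, i.e.\ that taking the $n$-th coordinate of the expectation commutes with the smoothing operation. This follows immediately from linearity of expectation applied coordinatewise: $G_\theta(x)_{n,c} = \mathbb{E}_\beta[F_\theta(x+\beta)_{n,c}]$, so $g^{(n)}$ is genuinely the Gaussian smoothing of $f^{(n)}$ and inherits all the hypotheses of the Cohen et al.\ theorem (in particular $g^{(n)}$ maps into a probability distribution over $\mathcal{Y}$ since each $f^{(n)}(x+\beta)$ does and the simplex is convex). I would also note explicitly that the certification is the \emph{pointwise} two-class version using $p_A$ and $p_B = 1 - p_A$, so no union bound over pixels is needed: each pixel's guarantee holds under the same perturbation $\delta$ because the radius in Equation~\ref{equ:pwcr} is derived independently per pixel and stated per pixel in Equation~\ref{equ:pwcr}. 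Hence no further subtlety arises, and the theorem reduces cleanly to the established classification-model certification applied coordinatewise.
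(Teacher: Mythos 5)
Your proposal is correct and follows essentially the same route as the paper: the paper's proof likewise treats each pixel's smoothed soft output $G_\theta(\cdot)_{n,\cdot}$ as a Gaussian-smoothed per-pixel classifier, derives the two-sided radius $\frac{\sigma}{2}[\Phi^{-1}(p_A)-\Phi^{-1}(p_B)]$ (by explicitly invoking the $1/\sigma$-Lipschitzness of $x \mapsto \Phi^{-1}(G_\theta(x)_{n,c})$ from Salman et al., i.e., the same machinery you cite as a black box), and then applies the identical simplification $p_B = 1-p_A$ to obtain $\sigma\Phi^{-1}(\max_c G_\theta(x)_{n,c})$. The only difference is presentational: you cite the classification-model certification verbatim where the paper inlines its Lipschitz-based derivation, and your explicit remarks on coordinatewise commutation and the role of the top class versus the groundtruth label are sound.
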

\begin{proof}
See Appendix~\ref{supp:pwcr}.
\vspace{-1mm}
\end{proof}

\noindent {\bf Remark 1:} Theorem 1 generalizes randomized smoothing
to segmentation models.
In practice, however, obtaining the exact value of $ \max_c G_\theta(x)_{n,c}$ is 
{computationally} challenging due to the random noise $\beta$. Here, we use the Monte Carlo sampling algorithm as ~\cite{cohen2019certified,salman2019provably} to estimate its lower bound. Specifically, we first sample a set of, say $M$, noises $\{\beta_1, \beta_2, \cdots, \beta_M \}$ from the {Gaussian} 
distribution $\mathcal{N}(0,\sigma^2 I)$ and then use the empirical mean $\hat{G}_\theta(x) = \frac{1}{M}\sum_{i=1}^M F_\theta(x + \beta_j)$ to estimate the lower bound as $ \max_c \hat{G}_\theta(x)_{n,c}$.  
 In contrast to the tight certified radius obtained in Equation~\ref{cr_raw} for image classification models, the pixel-wise certified radius  in Equation~\ref{equ:pwcr} cannot guarantee to be tight for image segmentation models. {Note that our goal is not to design a  better defense that requires larger certified radii. 
 Instead, we leverage the order of pixels' certified radii
 to identify pixels' relative robustness/vulnerability against adversarial perturbations, whose information is used to design more effective attacks.
 }

\vspace{+0.5mm}
\noindent {{\bf Remark 2:} The pixel-wise certified radius derived for $l_2$ perturbations in Equation~\ref{equ:pwcr} suffices to be used for other common norm-based, e.g., $l_1$ and $l_\infty$, perturbations. This is because a pixel with a larger $l_2$ certified radius also has a larger $l_1$ and $l_\infty$ certified radius, thus more robust against $l_1$ and $1_\infty$ perturbations\footnote{{For any $N$-dimensional vector $x$, its $l_2$, $l_1$, and $l_\infty$ norms have the relation $||x||_1 \leq \sqrt{N} ||x||_2$ and $||x||_\infty  \leq ||x||_2$. Thus, obtaining an $l_2$ certified radius implies an upper bounded $l_1$ or $l_\infty$ certified radius.}}}.

\vspace{+0.5mm}
\noindent {{\bf Remark 3:} The rationale of using randomized smoothing for certification is that, when an image A is intrinsically more robust than an image B on the base model against adversarial perturbation, then adding a small noise to these two images, the noisy A is still more robust than the noisy B on the smoothed model against adversarial perturbation. 
Moreover, the smoothed model has close certified radius under a small noise (i.e., a small $\sigma$) as the base model (on which certified radius is challenging to compute). Hence, a pixel's certified radius on the smoothed model indicates its robustness on the base model as well.
}

\subsubsection{Designing a certified radius-guided attack loss} 
After obtaining the certified radii of all pixels, a naive solution is that the attacker sorts pixels' certified radii in an ascending order, and then perturbs the pixels one-by-one from the beginning until reaching the perturbation budget. However, this solution is both computationally intensive---as it needs to solve an optimization problem for each pixel; and suboptimal---as all pixels collectively make predictions for each pixel and perturbing a single pixel could affect the predictions of all the other pixels.   
 
 Here, we 
 design a certified radius-guided attack loss that assists to \textit{automatically} find the ``ideal" pixels to be perturbed.
 We observe that the attack loss in Equation~\ref{attack_loss} 
 is defined per pixel. 
 Then, we propose to modify the attack loss in Equation~\ref{attack_loss} by associating each pixel with a weight and multiplying the weight with the corresponding pixel loss, where the pixel weight is correlated with 
 the pixel's certified radius. Formally, our certified radius-guided attack loss is defined as follows: 
\begin{equation}
\small
\label{eq:cr_loss}
    L_{cr}(F_\theta(x),y) = \frac{1}{N} \sum_{{n}=1}^N w(x_n) \cdot L(F_\theta(x)_n, y_n),
\end{equation}
where $w(x_n)$ is the weight of the pixel $x_n$. Note that when setting all pixels with a same weight, our certified radius-guided loss reduces to the conventional loss.  

Next, we show the \textit{inverse} relationship between the
pixel-wise certified radius and pixel weight; and define an example form of the pixel weight used in our paper.

\noindent {\bf Observation 2: A pixel with a larger (smaller) certified radius should be
assigned a smaller (larger) weight in the certified radius-guided loss.} As shown in {\bf Observation 1}, we should perturb more pixels with smaller certified radii, as they 
are more vulnerable. 
That is, we should put more weights on pixels with smaller certified radii to enlarge these pixels' losses---making these pixels easier to be misclassified 
with perturbations. By doing so, the image segmentation model will wrongly predict more pixels with a given perturbation budget. 
In contrast, we should put smaller weights on pixels with larger certified radii, in order to save the usage of the 
budget. 

There are many different ways to assign the pixel weight such that $w(x_n) \sim \frac{1}{cr(w_n)}$ based on {\bf Observation 2}. 
In this paper, we propose to use the following form: 
\begin{equation}
\label{eqn:weight}
w(x_n) = \frac{1}{1+\exp(a \cdot cr(x_n) + b)},
\end{equation}
where $a$ and $b$ are two scalar hyperparameters\footnote{We leave designing other forms of pixel weights as future work.}. 
Figure~\ref{fig:observation}(k) illustrates the relationship between the pixel-wise certified radius and pixel weight defined in Equation~\ref{eqn:weight}, where 
$a =2$ and $b =-4$. We can observe that the pixel weight is \textit{exponentially} decreased as the certified radius increases. Such a property can ensure that most of the pixels with smaller radii will be perturbed (See Figure~\ref{fig:distpwcr}) when performing the attack.    

We emphasize that our weight design in Equation~\ref{eqn:weight} is not optimal; actually due to the non-linearity of neural network models, it is challenging to derive optimal weights. 
Moreover, as pointed out \cite{madry2018towards}, introducing randomness can make  the attack more effective. Our smoothed segmentation model based weight design introduces randomness into our attack loss, which makes our attack harder to defend against, as shown in our results in Section~\ref{sec:eval_defense}. 
 
\vspace{-2mm}
\subsubsection{Certified radius-guided white-box attacks to generate adversarial perturbations} 
We use our designed certified radius-guided attack loss to generate adversarial perturbations to image segmentation models.
Note that we can choose any existing white-box attack 
as the base attack.
In particular, given the attack loss from any existing white-box attack, we only need to modify our attack loss by multiplying the pixel weights with the corresponding attack loss.
For instance, when we use the PGD attack~\cite{madry2018towards} as the base attack method, we have our certified radius-guided PGD (CR-PGD) attack that iteratively generates adversarial perturbations as follows:  
\begin{equation}
\label{eqn:crpgd}
\delta = \textrm{Proj}_{\Delta} ( \delta + \alpha \cdot \nabla_\delta L_{cr}(F_\theta(x+\delta),y)),
\end{equation}
where $\alpha$ is the learning rate in PGD, $\Delta = \{ \delta: ||\delta||_p \le \epsilon \}$ is the allowable perturbation set, and $\textrm{Proj}_\Delta$ projects the adversarial perturbation to the allowable set $\Delta$. The final adversarial perturbation is used to perform the attack.

{Figure~\ref{fig:observation} illustrates our certified radius-guided  attack framework to image segmentation models, where we use the $l_2$ PGD attack as the base attack method.} Algorithm~\ref{alg:cr_PGD} in Appendix details 
our CR-PGD attack. 
Comparing with PGD, the computational overhead of our CR-PGD is calculating the pixel-wise certified radius with a small set of $M$ sampled noises (every \textrm{INT} iterations and \textrm{INT} is a predefined parameter in Algorithm 1 in Appendix), {which only involves making predictions on $M$
noisy samples and is very efficient}. Note that the predictions are independent and can be also parallelized. 
Thus, PGD and CR-PGD have the comparable computational complexity.   
We also show the detailed time comparison results in Section~\ref{sec:eval_wb}.

\section{Certified Radius Guided Black-Box Attacks to Image Segmentation  via Bandit}
\label{sec:cr_black}

\subsection{Motivation of Using Bandit}
In black-box attacks, an attacker can only query the target 
image segmentation 
model to obtain pixels' confidence scores. 
A key challenge in this setting 
is that no gradient information is available, and thus an attacker cannot conduct the  (projected) gradient descent like white-box attacks \cite{goodfellow2014explaining,madry2018towards,carlini2017towards} to determine the perturbation directions. 
A common way to address this is converting black-box attacks 
with partial feedback (i.e., confidence scores) to be the 
gradient estimation problem, solving which the standard (projected) gradient descent based attack can then be applied\footnote{{Surrogate models (e.g., \cite{papernot2017practical,liudelving}) are another common way to transfer a black-box attack into a white-box setting, but their performances are worse than gradient estimation methods~\cite{guo2019simple,ilyas2018prior}}.}. 
Existing gradient estimate methods~\cite{larson2019derivative} can be classified as deterministic methods (e.g., zero-order optimization,  ZOO~\cite{spall2005introduction,chen2017zoo}) and stochastic methods (e.g., NES~\cite{ilyas2018black}, SimBA~\cite{guo2019simple}, and bandit \cite{ilyas2018prior,wang2022bandits}). To perform real-world black-box attacks, query efficiency and gradient estimation accuracy (in terms of unbiasedness and stability) are two critical factors.
ZOO (See Equation~\ref{eqn:zoo}) 
is accurate but query inefficient, while 
NES and SimBA are neither accurate nor query efficient. 
In contrast, bandit can achieve the best tradeoff~
\cite{hazan2019introduction,flaxman2004online} when the gradient 
estimator is appropriately designed.

\subsection{Overview}
Inspired by the good properties of bandits, 
we formulate the black-box attacks to image segmentation models as a bandit optimization problem. 
However, designing an effective bandit algorithm 
for solving practicable black-box attacks 
faces several technical challenges: 1) It should be query efficient;
2) it should estimate the gradient accurately; and 3) the most challenging, it should guarantee the attack performance to approach the optimal as the query number increases. We aim to address all these challenges. Specifically, we first design a novel gradient estimator with bandit feedback and show it only needs 2 queries per round.
We also prove it is unbiased and stable. 
Based on our estimator, we then propose projected bandit gradient descent (PBGD) to construct the perturbation. We observe that the pixel-wise certified radius can be also derived in the considered black-box setting and be seamlessly incorporated into our PBGD based attack algorithm. Based on this observation, we further design a certified radius-guided PBGD (CR-PBGD) attack to enhance the black-box attack performance. Finally, we prove that our bandit-based attack algorithm achieves a \textit{tight sublinear} regret, meaning our attack is asymptotically optimal with an optimal rate. \textit{The theoretical contributions of our regret bound can be also seen in Appendix~\ref{sec:regret}.}

\subsection{Attack Design}
\label{sec:bbattack}

\subsubsection {Formulating black-box attacks to image segmentation models as a bandit optimization problem}
In the context of leveraging bandits to design our black-box attack, 
we need to define the attacker's \textit{action}, 
\textit{(bandit) feeback}, and \textit{goal}.
Suppose there are $T$ rounds, which means an attacker will attack the target image segmentation model up to $T$ rounds. In each round $t\in\{1,2,\cdots,T\}$: 
\begin{itemize}[leftmargin=*]
    
    \item {\bf Action:} The attacker determines a \underline{perturbation} $\delta^{(t)}\in \Delta$ via a designed {attack algorithm} $\mathcal{A}$.  
    \item {\bf Feedback:} By querying the target model with the perturbed image $x+\delta^{(t)}$, attacker observes the corresponding \underline{attack loss}\footnote{For notation simplicity, we will use $L(\delta)$ to indicate the attack loss $L(F_\theta(x + \delta),y).$} $L(\delta^{(t)})$ at the selected perturbation $\delta^{(t)}$. 
    
    \item {\bf Goal:} As the attacker only has the bandit feedback $L(\delta^{(t)})$ at the selected perturbation $\delta^{(t)}$, the attack algorithm $\mathcal{A}$ will incur a {regret}, which is defined as the expected difference between the attack loss at 
    $\delta^{(t)}$ brought by 
    $\mathcal{A}$ and the maximum attack loss 
    in hindsight. Let $R_{\mathcal{A}}(T)$ be the cumulative regret after $T$ rounds, then the regret is calculated as 

\begin{equation}\label{eq:regret}
\small
R_{\mathcal{A}}(T) =  \sum_{t=1}^T \mathbb{E}[L(\delta^{(t)})] - \max_{\delta \in \Delta} \sum_{t=1}^T L(\delta),
\end{equation}
where the expectation is taken over the randomness from 
$\mathcal{A}$. The attacker's goal to \underline{minimize the regret}.  

\end{itemize}

\noindent Now our problem becomes:
\textit{how does an attacker design an attack algorithm that utilizes the bandit feedback via querying the target model, and determine the adversarial perturbation to achieve a sublinear regret?}  
There are several problems to be solved: (i) How to
accurately estimate the gradient (i.e., unbiased and stable) in order to determine the perturbation? 
(ii) How to make black-box attacks query-efficient?
(iii) How can we achieve a sublinear regret bound? 
We propose novel gradient estimation methods to solve them.

\subsubsection{Two-point gradient estimation with bandit feedback}
 We first introduce two existing gradient estimators, i.e., the deterministic method-based ZOO~\cite{spall2005introduction,chen2017zoo} and stochastic bandit-based one-point gradient estimator (OPGE)~\cite{granichin1989stochastic,spall1997one,flaxman2004online}, and show their limitations. We do not show the details of NES~\cite{ilyas2018black} and SimBA~\cite{guo2019simple} because they are 
neither accurate nor query efficient. 
Then, we propose our two-point gradient estimator. 

\vspace{+0.5mm}
\noindent {\bf ZOO.} It uses the finite difference method~\cite{lax2014calculus} and determinately estimates the gradient  vector element-by-element. Specifically, given a perturbation $\delta$, ZOO estimates the gradient of the $i$-th element, i.e., $\nabla L(\delta)_i$, as 
\begin{equation}
\label{eqn:zoo} 
\hat{g}^{ZOO}_i = \nabla L(\delta)_i \approx \frac{L(\delta + \gamma e_i) - L(\delta - \gamma e_i)}{2 \gamma},     
\end{equation}
where $\gamma$ is a small positive number and $e_i$ is a standard basis vector with the $i$-th element be 1 and 0 otherwise. 

ZOO is an unbiased gradient estimator. 
However, it faces two challenges: (i) 
It requires a sufficiently large number of queries to perform the 
gradient  estimation, which is often impracticable due to 
a limited query budget. Specifically, ZOO  depends on two losses $L(\delta + \gamma e_i)$ and $L(\delta - \gamma e_i)$, which is realized by querying the target model twice using the two points $\delta + \gamma e_i$ and $\delta - \gamma e_i$, and estimates the gradient of a single element $i$ per round. 
To estimate the gradient vector of $N$ elements, ZOO needs $2N$ queries per round. As the number of pixels $N$ in an image is often large, the total number of queries often exceeds the attacker's query budget. 
(ii) It requires the loss function $L$ to be differentiable everywhere, while some loss functions, e.g., hinge loss, is nondifferentiable.

\vspace{+0.5mm}
\noindent {\bf One-point gradient estimator (OPGE).} It estimates the whole gradient in a random fashion. 
It first defines a smoothed loss $\hat{L}(\delta)$ of the loss $L(\delta)$ at a given perturbation $\delta$ as follows: 
\begin{equation}
\label{eqn:smoothedloss}
\hat{L}(\delta) = \mathbb{E}_{\bm{v}\in\mathcal{B}_p}[L(\delta +\gamma\bm{v})],
\end{equation}
where $\mathcal{B}_p$ is a unit $l_p$-ball, i.e., $\mathcal{B}_p = \{ u: ||u||_p \leq 1 \}$, and  $v$ is a random vector sampling from $\mathcal{B}_p$. 
Then, by observing that $\hat{L}(\delta)\approx L(\delta)$ when $\gamma$ is sufficiently small, OPGE uses the gradient of $\hat{L}(\delta)$ to approximate $L(\delta)$. 
Specifically, the estimated gradient $\hat{L}(\delta)$ has the following form~\cite{granichin1989stochastic,spall1997one,flaxman2004online}:  
\begin{equation}
\label{eq:est1}
\small
 \hat{g}^{OPGE} = \nabla\hat{L}(\delta) = \mathbb{E}_{\bm{u}\in\mathcal{S}_p}[\frac{N}{\gamma}L(\delta+\gamma\bm{u})\bm{u}],
 \end{equation}
where $\mathcal{S}_p$ is a unit $l_p$-sphere, i.e., $\mathcal{S}_p = \{u: ||u||_p = 1\}$. To calculate the expectation in Equation~\ref{eq:est1}, OPGE simply samples a single $\hat{u}$ from $\mathcal{S}_p$ and estimates the expectation as $\frac{N}{\gamma}L(\delta+\gamma\hat{u})\hat{u}$. As OPGE only uses a point $\delta+\gamma\hat{u}$ to obtain the  feedback $L(\delta+\gamma\hat{u})$, it is called one-point gradient estimator. 

OPGE is extremely query-efficient as the whole gradient is estimated with only one query (based on one point $\delta +\gamma\hat{u}$). The gradient estimator $\hat{g}^{OPGE}$ is differentiable everywhere even when the loss function $L$ is non-differentiable. It is also an unbiased gradient estimation method, same as ZOO.  However, OPGE has two key disadvantages: (i) Only when $\gamma$ is extremely small can $\hat{L}(\delta)$ be close to $L(\delta)$. In this case, the coefficient $N/\gamma$ would be very large. Such a phenomenon will easily cause the updated gradient out of the feasible image space $[0,1]^N$. (ii) The estimated gradient norm is unbounded as it depends on $\gamma$, which will make the estimated gradient rather unstable 
when only a single $\hat{u}$ is sampled and used.

\vspace{+0.5mm}
\noindent {\bf The proposed two-point gradient estimator (TPGE).} To address the  challenges in OPGE, we propose a \textit{two-point gradient estimator (TPGD)}. 
TPGD combines the idea of ZOO and OPGE: On one hand, similar to OPGE, we build a smoothed loss function to make the gradient estimator differentiable everywhere and estimate the whole gradient at a time; On the other hand, similar to ZOO, we use two points to estimate the gradient, in order to eliminate the dependency caused by $\gamma$, thus making the estimator stable and correct.
Specifically, based on Equation~\ref{eq:est1}, we first set $u$ as its negative form $-u$, which also belongs to $\mathcal{S}_p$, and have  
$\nabla\hat{L}(\delta) = \mathbb{E}_{\bm{u}\in\mathcal{S}_p}[\frac{N}{\gamma}L(\delta - \gamma\bm{u}) (-\bm{u})]$. 
Combining it with Equation~\ref{eq:est1}, we have 
the estimated gradient as: 
\begin{equation}
\small
\label{eq:est2}
\hat{g}^{TPGE} = \nabla\hat{L}(\delta) = \underset{\bm{u}\in\mathcal{S}_p}{\mathbb{E}}[\frac{N}{2\gamma}\big(L(\delta+\gamma\bm{u})-L(\delta-\gamma\bm{u})\big)\bm{u}].
\end{equation}

\noindent The properties of $\hat{g}^{TPGE}$ are shown in Theorem~\ref{th:grab} (See Section~\ref{sec:theorems}). In summary, $\hat{g}^{TPGE}$ is an unbiased gradient estimator and has a bounded gradient norm independent of $\gamma$. An unbiased gradient estimator is a necessary condition for the estimated gradient to be close to the true gradient; and a bounded gradient norm can make the gradient estimator stable.

\begin{table}[!t]
{\caption{Comparing ZOO, NES, OPGE, and TPGE.}
\centering
\footnotesize
\addtolength{\tabcolsep}{-4pt}
\begin{tabular}{|c|c|c|c|c|c|}
\hline
{\bf Method} & {\bf ZOO} & {\bf NES} & {\bf SimBA} & {\bf OPGE}  &  {\bf TPGE}   \\  \hline \hline
{\bf Type} & Deter.  & Stoc. & Stoc. &   Stoc.  &  Stoc. \\ \hline
{\bf \#Queries per round} & 2\#pixels  & $>=100$ & \#pixels  &   1  &  2 \\   \hline
{\bf Stable} & Yes & No & No & No & Yes  \\  \hline
{\bf Unbiased}  & Yes & Yes &No & Yes & Yes \\ \hline 
{\bf Differentiable Loss} & Yes & No & No & No & No \\ \hline
\end{tabular}
\label{tbl:comparison}
\vspace{-2mm}
}
\end{table}

\vspace{+0.5mm}
\noindent {\bf Comparing gradient estimators.}
Table~\ref{tbl:comparison} compares 
ZOO, NES, SimBA, OPGE, and TPGE in terms of query-efficiency,  
stability, unbiasedness of the estimated gradient, and whether the gradient estimator requires a differentiable loss or not.  
 We observe that our TPGE achieves the best trade-off among these metrics.

\subsubsection{Projected bandit gradient descent attacks to generate adversarial perturbations}
According to Equation \ref{eq:est2}, we need to calculate the expectation to estimate the gradient. In practice, TPGE samples a unit vector $u$ from $\mathcal{S}_p$ and estimates the expectation as $\tilde{g}^{TPGE} = {N}/{2\gamma}\big(L(\delta+\gamma\bm{u})-L(\delta-\gamma\bm{u})\big)\bm{u}$. 
Specifically, TPGE uses two points $\delta+\gamma\bm{u}$ and $\delta-\gamma\bm{u}$ to query the target model and obtains the 
bandit feedback $L(\delta+\gamma\bm{u})$ and $L(\delta-\gamma\bm{u})$. Thus, it is called two-point gradient estimator. 
We call $\tilde{g}^{TPGE}$ as a bandit gradient estimator because it is based on bandit feedback. Then, we use this bandit gradient estimator and propose the projected bandit gradient descent (PBGD) attack to iteratively
generate adversarial perturbations against image segmentation models as follows:
\begin{align}
\label{eqn:bgd}
\delta = \text{Proj}_{\Delta}(\delta + \alpha \cdot \tilde{g}^{TPGE}).
\end{align}
where $\alpha$ is the learning rate in the PBGD.
 $\textrm{Proj}_\Delta$ projects the adversarial perturbation to the allowable set $\Delta$. The final adversarial perturbation is used to perform the attack.

\subsubsection{Certified radius-guided projected bandit gradient descent attacks} 
We further propose to enhance the black-box attacks by leveraging the pixel-wise certified radius information. 
Observing from Equation~\ref{equ:pwcr}, we notice that calculating the pixel-wise certified radius only needs to know the outputs of the smoothed image segmentation model $G_\theta$, which can be realized by first sampling a set of noises offline and adding them to the testing image, and then querying the target model $F_\theta$ with these noisy images 
to build $G_\theta$.   
Therefore, \textit{we can seamlessly incorporate the pixel-wise certified radius into the projected bandit gradient descent black-box attack.} 
Specifically, we only need to replace the attack loss $L$ with the certified radius-guided attack loss $L_{cr}$ defined in Equation~\ref{eq:cr_loss}.  Then, we have the certified radius-guided two point gradient estimator, i.e., $\hat{g}_{cr}^{TPGE}$, as follows:
\begin{equation}\label{eq:est3}
\small
\hat{g}_{cr}^{TPGE} = \nabla\hat{L}_{cr}(\delta) =
\underset{\bm{u}\in\mathcal{S}_p}{\mathbb{E}} [\frac{N}{2\gamma}\big(L_{cr}(\delta+\gamma\bm{u})-L_{cr}(\delta-\gamma\bm{u})\big)\bm{u}]. 
\end{equation}
Similar to TPGE, we sample a $u$ from $\mathcal{S}_p$ and estimate the expectation as $\tilde{g}_{cr}^{TPGE} = \frac{N}{2\gamma}\big(L_{cr}(\delta+\gamma\bm{u})-L_{cr}(\delta-\gamma\bm{u})\big)\bm{u}$. 
Then, we iteratively
generate adversarial perturbations via the certified radius-guided PGBD 
(CR-PGBD) as follows: 
\begin{align}
\label{eqn:cr_bgd}
\delta = \text{Proj}_{\Delta}(\delta + \alpha \cdot \tilde{g}_{cr}^{TPGE}).
\end{align}

Algorithm \ref{alg:BGD_attack} in Appendix details 
our PBGD and {CR-PBGD} black-box attacks to image segmentation models. 
By attacking the target model up to $T$ rounds, the total number of queries of PBGD is $2T$, as in each round we only need to get 2 loss feedback. 
Note that in CR-PBGD, we need to sample $M$ noises and query the  model $M$ times to calculate the certified radius of pixels and get 2 loss feedback. In order to save queries, we only calculate the pixels'  certified radii every $\textrm{INT}$ iterations. Thus, the total number of queries 
of CR-PBGD is  
$(1-\frac{1}{\textrm{INT}}) \cdot 2T + \frac{1}{\textrm{INT}} \cdot (2+M)T = 2T + \frac{MT}{\textrm{INT}}$.

\subsection{Theoretical Results}
\label{sec:theorems}
In this subsection, we theoretically analyze our PBGD and CR-PGBD black-box attack algorithms. 
We first characterize the properties of our proposed gradient estimators and then show the regret bound of the two algorithms. 

Our analysis assumes the loss function to be Lipschitz continuous, 
 which has been validated in recent works~\cite{jordan2020exactly,leino2021globally} that loss functions in deep neural networks are often Lipschitz continuous. 
 Similar to existing regret bound analysis for bandit methods~\cite{zinkevich2003online}, we assume the loss function is {convex} (Please refer to Appendix~\ref{supp:grab} the definitions).
Note that optimizing non-convex functions directly is challenging due to its NP-hardness in general~\cite{jain2017non}. Also, there exist no 
tools to derive the optimal solution when optimizing non-convex functions, and thus existing works relax to convex settings. In addition, as pointed out in~\cite{jain2017non}, convex analysis often plays an important role in non-convex optimization. 
We first characterize the properties of our gradient estimators in the following theorem: 

\begin{restatable}[]{theorem}{grab}
\label{th:grab}
$\hat{g}^{TPGE}$ (or $\hat{g}_{cr}^{TPGE}$) is an unbiased gradient estimator of $\nabla\hat{L}$ (or $\nabla\hat{L}_{cr}$). Assume the loss function $L$ (or the CR-guided loss function $L_{cr}(\cdot)$) is $\hat{C}$ (or $\hat{C}_{cr}$)-Lipschitz continuous with respect to $l_p$-norm, then $\hat{g}^{TPGE}$ (or $\hat{g}_{cr}^{TPGE}$) has a bounded $l_p$-norm, i.e., $||\hat{g}||_p \le N\hat{C}$ (or $||\hat{g}_{cr}||_p \le N\hat{C}_{cr}$).
\end{restatable}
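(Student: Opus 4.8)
The plan is to prove the two assertions separately and to note at the outset that the certified-radius-guided estimator $\hat{g}_{cr}^{TPGE}$ is handled by exactly the same argument, obtained by substituting $L_{cr}$ for $L$ and $\hat{C}_{cr}$ for $\hat{C}$ throughout; so I would write the proof only for $\hat{g}^{TPGE}$ and state that the CR-guided case is identical.

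For unbiasedness, I would start from the one-point identity in Equation~\ref{eq:est1}, namely $\nabla\hat{L}(\delta)=\mathbb{E}_{u\in\mathcal{S}_p}[\frac{N}{\gamma}L(\delta+\gamma u)u]$, which I can either cite directly from \cite{flaxman2004online} or re-derive: writing $\hat{L}(\delta)=\frac{1}{\mathrm{Vol}(\gamma\mathcal{B}_p)}\int_{\gamma\mathcal{B}_p}L(\delta+w)\,dw$ and differentiating in $\delta$, the divergence (Stokes') theorem converts the volume integral of $\partial_{w_i}L$ into a surface integral over $\gamma\mathcal{S}_p$ with outward normal $w/\gamma$, and rescaling $w=\gamma u$ produces precisely the factor $N/\gamma$ together with the spherical average. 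I would then exploit the symmetry of the uniform distribution on $\mathcal{S}_p$ under $u\mapsto -u$: substituting $-u$ for $u$ in Equation~\ref{eq:est1} gives the equivalent representation $\nabla\hat{L}(\delta)=\mathbb{E}_{u}[\frac{N}{\gamma}L(\delta-\gamma u)(-u)]$, and averaging the two representations collapses to $\mathbb{E}_u[\frac{N}{2\gamma}(L(\delta+\gamma u)-L(\delta-\gamma u))u]=\hat{g}^{TPGE}$. Hence the single-sample estimator $\tilde{g}^{TPGE}$ has expectation equal to $\nabla\hat{L}$, i.e.\ it is unbiased.

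For the norm bound, I would bound the single-sample estimator $\tilde{g}^{TPGE}=\frac{N}{2\gamma}(L(\delta+\gamma u)-L(\delta-\gamma u))u$ pointwise in $u$. Since $u\in\mathcal{S}_p$ satisfies $\|u\|_p=1$, the two query points differ by $(\delta+\gamma u)-(\delta-\gamma u)=2\gamma u$, so the $\hat{C}$-Lipschitz continuity of $L$ in the $l_p$-norm gives $|L(\delta+\gamma u)-L(\delta-\gamma u)|\le \hat{C}\,\|2\gamma u\|_p=2\gamma\hat{C}$. Multiplying by $\frac{N}{2\gamma}\|u\|_p=\frac{N}{2\gamma}$ cancels the $\gamma$ and yields $\|\tilde{g}^{TPGE}\|_p\le N\hat{C}$, a bound independent of $\gamma$; Jensen's inequality (convexity of $\|\cdot\|_p$) then transfers the same bound to the expectation, $\|\hat{g}^{TPGE}\|_p=\|\mathbb{E}_u[\tilde{g}^{TPGE}]\|_p\le\mathbb{E}_u[\|\tilde{g}^{TPGE}\|_p]\le N\hat{C}$.

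The main obstacle is the one-point gradient identity underlying unbiasedness: the divergence-theorem manipulation that turns $\nabla_\delta\hat{L}$ into a spherical average and produces the coefficient $N/\gamma$ is the only genuinely nontrivial step, and some care is needed that it is applied to the $l_p$-ball and sphere $\mathcal{B}_p,\mathcal{S}_p$ used here rather than only to the Euclidean case. Once Equation~\ref{eq:est1} is in hand, both the symmetrization step and the Lipschitz norm estimate are short and routine.
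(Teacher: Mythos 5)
Your proposal is correct and follows essentially the same route as the paper's proof: unbiasedness via the $u\mapsto -u$ symmetrization of the one-point identity in Equation~\ref{eq:est1}, and the norm bound via the pointwise Lipschitz estimate $|L(\delta+\gamma u)-L(\delta-\gamma u)|\le \hat{C}\|2\gamma u\|_p$ with $\|u\|_p=1$ cancelling the $\gamma$. Your additions (re-deriving Equation~\ref{eq:est1} by the divergence theorem, which the paper simply cites, and invoking Jensen's inequality to pass the bound from the single-sample estimator to its expectation) only make the argument slightly more careful than the paper's own write-up.
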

\begin{proof}
See Appendix~\ref{supp:grab}.
\end{proof}

Next, we analyze the regret bound achieved by our PBGD and CR-PBGD black-box  attacks. 

\begin{restatable}[]{theorem}{bgd}
\label{th:bgd}
Assuming $L$ (or $L_{cr}$) is $\hat{C}$ (or $\hat{C}_{cr}$)-Lipschitz continuous and both are convex. 
Suppose we use the PBGD attack to attack the image segmentation model $F_\theta$ up to $T$ rounds by setting a learning rate $\alpha = \frac{\sqrt{N}}{2\hat{C}\sqrt{T}}$ and $\gamma=\frac{N^{3/2}}{6\sqrt{T}}$ in Equation~\ref{eq:est2}. Then, the attack incurs a sublinear regret $R_\mathcal{A}^{PBGD}(T)$ bounded by $\mathcal{O}(\sqrt{T})$, i.e.,
\begin{equation}
\footnotesize
R_\mathcal{A}^{PBGD}(T) = \sum_{t=1}^T\mathbb{E}\{L(\delta^{(t)})\} - TL(\delta_*) \le N^{{\frac{3}{2}}}\hat{C}\sqrt{T}.
\end{equation}
Similarly, if we use the CR-PBGD attack with a learning rate $\alpha = \frac
{\sqrt{N}}{2\hat{C}_{cr}\sqrt{T}}$ and $\gamma=\frac{N^{3/2}}{6\sqrt{T}}$,  the attack incurs a sublinear regret $R_\mathcal{A}^{CR-PBGD}(T)$ bounded by $\mathcal{O}(\sqrt{T})$, 
\begin{small}
\begin{equation*}
\footnotesize
R_\mathcal{A}^{CR-PBGD}(T) = \sum_{t=1}^T\mathbb{E}\{L_{cr}(\delta^{(t)})\} - TL_{cr}(\delta_*) \le N^{\frac{3}{2}}\hat{C}_{cr}\sqrt{T}.
\end{equation*}
\end{small}%
\vspace{-4mm}
\end{restatable}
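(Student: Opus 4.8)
The plan is to treat the attacker's iterates $\delta^{(t)}$ as the decisions of an online (projected) gradient-ascent player whose gradients are supplied by the unbiased estimator $\tilde{g}^{TPGE}$, and to run the classical bandit convex optimization argument (in the spirit of the Flaxman et al.\ analysis already cited). The proof splits into three stages: (i) transfer the regret from the true loss $L$ to the smoothed surrogate $\hat{L}$ of Equation~\ref{eqn:smoothedloss}; (ii) bound the regret of projected gradient ascent on $\hat{L}$ by a telescoping argument; and (iii) substitute the prescribed $\alpha$ and $\gamma$ to obtain the $\mathcal{O}(\sqrt{T})$ rate. Throughout I will invoke Theorem~\ref{th:grab}, which supplies exactly the two facts the analysis needs: unbiasedness, $\mathbb{E}[\tilde{g}^{TPGE}\mid\delta^{(t)}]=\nabla\hat{L}(\delta^{(t)})$, and the norm bound $\|\tilde{g}^{TPGE}\|_p\le N\hat{C}$.

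First I would control the smoothing bias. Since $L$ is $\hat{C}$-Lipschitz and $\hat{L}(\delta)=\mathbb{E}_{\bm v\in\mathcal{B}_p}[L(\delta+\gamma\bm v)]$ averages $L$ over displacements of $l_p$-norm at most $\gamma$, we get $|\hat{L}(\delta)-L(\delta)|\le \hat{C}\gamma$ uniformly. Applying this at both the comparator $\delta_*$ and each iterate $\delta^{(t)}$ shows that $TL(\delta_*)-\sum_t L(\delta^{(t)})$ equals the corresponding quantity for $\hat{L}$ up to an additive error of $2\hat{C}\gamma T$. I would also record that $\hat{L}$ inherits convexity from $L$ (it is an average of shifts of a convex function), so its first-order condition is available to relate inner products to function-value gaps.

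Next I would run the standard projected-gradient regret step on $\hat{L}$. Using non-expansiveness of $\textrm{Proj}_{\Delta}$ and expanding $\|\delta^{(t+1)}-\delta_*\|_2^2$ after the update $\delta^{(t+1)}=\textrm{Proj}_{\Delta}(\delta^{(t)}+\alpha\tilde{g}^{TPGE})$, I rearrange to bound $\langle\tilde{g}^{TPGE},\delta_*-\delta^{(t)}\rangle$ by the telescoping difference $\tfrac{1}{2\alpha}(\|\delta^{(t)}-\delta_*\|^2-\|\delta^{(t+1)}-\delta_*\|^2)$ plus $\tfrac{\alpha}{2}\|\tilde{g}^{TPGE}\|^2$. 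Taking conditional expectation replaces the estimator by $\nabla\hat{L}(\delta^{(t)})$ (unbiasedness), the convexity/first-order condition relates this to the gap $\hat{L}(\delta_*)-\hat{L}(\delta^{(t)})$, summing over $t$ telescopes the squared-distance terms (bounded by the squared diameter $D^2$ of $\Delta$), and the variance term is controlled via $\|\tilde{g}^{TPGE}\|_p\le N\hat{C}$ from Theorem~\ref{th:grab}.

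Combining the two stages yields a bound of the shape $\tfrac{D^2}{2\alpha}+\tfrac{\alpha T(N\hat{C})^2}{2}+2\hat{C}\gamma T$, and the final step substitutes $\alpha=\tfrac{\sqrt{N}}{2\hat{C}\sqrt{T}}$ and $\gamma=\tfrac{N^{3/2}}{6\sqrt{T}}$, so that each term scales like $\sqrt{T}$; in particular the bias term $2\hat{C}\gamma T$ produces precisely the $N^{3/2}\hat{C}\sqrt{T}$ scaling reported. The CR-PBGD claim is then immediate: $L_{cr}$ is a positively-weighted sum of the per-pixel convex losses, hence convex and $\hat{C}_{cr}$-Lipschitz, and Theorem~\ref{th:grab} guarantees $\hat{g}_{cr}^{TPGE}$ is likewise unbiased with norm at most $N\hat{C}_{cr}$, so replacing $(\hat{C},\hat{g}^{TPGE})$ by $(\hat{C}_{cr},\hat{g}_{cr}^{TPGE})$ reproduces the identical bound. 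The delicate part I anticipate is the joint tuning of $\alpha$ and $\gamma$: the smoothing radius $\gamma$ enters the bias term (favoring small $\gamma$) but also governs feasibility of the queried points $\delta^{(t)}\pm\gamma\bm u$, which strictly requires optimizing over a slightly shrunken set rather than $\Delta$ itself; reconciling these constraints so that all error terms are simultaneously $\mathcal{O}(\sqrt{T})$ under a single parameter choice is exactly where the prescribed constants must be verified.
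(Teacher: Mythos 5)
Your proposal is correct and follows essentially the same route as the paper: decompose the regret into the smoothing bias $|\hat{L}(\delta)-L(\delta)|\le \hat{C}\gamma$ plus the projected-gradient regret on the smoothed loss $\hat{L}$ (using unbiasedness and the $N\hat{C}$ norm bound from Theorem~\ref{th:grab}), then tune $\alpha$ and $\gamma$ to balance the terms at $\mathcal{O}(\sqrt{T})$. The only cosmetic difference is that you inline the telescoping proof of the online projected gradient descent bound where the paper invokes Zinkevich's theorem as a black box; your closing remark about querying $\delta^{(t)}\pm\gamma\bm{u}$ outside $\Delta$ flags a feasibility subtlety the paper's proof silently skips.
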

\begin{proof}
See Appendix~\ref{supp:main}.
\end{proof}

\noindent {\bf Remark.} The sublinear regret bound  establishes our theoretically guaranteed attack performance, and it indicates the worst-case regret of our black-box attacks. With a sublinear regret bound $O(\sqrt{T})$, the time-average regret (i.e., $R_{\mathcal{A}}(T)/T$) of our attacks will diminish as $T$ increases (though the input dimensionality $N$ may be large), which also implies that the generated 
adversarial perturbation is asymptotically optimal.
 Moreover, the $O(\sqrt{T})$ bound is tight, meaning our attack obtains the asymptotically optimal perturbation with an optimal rate.  
More discussions about regret bounds are in Appendix~\ref{sec:regret}.

\section{Evaluation}
\label{sec:eval}

\vspace{-2mm}
\subsection{Experimental Setup}
\label{sec:exp_setup}
\vspace{-2mm}
\noindent {\bf Datasets.} 
We use three widely used segmentation datasets, i.e., Pascal VOC~\cite{pascal-voc-2012}, Cityscapes~\cite{cordts2016cityscapes}, and ADE20K~\cite{zhou2019semantic} for evaluation. More dataset details are in Appendix~\ref{app:datasets}.

\noindent {\bf Image segmentation models.}
We select three modern image segmentation models (i.e., PSPNet, PSANet~\cite{zhao2017pyramid,zhao2018psanet}\footnote{\url{https://github.com/hszhao/semseg}}, and HRNet~\cite{sun2019high}\footnote{\url{https://github.com/HRNet/HRNet-Semantic-Segmentation}}) for evaluation.
We use their public 
pretrained
models to evaluate the attacks. 
By default, we use PSPNet, HRNet, and PSANet to evaluate  Pascal VOC, Cityscapes, and ADE20K, respectively.
Table~\ref{tab:clean} shows the clean pixel accuracy and {MIoU (See the end of Section~\ref{sec:exp_setup})} 
of the three models on the three datasets.  

\noindent {\bf Compared baselines.} 
We implement our attacks in PyTorch. All models are run on a Linux server with 96 core 3.0GHz CPU, 768GB
RAM, and 8 Nvidia  A100 GPUs. 
The source code of our attacks is publicly available at\footnote{\url{https://github.com/randomizedheap/CR\_Attack}}. 

\begin{itemize}[leftmargin=*]
\item {\bf White-box attack algorithms.} 
\cite{arnab2018robustness} performed a systematic study to understand the robustness of modern 
segmentation models against adversarial perturbations. They found that 
the PGD attack~\cite{madry2018towards} performed the best among 
the compared 
attacks (We also have the same conclusion in Table~\ref{tab:comp_exist}). 
Thus, in this paper, we mainly use PGD as the base attack 
and compare it with our CR-PGD attack.
We note that our certified radius can be incorporated into all the existing 
white-box attacks and we show additional results in 
Section~\ref{sec:discussion}. 
Details of the existing attack methods are shown in Appendix~\ref{supp:methods}.

\item {\bf Black-box attack algorithms.} 
We mainly evaluate our  projected bandit gradient descent (PBGD) attack and certified radius-guided PBGD  (CR-PBGD) attack.

\end{itemize}

\begin{table}[!t]
\centering
\footnotesize
    \caption{PixAcc and MIoU of the three segmentation models on the three datasets without attack.}
    \label{tab:clean}
      \begin{tabular}{c|c|cc}
      \hline
      \multirow{1}{*}{\bf Model}
        & {\bf Dataset} & {\bf PixAcc}& {\bf MIoU} \cr
        \hline
        \hline
        \multirow{3}{*}{\bf PSPNet}
        & {\bf Pascal VOC}  & $94.4\%$ & $77.3\%$ \cr
        & {\bf Cityscapes} & $95.0\%$ & $72.7\%$ \cr
        & {\bf ADE20K} & $78.2\%$ & $38.1\%$ \cr
        \cline{1-4}
        \multirow{3}{*}{\bf PSANet}
        & {\bf Pascal VOC} & $94.3\%$ & $76.9\%$\cr
        & {\bf Cityscapes} & 94.8$\%$ & $71.4\%$\cr
        & {\bf ADE20K} & $79.4\%$ & $39.5\%$\cr
        \cline{1-4}
        \multirow{3}{*}{\bf HRNet-OCR}
        & {\bf Pascal VOC} & $94.8\%$ & $79.4\%$\cr
        & {\bf Cityscapes} & $95.2\%$ & $74.3\%$\cr
        & {\bf ADE20K} & $80.5\%$ & $40.6\%$\cr
        \cline{1-4}
        \hline
      \end{tabular}
\vspace{-4mm}
\end{table}

\begin{figure*}[!t]
\centering
\vspace{-3mm}
\subfigure[Pascal VOC]{\includegraphics[width=0.28\textwidth]{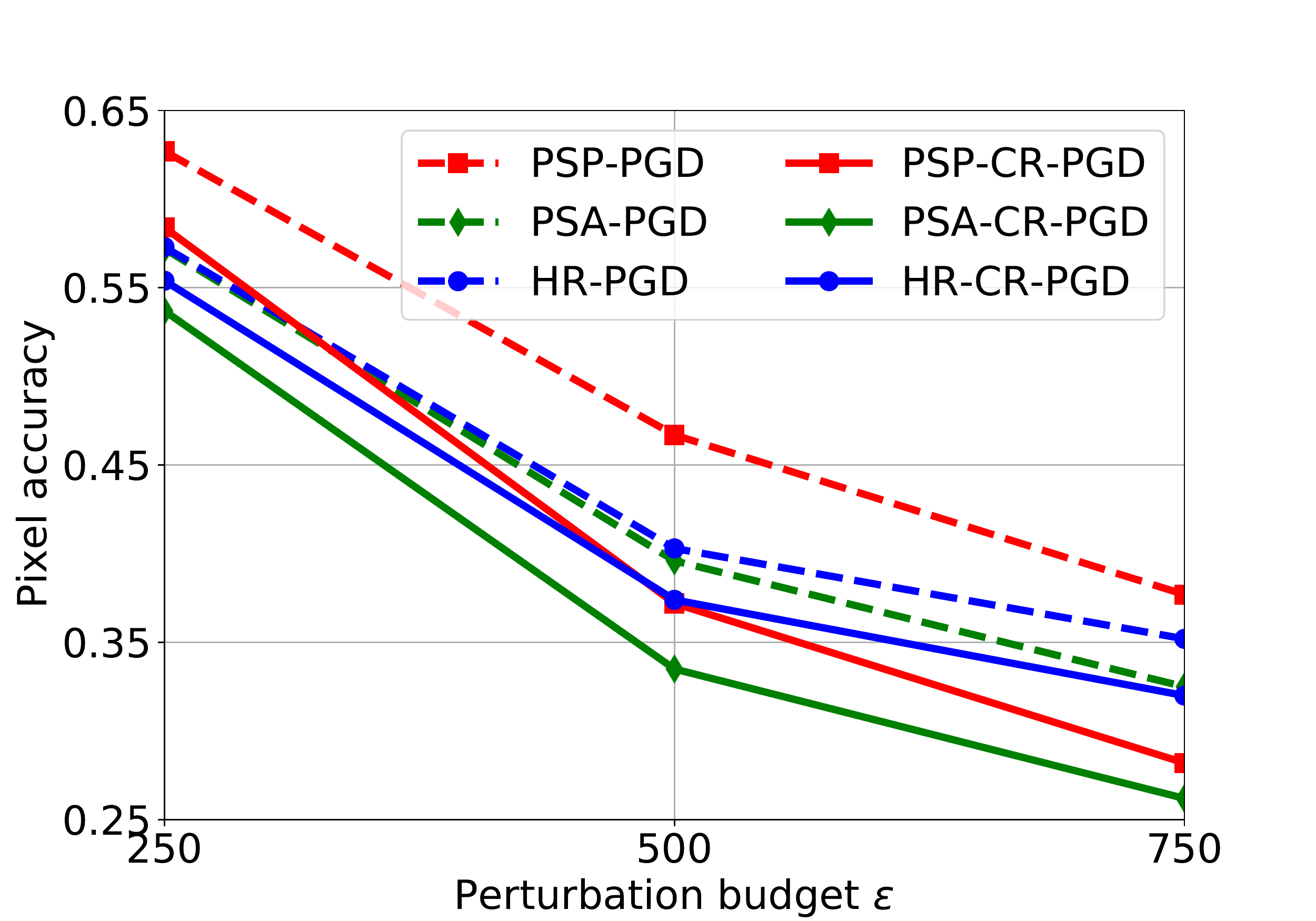}}
\subfigure[Cityscapes]{\includegraphics[width=0.28\textwidth]{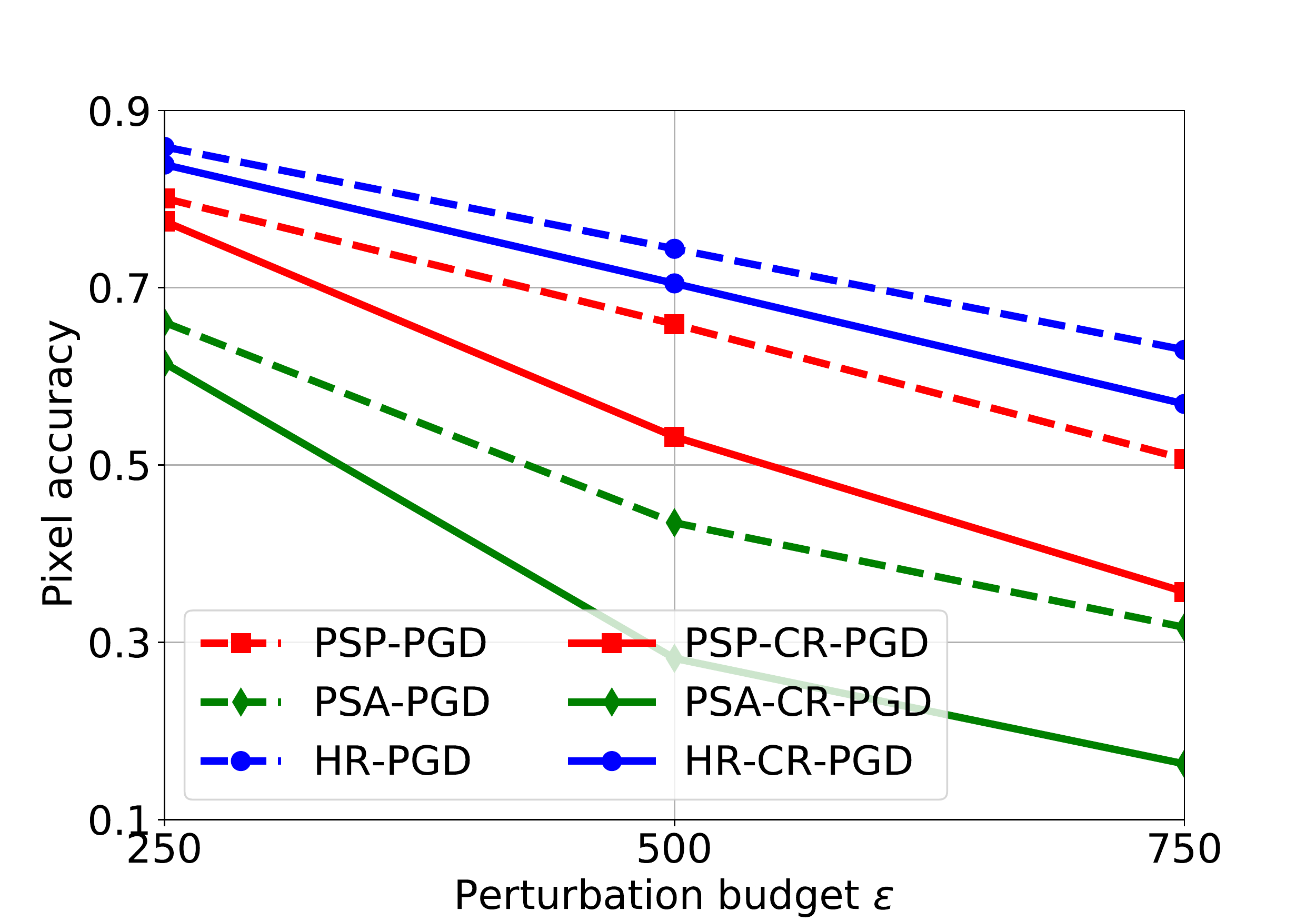}}
\subfigure[ADE20K]{\includegraphics[width=0.28\textwidth]{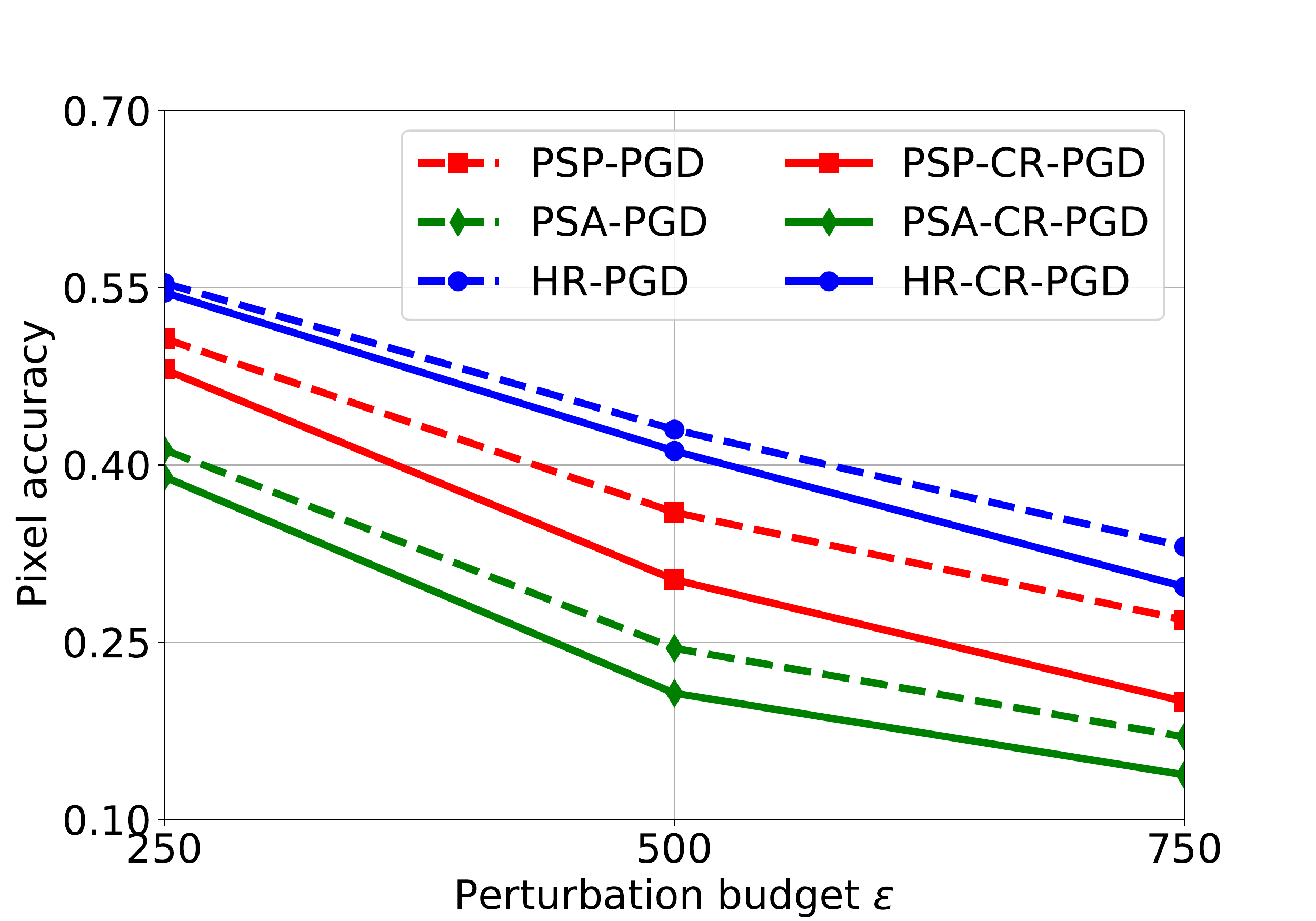}}
\vspace{-2mm}
\caption{PixAcc after PGD and CR-PGD attacks with $l_1$ perturbation vs. perturbation budget $\epsilon$ on the three models and datasets.} 
\label{fig:whitel1_acc}
\vspace{-6mm}
\end{figure*}

\begin{figure*}[!t]
\centering
\subfigure[Pascal VOC]{\includegraphics[width=0.28\textwidth]{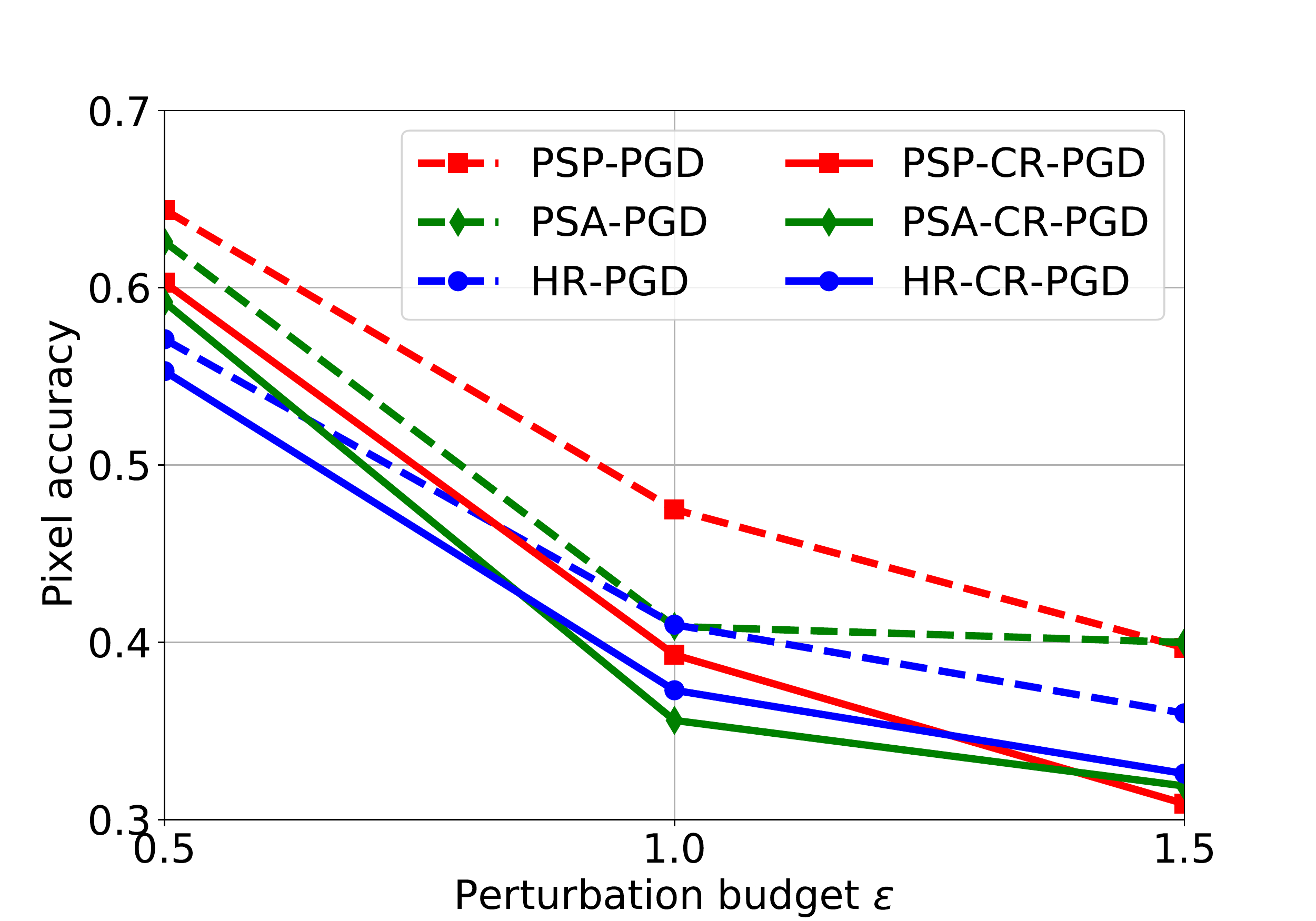}}
\subfigure[Cityscapes]{\includegraphics[width=0.28\textwidth]{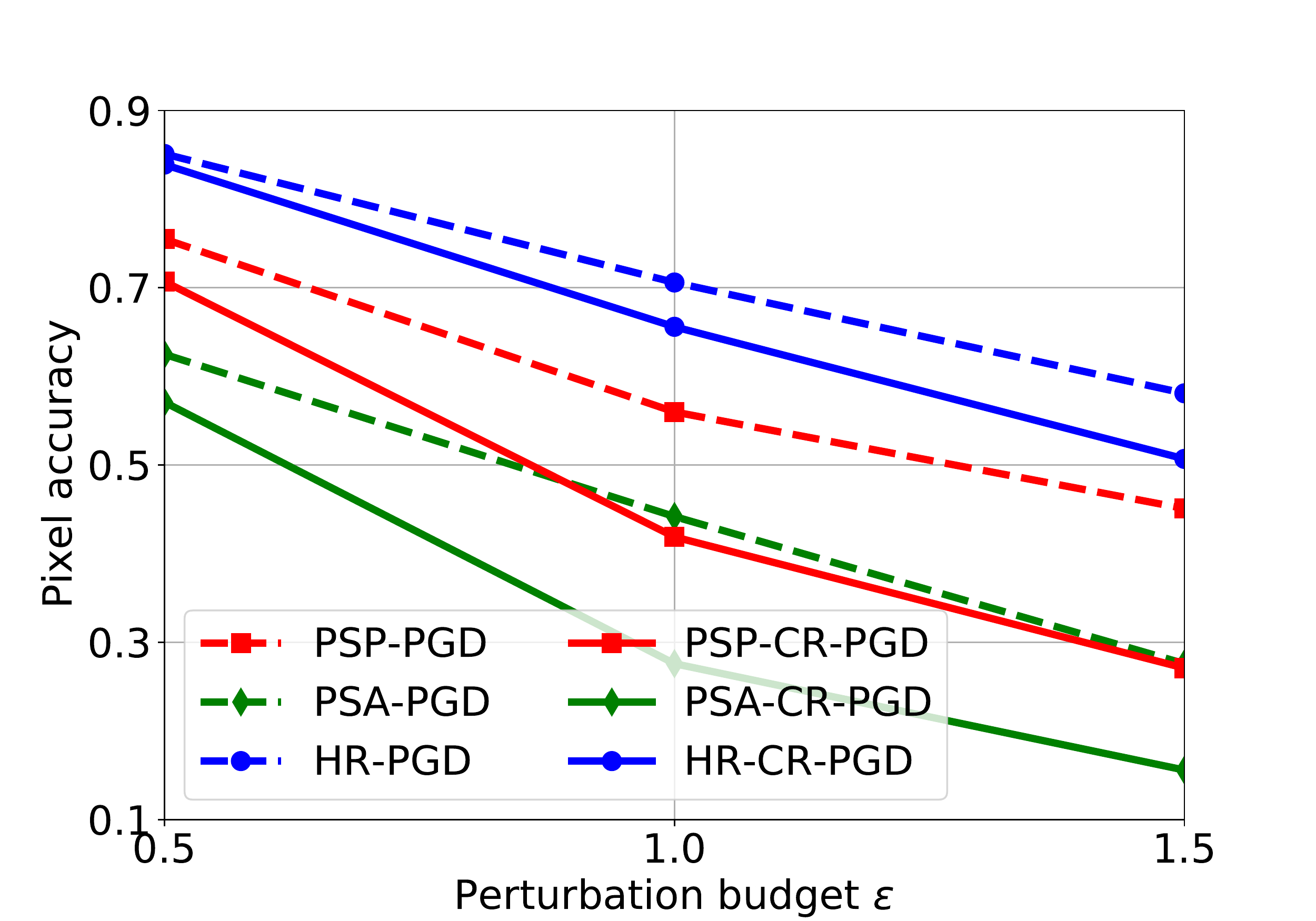}}
\subfigure[ADE20K]{\includegraphics[width=0.28\textwidth]{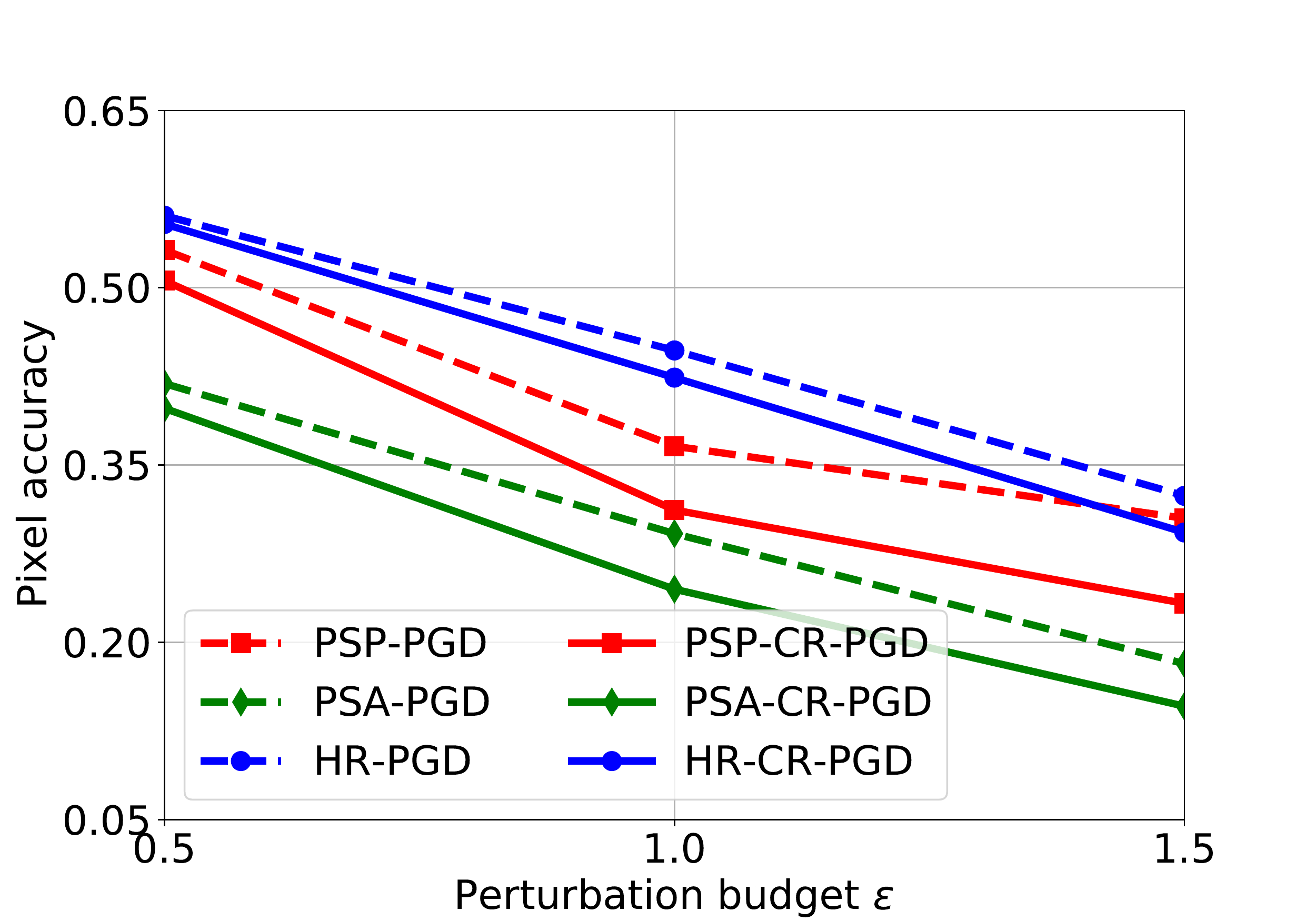}}
\vspace{-2mm}
\caption{PixAcc after PGD and CR-PGD attacks with $l_2$ perturbation vs. perturbation budget $\epsilon$ on the three models and datasets.} 
\label{fig:whitel2_acc}
\vspace{-6mm}
\end{figure*}

\begin{figure*}[!t]
\centering
\subfigure[Pascal VOC]{\includegraphics[width=0.28\textwidth]{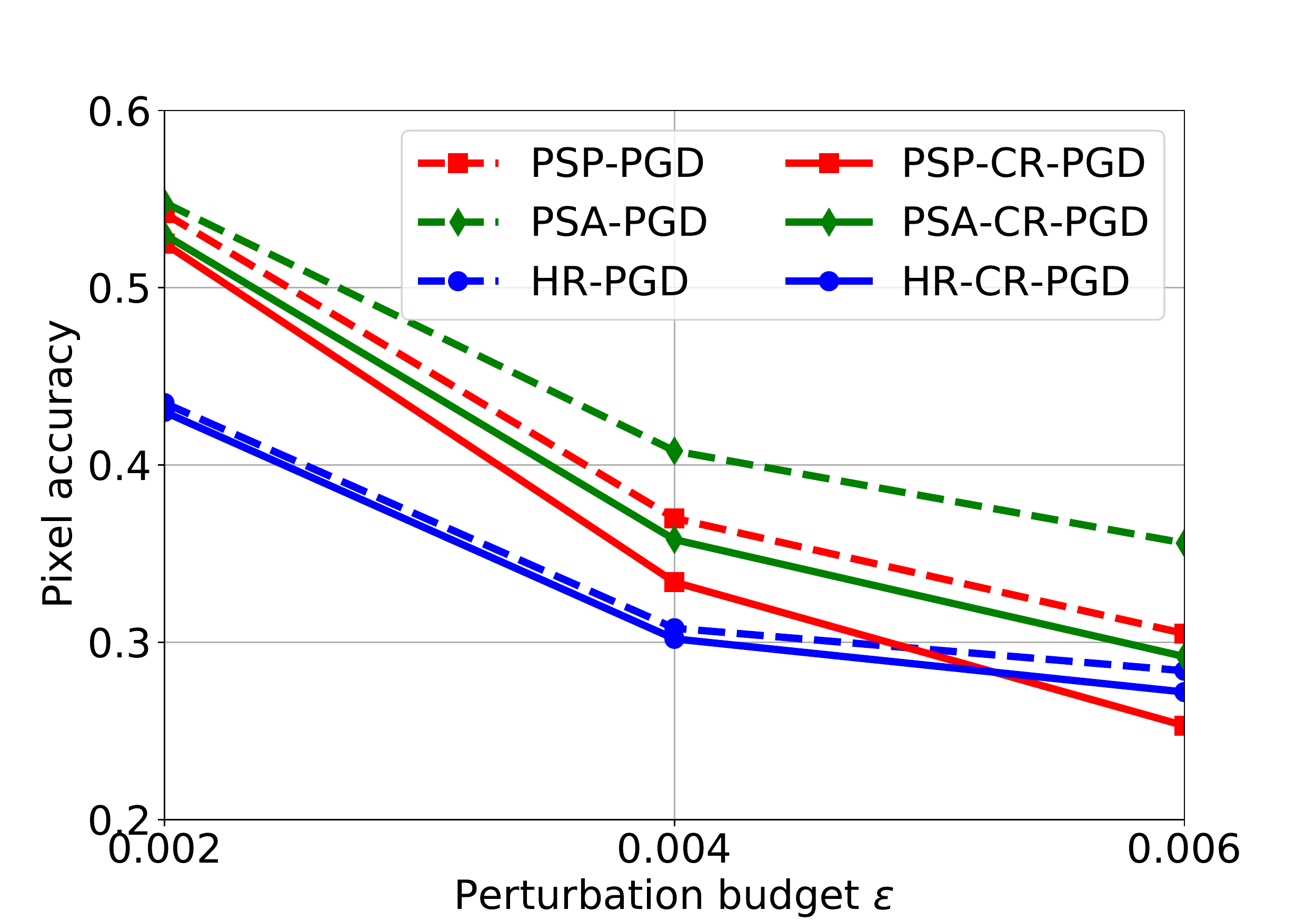}}
\subfigure[Cityscapes]{\includegraphics[width=0.28\textwidth]{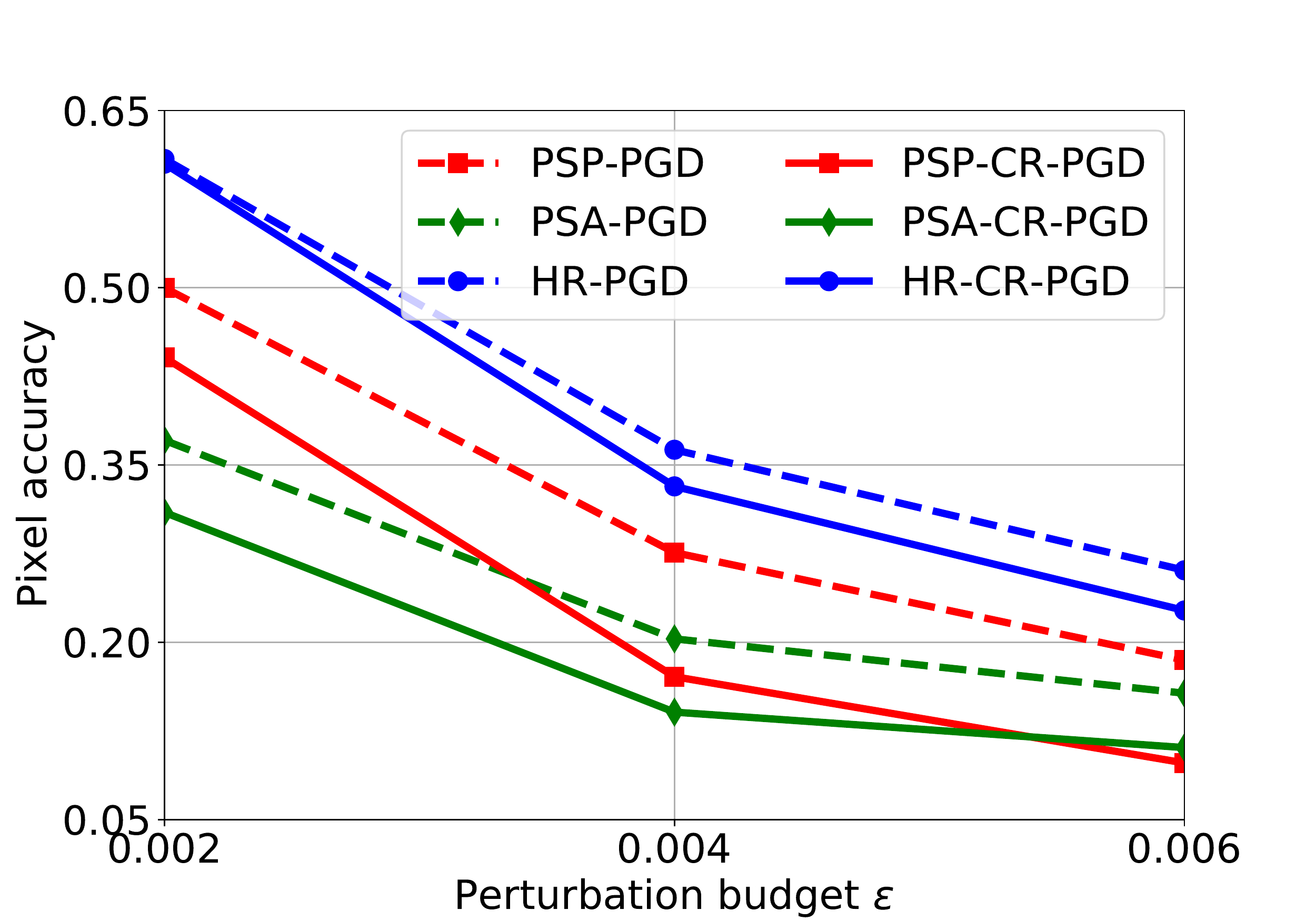}}
\subfigure[ADE20K]{\includegraphics[width=0.28\textwidth]{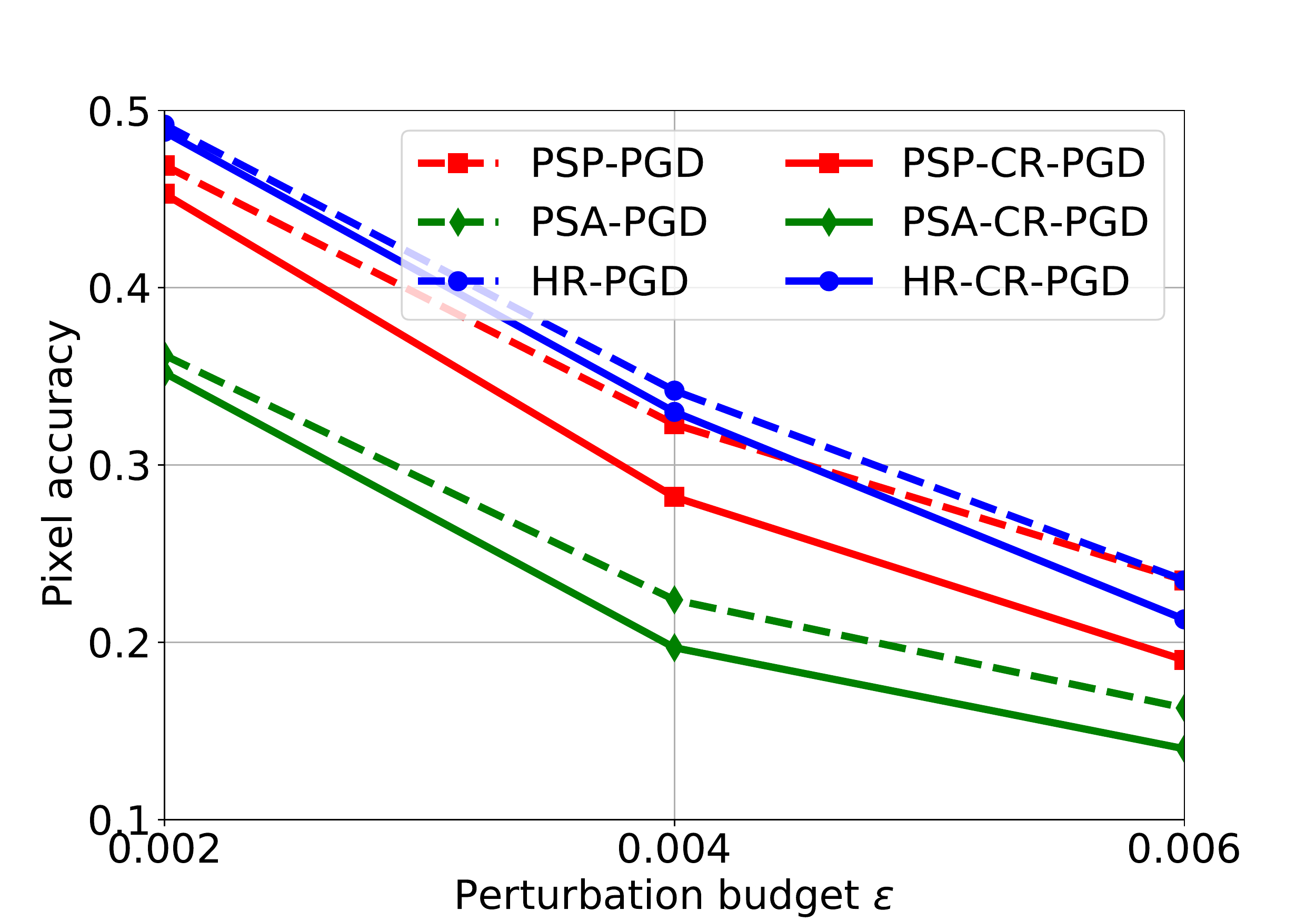}}
\vspace{-2mm}
\caption{PixAcc after PGD and CR-PGD attacks with $l_\infty$ perturbation vs. perturbation budget $\epsilon$ on the three models and  datasets.} 
\label{fig:whitelinf_acc}
\vspace{-4mm}
\end{figure*}

\noindent {\bf Parameter settings.} 
Consider black-box attacks are more challenging than white-box attacks, we  set different values for certain hyperparameters in the two attacks. For instance, we set a larger $l_p$ perturbation budget $\epsilon$ and larger number of iterations $T$ when evaluating black-box attacks.

\begin{itemize}[leftmargin=*]
\item {\bf White-box attack settings.} Our CR-PGD and PGD 
attacks share the same hyperparameters. Specifically, we set the total number of iterations $T=50,50,20$ and the learning rate $\alpha=\frac{2.5 \epsilon}{T}, \frac{2.5 \epsilon}{T}, \frac{ \epsilon}{T}$ to generate $l_1$ $l_2$, and $l_\infty$ perturbations, respectively. 
We set the weight parameters $a=2$ and $b=-4$, and $\textrm{INT} = M$.
We also study the impact of the important hyperparameters in our CR-PGD attack: {Gaussian} 
noise $\sigma$ in certified radius, number of samples $M$ in {Monte} 
Carlo sampling, and $l_p$ perturbation budget $\epsilon$, etc. 
By default, we set $\sigma=0.001$ and 
$M=8$, and $\epsilon=750, 1.5, 0.006$ for $l_1$, $l_2$, and $l_\infty$ perturbations, respectively. When studying the impact of a hyperparameter, we fix the other hyperparameters to be their default values. 

\item {\bf Black-box attack settings.} We set the learning rate $\alpha=5 \cdot 10^{-4}$, Gaussian noise $\sigma=0.001$, number of samples $M=8$, 
weight parameters $a=2$ and $b=-4$, and $\textrm{INT} = 2M$. 
We mainly study the impact of the total number of iterations $T$ 
and the $l_p$ perturbation budget $\epsilon$. 
By default, we set $T=15,000$ and $\epsilon=10000, 10, 0.05$ for $l_1$, $l_2$, and $l_\infty$ perturbations, respectively.

\end{itemize}

\noindent {\bf Evaluation metrics.} We use two common metrics to evaluate attack performance to image segmentation models. 

\begin{itemize}[leftmargin=*]
\item {\bf Pixel accuracy (PixAcc):} 
Fraction of pixels 
predicted correctly by the target model over the testing set.

\item {\bf Mean Intersection over Union ({MIoU}):} For each testing image $x$ and pixel label $c$, it first computes the ratio $IoU_c^x = \frac{|P_c^x \bigcap G_c^x|}{|P_c^x \bigcup G_c^x|}$, where $P_c^x$ denotes the pixels in $x$ predicted as label $c$ and $G_c^x$ denotes the pixels in $x$ with the groudtruth label $c$. Then, \emph{MIoU} is the average of $IoU_c^x$ over all testing images and all pixel labels.
\end{itemize}

\subsection{Results on White-box Attacks}
\label{sec:eval_wb}

\vspace{-2mm}
In this section, we show results on white-box attacks to segmentation models. 
We first compare the existing attacks  and show that the PGD attack achieves the best attack performance.
Next, we compare our certified radius-guided PGD attack with the PGD attack. Finally, we study the impact of the important hyperparameters in our CR-PGD attack. We defer all MIoU results to Appendix~\ref{app:results}.

\begin{table}[!t]\renewcommand{\arraystretch}{0.8}
\centering
\footnotesize
    \caption{Pixel accuracy with the existing attacks.}
    \label{tab:comp_exist}
      \begin{tabular}{c|c|c|c|c}
      \hline
      \multirow{2}{*}{\bf Attack} & \multirow{2}{*}{\bf Norm} & \multicolumn{3}{c}{\bf Dataset}                            \\ \cline{3-5} 
                  &          &    {\bf Pascal VOC}      & \multicolumn{1}{c|}{\bf CityScapes} & \multicolumn{1}{c}{\bf ADE20K}  \\ \hline
        \hline
        \multirow{3}{*}{\bf FGSM}
        & {\bf $l_1$}  & $79.8\%$ & $90.8\%$  & $59.4\%$ \cr
        & {\bf $l_2$} & $80.7\%$ & $91.0\%$  & $59.9\%$ \cr
        & {\bf $l_\infty$} & $74.7\%$ & $85.1\%$  & $54.4\%$ \cr
        \cline{1-5}
        {\bf DAG}
        & {\bf $l_\infty$} & $69.1\%$ & $75.4\%$  & $47.8\%$ \cr
        \cline{1-5}
        \multirow{3}{*}{\bf PGD}
        & {\bf $l_1$} & $37.7\%$ & $63.0\%$  & $17.0\%$ \cr
        & {\bf $l_2$} & $39.7\%$ & $58.1\%$  & $18.2\%$ \cr
        & {\bf $l_\infty$} & $30.5\%$ & $26.1\%$ & $23.5\%$ \cr
        \cline{1-5}
        \hline
      \end{tabular}
\vspace{-3mm}
\end{table}

\vspace{-2mm}
\subsubsection{Verifying that PGD outperforms the existing attacks}
We compare the PGD attack~\cite{arnab2018robustness} with FGSM~\cite{arnab2018robustness} and DAG~\cite{xie2017adversarial}, two other well-known attacks to image segmentation models (Please see their details in Appendix~\ref{supp:methods}). Note that DAG is only for $l_\infty$ perturbation.   
We set the perturbation budget $\epsilon$ to be $750,1.5,0.006$ for $l_1$, $l_2$, and $l_p$ perturbations, respectively.  
The compared results of these attacks are shown in Table~\ref{tab:comp_exist}. 
We observe that PGD consistently and significantly outperforms {FGSM} 
and DAG in all the three datasets. Based on this observation, we will select PGD as the default base attack in our certified radius-guided attack framework.

\vspace{-3mm}
\subsubsection{Comparing CR-PGD with PGD} We compare PGD and CR-PGD attacks with respect to $l_1$, $l_2$, and $l_\infty$ perturbations. The 
pixel accuracy on the three datasets vs. 
perturbation budget $\epsilon$ are shown in Figure~\ref{fig:whitel1_acc}, Figure~\ref{fig:whitel2_acc}, and Figure~\ref{fig:whitelinf_acc}, respectively. 
The MIoU on the three datasets vs. $l_1$, $l_2$, and $l_\infty$ perturbation budget $\epsilon$ are shown in Figure~\ref{fig:whitel1_mio}, Figure~\ref{fig:whitel2_mio}, and Figure~\ref{fig:whitelinf_mio} in Appendix~\ref{app:results}, respectively. 
We have the following observations:

\begin{figure}[!t]
\centering
\vspace{-4mm}
\subfigure[PGD]{\includegraphics[width=0.23\textwidth]{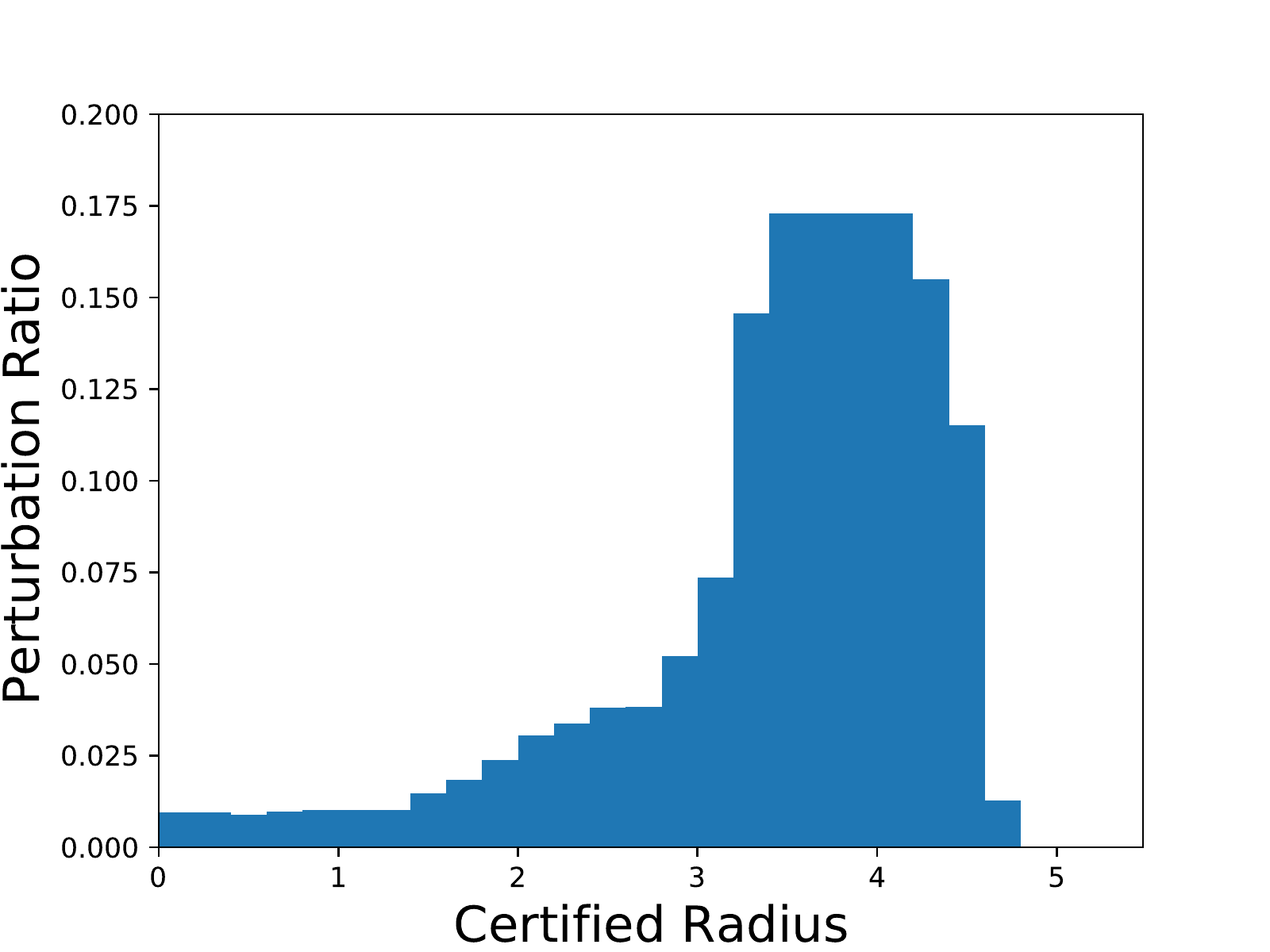}}
\subfigure[CR-PGD]{\includegraphics[width=0.23\textwidth]{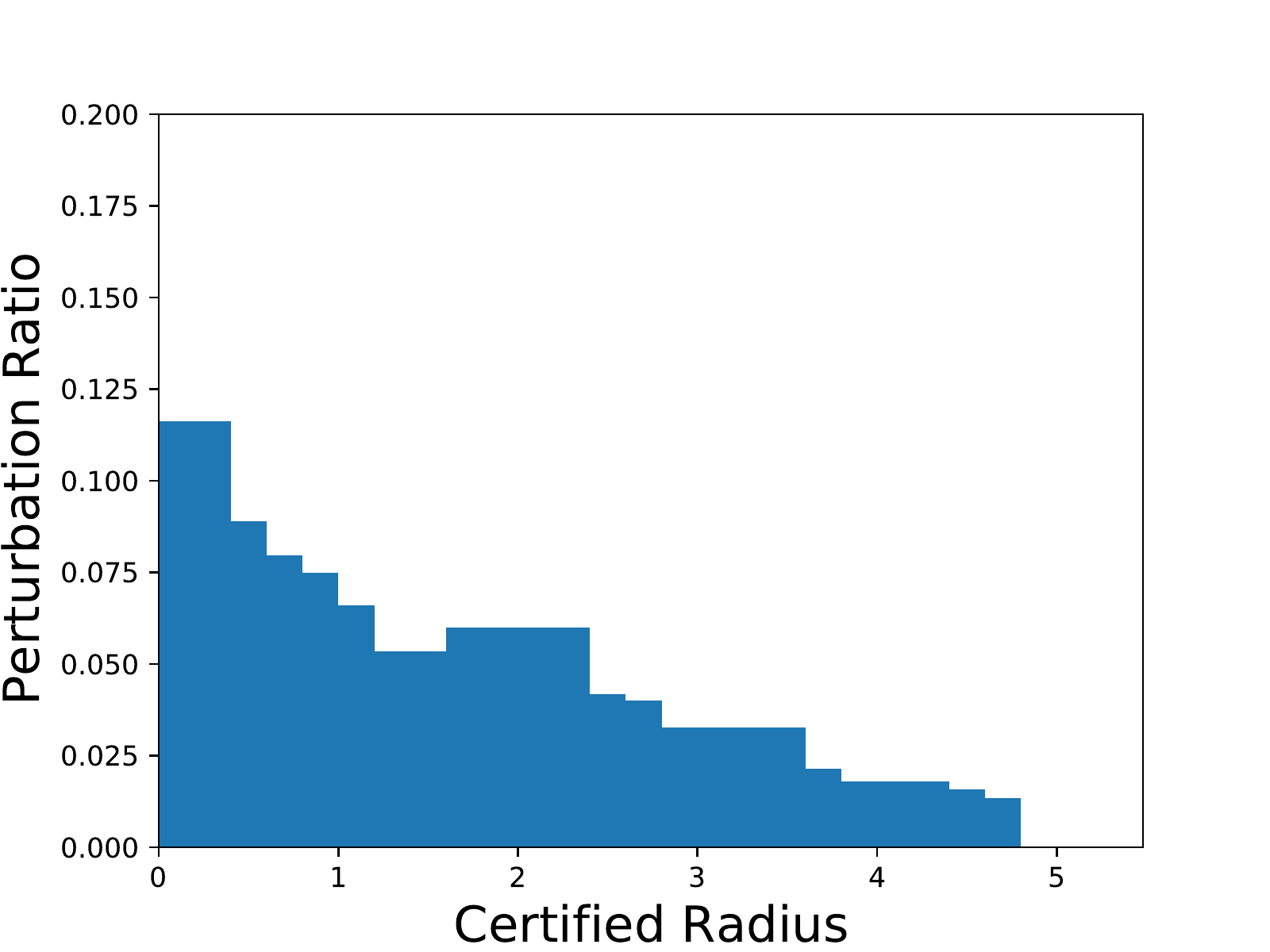}}
\vspace{-2mm}
\caption{{Distribution of pixel perturbations vs. pixel-wise certified radius of PGD and CR-PGD on 10 random images.}} 
\label{fig:distpwcr}
\vspace{-4mm}
\end{figure}

\begin{figure*}[!t]
\centering
\subfigure[$L_1$ perturbation]{\includegraphics[width=0.3\textwidth]{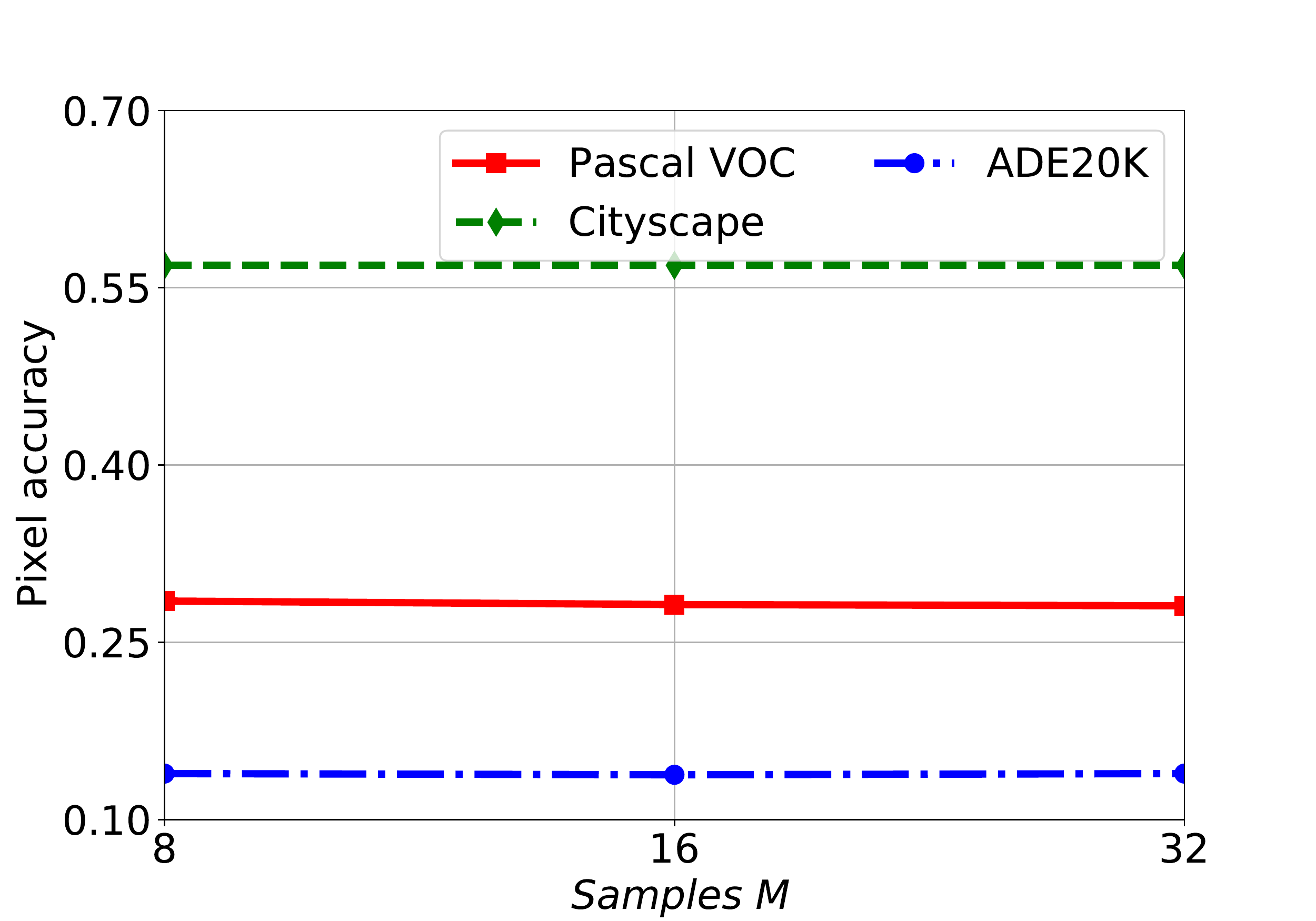}}
\subfigure[$L_2$ perturbation]{\includegraphics[width=0.3\textwidth]{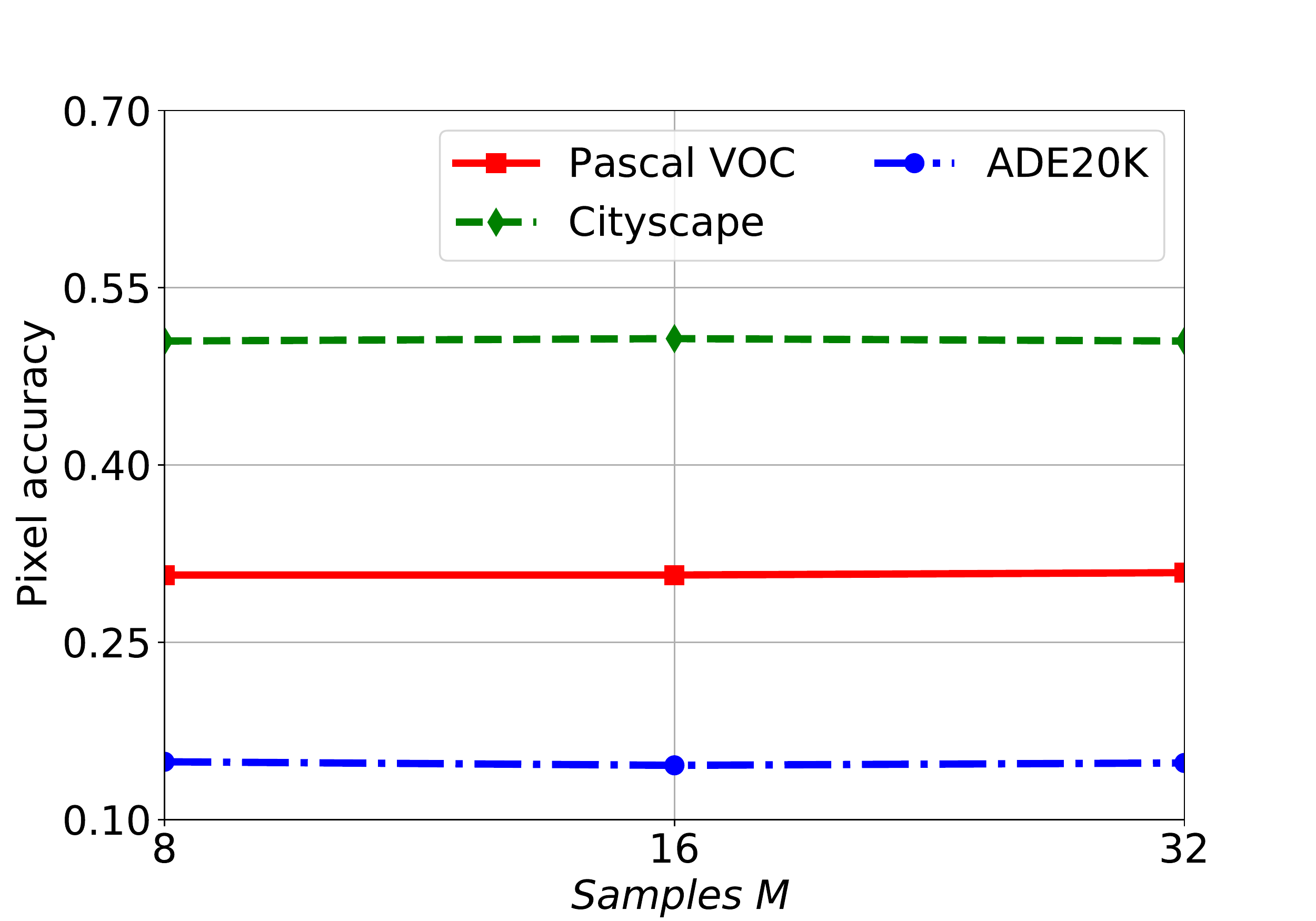}}
\subfigure[$L_\infty$ perturbation]{\includegraphics[width=0.3\textwidth]{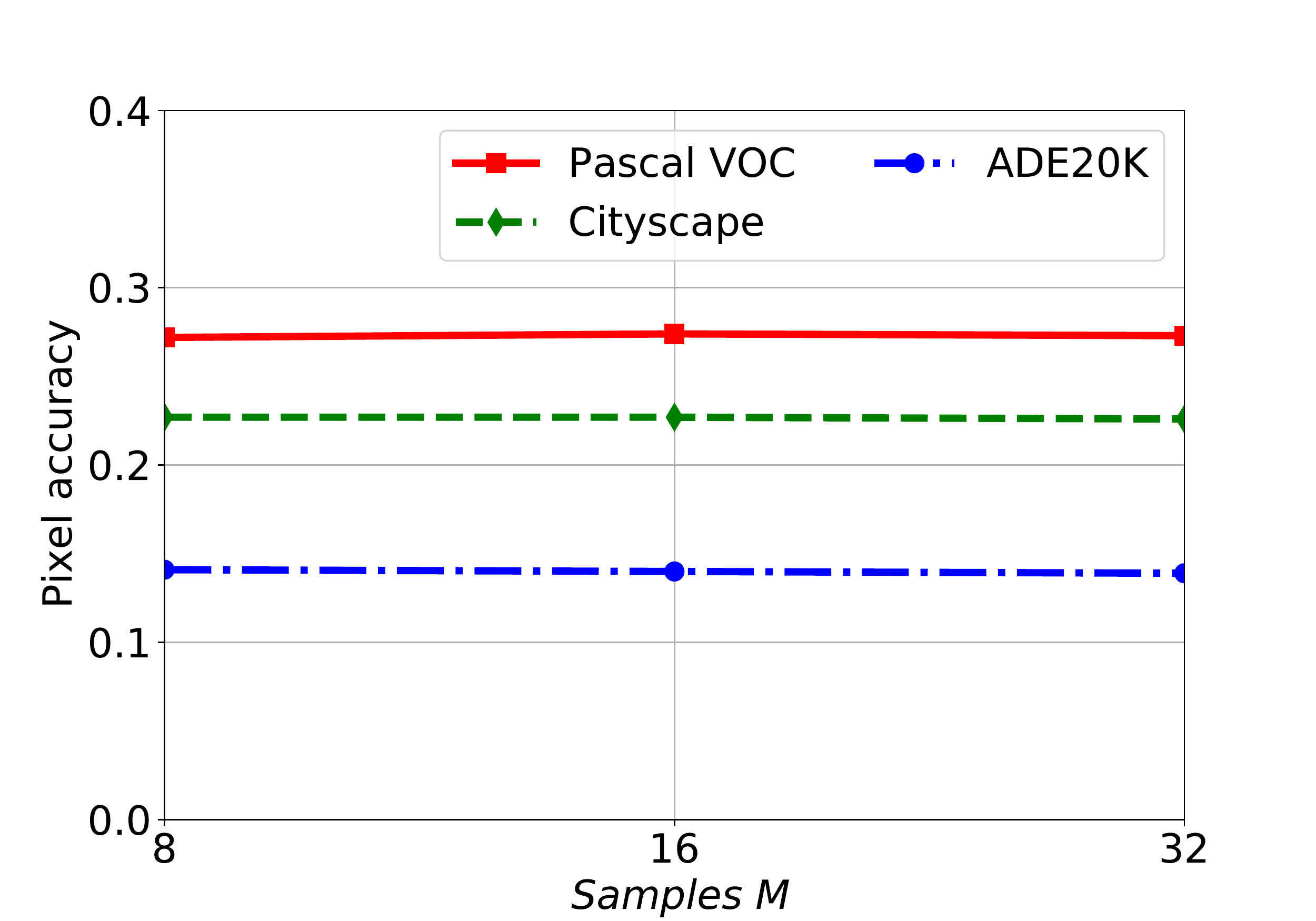}}
\vspace{-2mm}
\caption{Impact of the \#samples $M$ on our CR-PGD attack with different $l_p$ perturbations.} 
\label{fig:white_param_acc_m}
\vspace{-6mm}
\end{figure*}

\begin{figure*}[!t]
\centering
\subfigure[$L_1$ perturbation]{\includegraphics[width=0.3\textwidth]{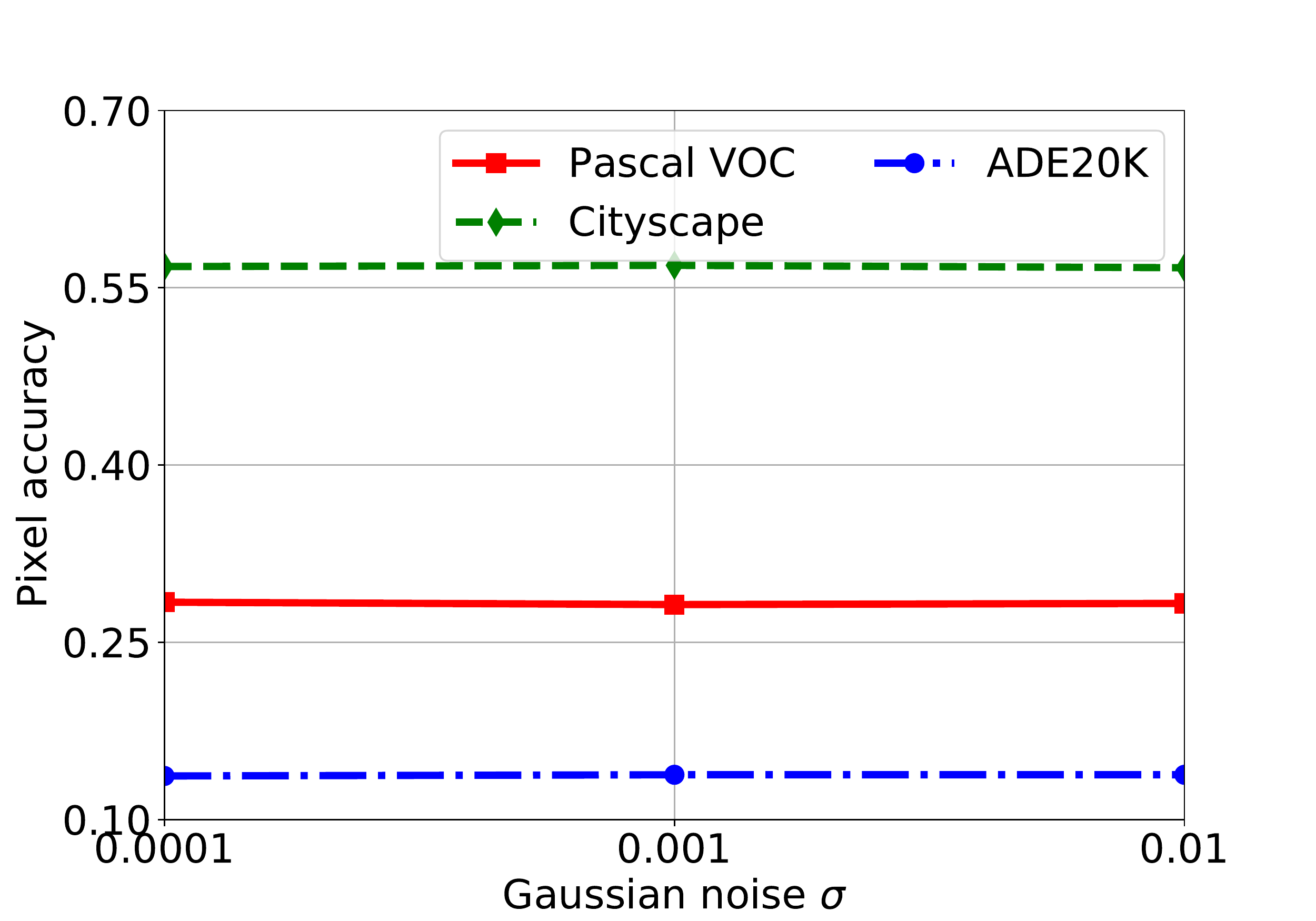}}
\subfigure[$L_2$ perturbation]{\includegraphics[width=0.3\textwidth]{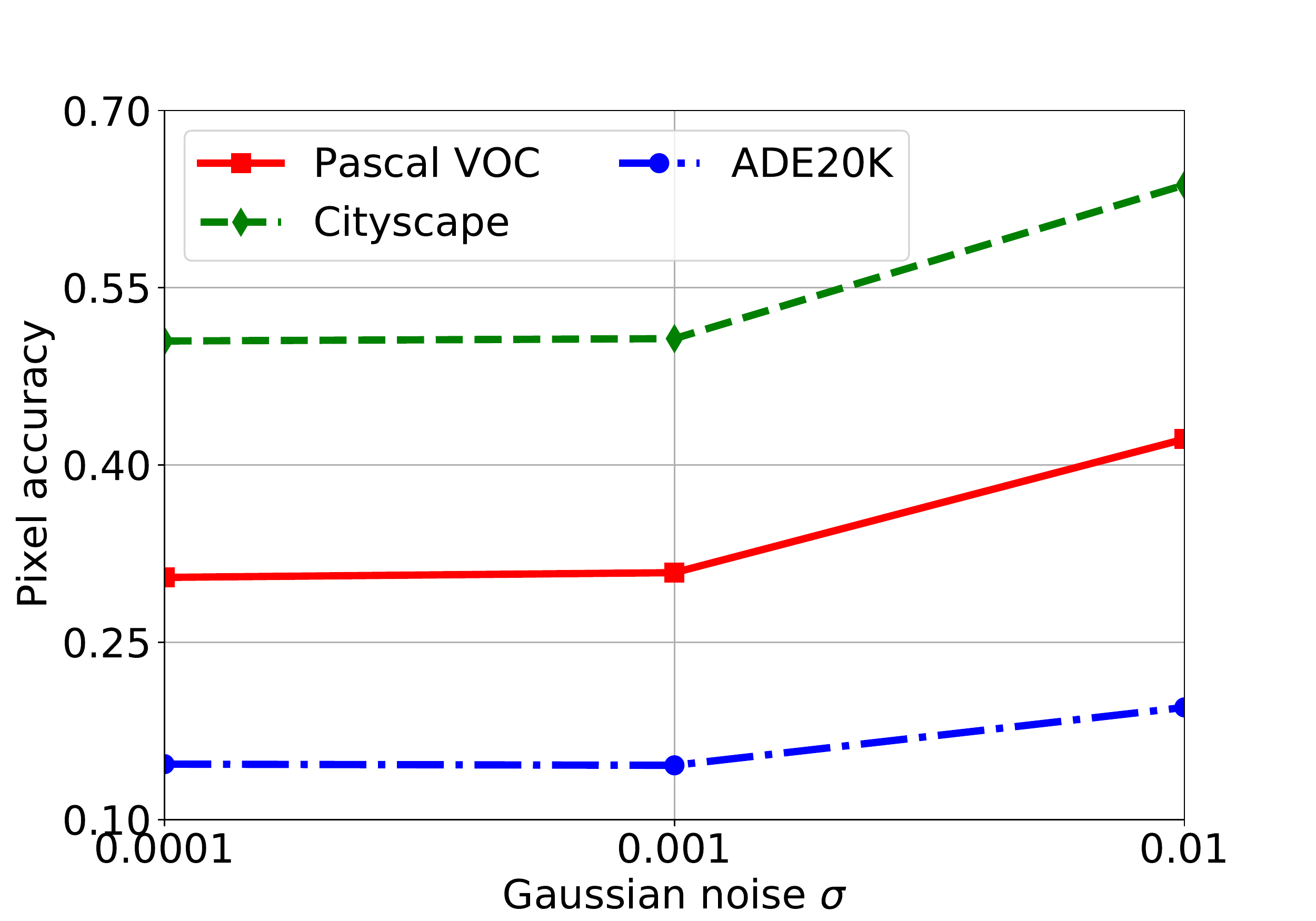}}
\subfigure[$L_\infty$ perturbation]{\includegraphics[width=0.3\textwidth]{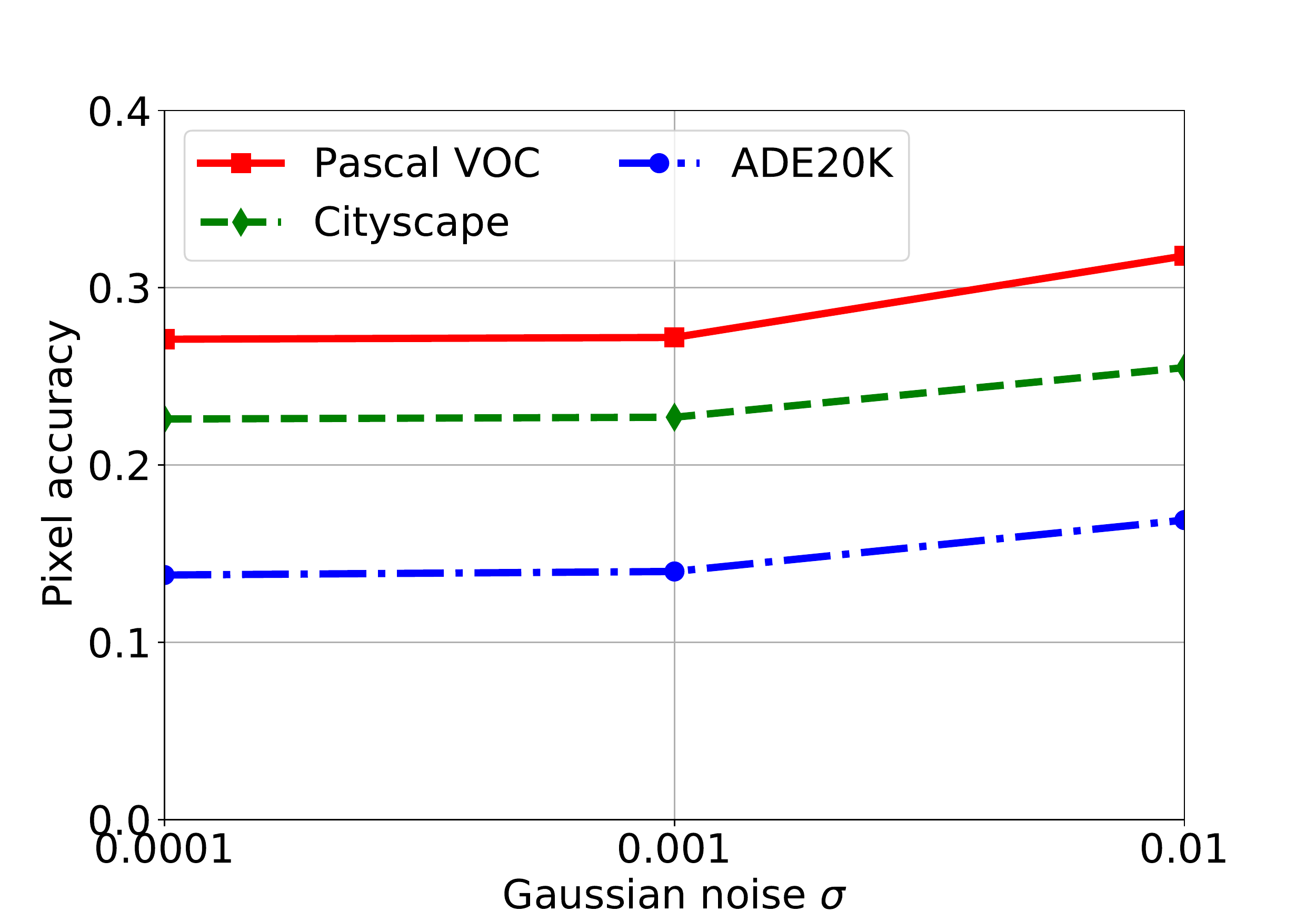}}
\vspace{-2mm}
\caption{Impact of the Gaussian noise $\sigma$ on our CR-PGD attack with different $l_p$ perturbations.} 
\label{fig:white_param_acc_s}
\vspace{-4mm}
\end{figure*}

\noindent {\bf Our CR-PGD attack consistently outperforms the PGD attack in all datasets,  models, and $l_p$ perturbations.} For instance, when attacking PSANet on Cityscapes with $l_1$-perturbation and $\epsilon=500$, our CR-PGD attack has a relative $53.8\%$ gain over the PGD attack in reducing the pixel accuracy;  
When attacking  HRNet  on Pascal VOC with $l_2$-perturbation and $\epsilon=1.5$, CR-PGD has a relative $9.4\%$ gain over PGD;  
When attacking  PSPNet on ADE20K with $l_\infty$-perturbation and $\epsilon=0.004$, CR-PGD has a relative $12.7\%$ gain over PGD. 
Across all settings, the average relative gain of CR-PGD over PGD is 13.9\%.

These results validate that pixel-wise certified radius can indeed guide CR-PGD to find better perturbation directions, and thus help better allocate the pixel perturbations. 
 To further verify it, we aim to show that more perturbations should be added to the pixels that easily affected others, such that more pixels are misclassified.  
We note that directly finding pixels that mostly easily affect others is a challenging combinatorial optimization problem. Hence we use an alternative way, i.e., show the distribution
of pixel perturbations vs. pixel-wise certified radius, 
to approximately show the effect. Our intuition is that: for pixels with small certified radii, perturbing them can also easily affect other neighboring pixels with small certified radii. This is because neighboring pixels with small certified radii can easily affect each other, which naturally forms the groups in the certified radius map (also see Figure 1(i)). Figure 5 verifies this intuition, where we test
10 random testing images in Pascal VOC. We can see that a majority of the perturbations in CR-PGD are assigned to the pixels with relatively smaller certified {radii}, 
in order to wrongly predict more pixels. In contrast, most of the perturbations in PGD are assigned to the pixels with relatively larger certified {radii}. 
As wrongly predicting these pixels requires a larger perturbation,   PGD misclassifies much fewer pixels than our CR-PGD.

\noindent {\bf Different models have different robustness.} 
Overall, HRNet is the most robust against the PGD and CR-PGD attacks, in that it has the smallest PixAcc drop when $\epsilon$ increases. On the other hand, PSPNet is the most vulnerable. We note that \cite{arnab2018robustness} has similar observations. 

\noindent {\bf Running time comparison.} 
Over all testing images in the three datasets, the average time of CR-PGD is 4.0 seconds, while that of PGD is 3.6 seconds. The overhead of CR-PGD over PGD is 11\%.

\begin{table}[!t]\renewcommand{\arraystretch}{0.8}
\footnotesize
\center
\caption{PixAcc with different $a$ and $b$ in $l_2$ CR-PGD.}
\label{tb:ab}
\begin{tabular}{c|c|c|c|c}
\hline
{\bf Dataset} &\diagbox{a}{b} &$b=-2$ & $b=-3$ &$b=-4$\\  \hline 
\multirow{3}{*}{\bf Pascal VOC} & $a=1$ & 33.6\% & 35.6\% & 38.1\% \cr
& $a=2$ & 28.3\% & 31.8\% & 30.9\%\cr
& $a=3$ & 32.4\% & 31.2\% & 30.7\%\cr
\cline{1-5}
\multirow{3}{*}{\bf CityScape} & $a=1$ & 54.1\% & 55.3\% & 56.5\%\cr
& $a=2$ & 52.7\% & 52.5\% & 50.7\%\cr
& $a=3$ & 54.8\% & 53.5\% & 52.6\%\cr
\cline{1-5}
\multirow{3}{*}{\bf ADE} & $a=1$ & 17.7\% & 18.6\% & 20.0\% \cr
& $a=2$ & 15.1\% & 16.2\% & 14.6\%\cr
& $a=3$ & 14.4\% & 15.0\% & 15.3\% \cr
\cline{1-5}
\end{tabular}
\vspace{-4mm}
\end{table}

\begin{figure*}[!t]
\centering
\subfigure[$l_1$ perturbation]{\includegraphics[width=0.3\textwidth]{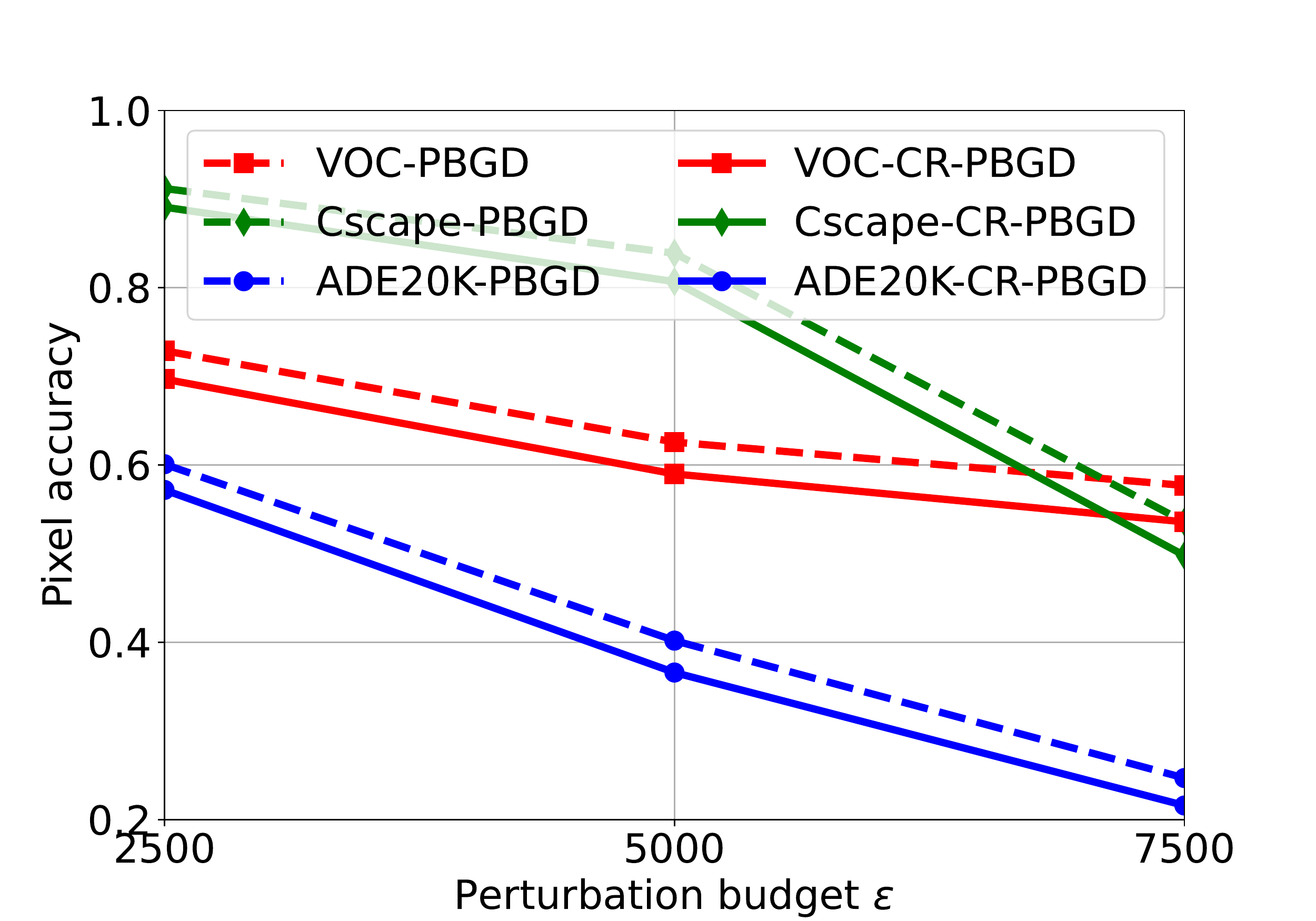}}
\subfigure[$l_2$ perturbation]{\includegraphics[width=0.3\textwidth]{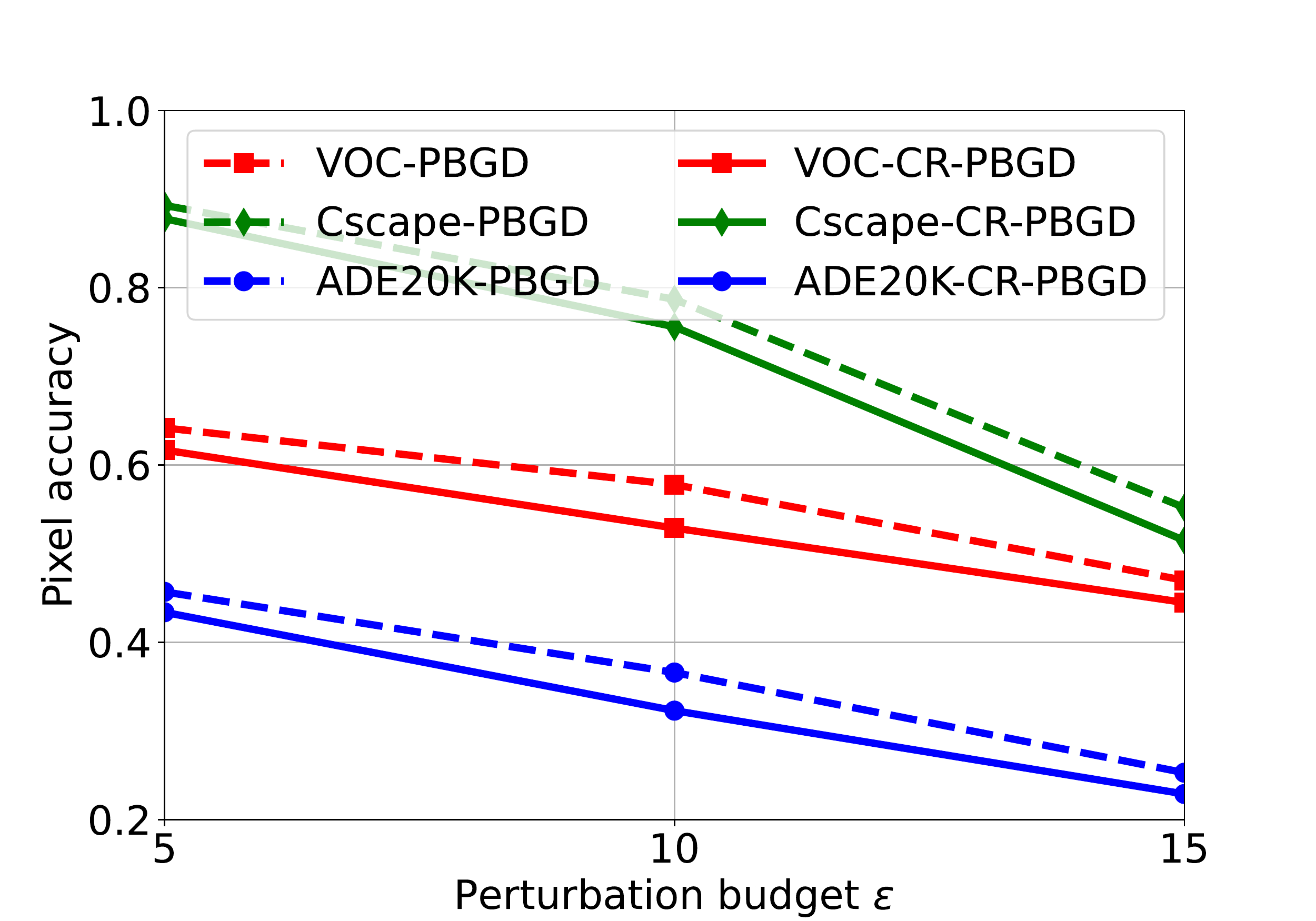}}
\subfigure[$l_\infty$ perturbation]{\includegraphics[width=0.3\textwidth]{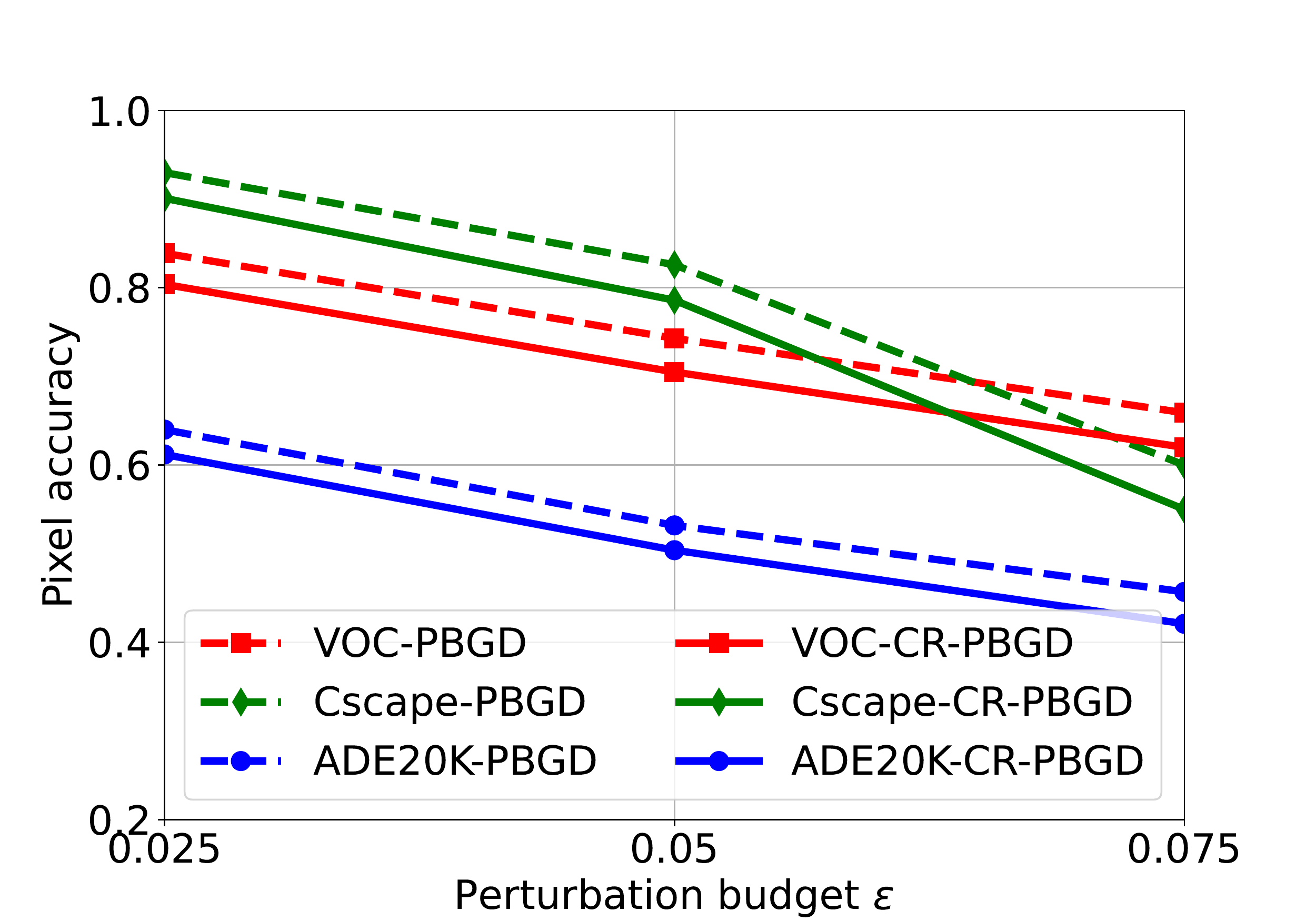}}
\vspace{-2mm}
\caption{{PixAcc after the black-box PBGD and CR-PBGD attacks with different $l_p$ perturbations vs. perturbation budget $\epsilon$.
}}
\label{fig:black_acc}
\vspace{-4mm}
\end{figure*}

\vspace{-2mm}
\subsubsection{Impact of the hyperparameters in CR-PGD} 
Pixel-wise certified radius and pixel weights are key components in our CR-PGD attack. 
Here, we study the impact of important hyperparameters in calculating the pixel-wise certified radius: number of samples $M$ 
in {Monte} Carlo sampling 
and  {Gaussian} noise $\sigma$, and $a$ and $b$ in calculating the pixel weights.  Figure~\ref{fig:white_param_acc_m} (and Figure~\ref{fig:white_param_miou_m} in Appendix) and Figure~\ref{fig:white_param_acc_s} (and Figure~\ref{fig:white_param_miou_s} in Appendix) show the impact of $M$ and $\sigma$ on our CR-PGD attack's pixel accuracy 
with $l_1$, $l_2$, and $l_\infty$ perturbation, respectively. 
Table~\ref{tb:ab} shows our CR-PGD attack's pixel accuracy with different $a$ and $b$. 
We observe that: (i) Our CR-PGD attack is not sensitive to $M$. The benefit of this is that an attacker can use a relatively small $M$ in order to save the attack time.  
{(ii) Our CR-PGD attack has stable performance within a range of small  $\sigma$}. 
Such an observation can guide an attacker to set a relatively small $\sigma$ when performing the CR-PGD attack. 
{More results on studying the impact of $\sigma$ w.r.t. $l_1, l_2, l_\infty$ perturbations on different models and datasets are shown in Figure
~\ref{fig:white_param_acc_sl1} to Figure~\ref{fig:white_param_acc_slinf} 
in Appendix~\ref{app:results}}. 
(iii)  Our CR-PGD attack is stable across different $a$ and $b$, and $a=2$ and $b=-4$ achieves the best tradeoff.

\begin{figure}[!t]
\centering
\vspace{-3mm}
\subfigure[Attack loss]{\includegraphics[width=0.235\textwidth]{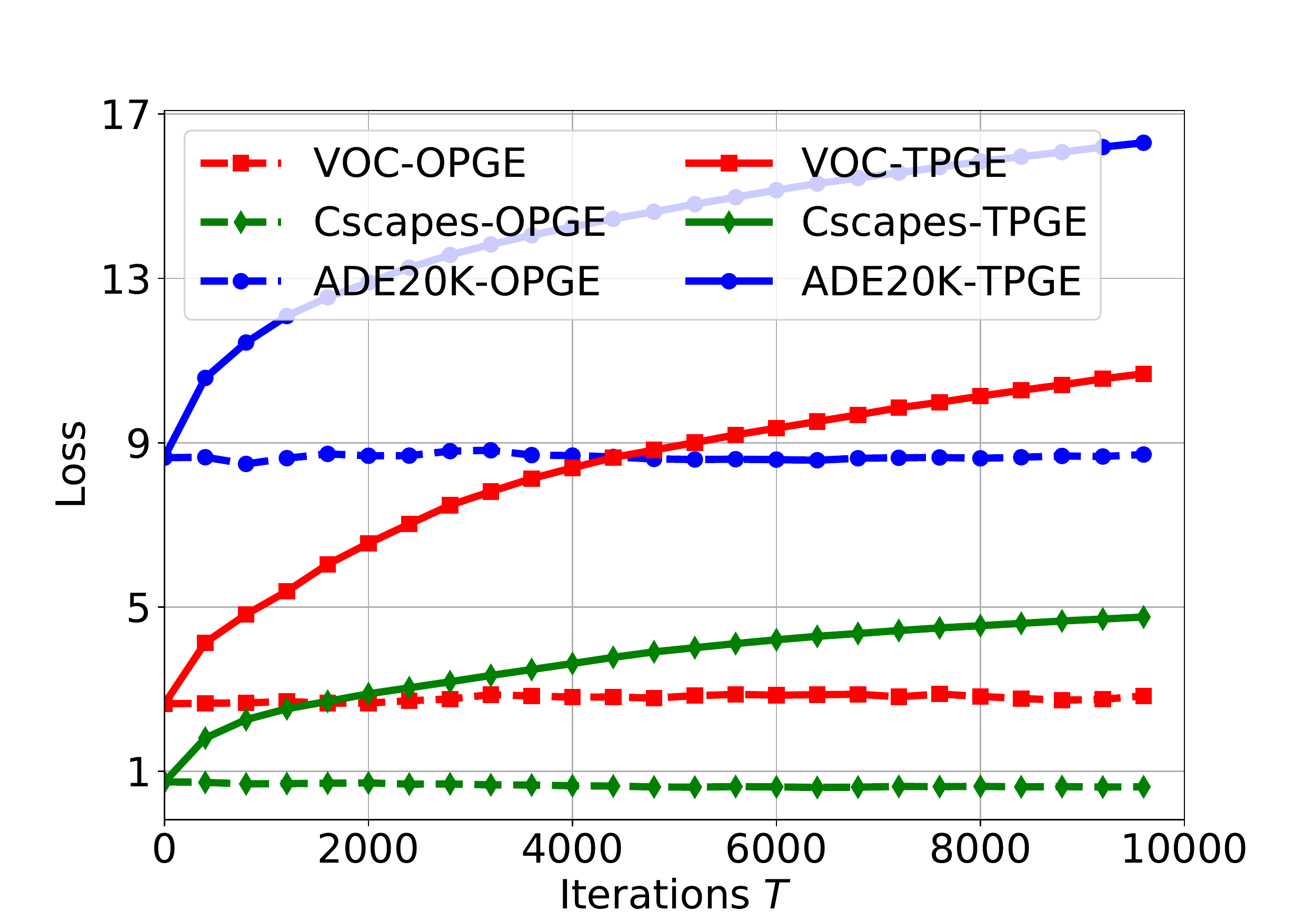}}
\subfigure[Attack performance]{\includegraphics[width=0.235\textwidth]{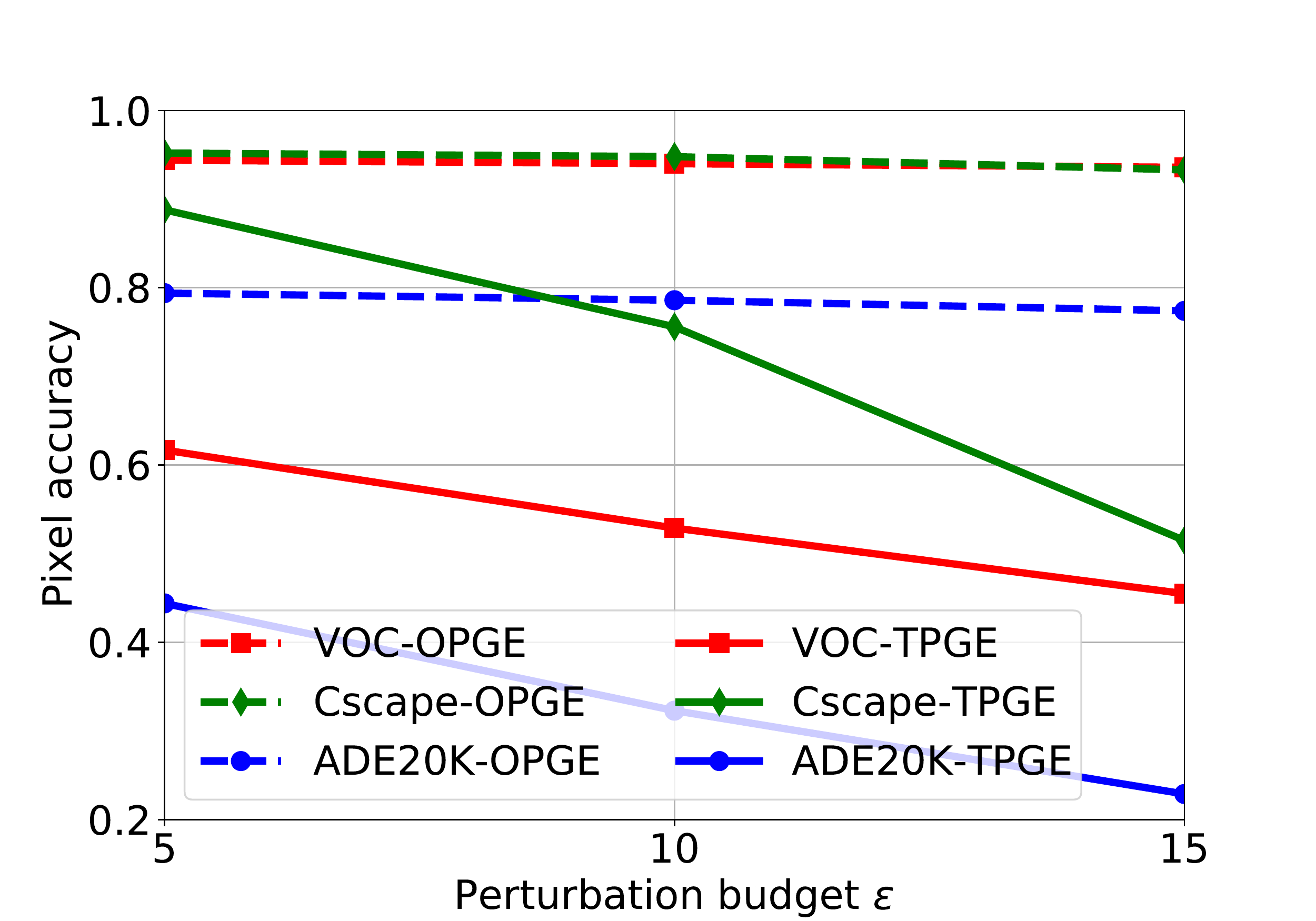}}
\vspace{-4mm}
\caption{{TPGE vs. OPGE. (a) Attack loss; (b) Attack performance.}} 
\label{fig:opge_vs_tpge}
\vspace{-4mm}
\end{figure}

\subsection{Results on Black-Box Attacks}
\label{sec:eval_bb}
\vspace{-2mm}
In this subsection, we show results on black-box attacks. We first
compare the gradient estimators proposed in Section~\ref{sec:bbattack}. Then, we compare our proposed PBGD and CR-PBGD attacks\footnote{
\cite{xie2017adversarial} studied the transferability between segmentation models. However, transferability-based attacks only have limited performance---the pixel accuracy only dropped 3-5\% after the attack.}. Finally, we study the impact of the number of queries, which is specific to black-box attacks. 
Note that 
we do not study the impact of other hyperparameters 
as our attacks are not sensitive to them, shown in the white-box attacks.

\vspace{-2mm}
\subsubsection{Comparing different gradient estimators}
In this experiment, we compare our two-point gradient estimator (TPGE) with the OPGE for simplicity.  We do not compare with the deterministic ZOO as it 
requires very huge number of queries, which is computationally intensive and impractical. 
We also note that stochastic NES~\cite{ilyas2018black} has very close performance as OPGE. For example, with an $l_2$ perturbation to be 10, we test on Pascal VOC with the PSPNet model and set \#populations to be 100. The PixAcc with NES is 77.4\%, which is close to OPGE's 76.9\%. 
For conciseness, 
we thus do not show NES's results.
Figure~\ref{fig:opge_vs_tpge} shows the results on attack loss and attack performance.
We observe in Figure~\ref{fig:opge_vs_tpge}(a) that 
the attack loss obtained by our TPGE 
stably increases, while obtained by the OPGE is unstable and sometimes decreases.
Moreover, as shown in Figure~\ref{fig:opge_vs_tpge}(b),  black-box attacks with TPGE achieve much better attack performance than with the OPGE---almost fails to work.  
The two observations validate that our TPGE outperforms the OPGE in estimating gradients and thus is more useful for attacking image segmentation models.

\subsubsection{Comparing PBGD with CR-PBGD}
Figure~\ref{fig:black_acc} shows the PixAcc with our PBGD and CR-PBGD attacks and different $l_p$ perturbations vs. perturbation budget $\epsilon$ on the three models and datasets. The MIoU results are shown in Appendix~\ref{app:results}. 
We have two observations. First, as 
$\epsilon$ increases, the PixAcc
decreases in all models and datasets with both CR-PBGD and PBGD attacks. Note that the PixAccs are larger that those achieved by white-box attacks (See Figures~\ref{fig:whitel1_acc}-\ref{fig:whitelinf_acc}). This is because white-box attacks use the exact gradients, while black-box attacks use the estimated ones. 
Second, CR-PBGD shows better attack performance than PBGD from two aspects: (i) All models have a smaller pixel accuracy with the CR-PBGD attack than that with the PBGD attack. (ii) The CR-PBGD attack can decrease the pixel accuracy more than the PBGD attack as $\epsilon$ increases.  
These results again demonstrate that the certified radius is beneficial to find more ``vulnerable" pixels to be perturbed.

\begin{figure}[!t]
\centering
\vspace{-3mm}
{\includegraphics[width=0.35\textwidth]{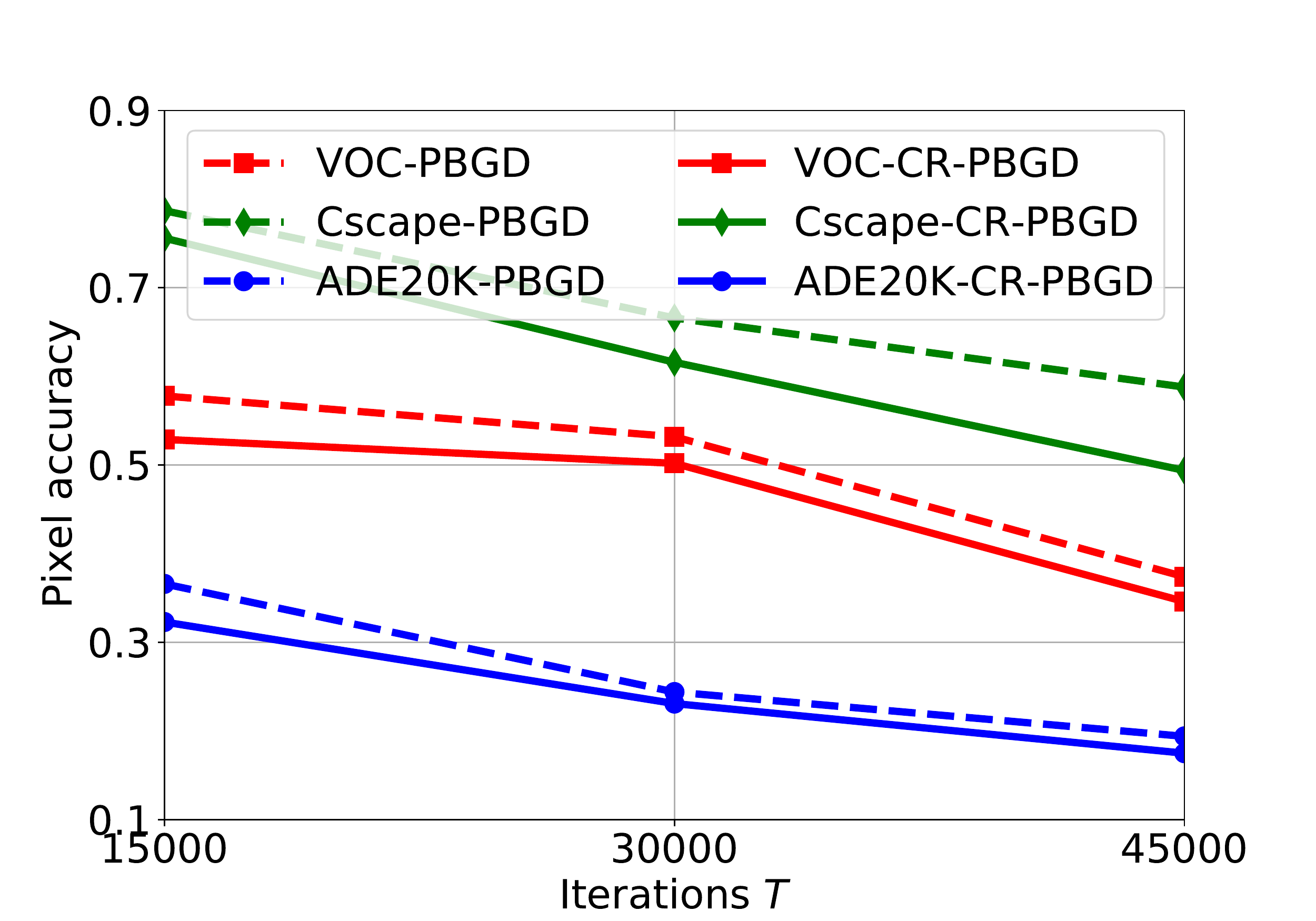}}
\vspace{-2mm}
\caption{Impact of the iterations $T$ on black-box PBGD and CR-PBGD attacks with $l_2$ perturbation on the three datasets.}
\label{fig:impact_query}
\vspace{-4mm}
\end{figure}

\begin{figure*}[!t]
\centering
\vspace{-2mm}
\subfigure[$l_1$ perturbation]{\includegraphics[width=0.28\textwidth]{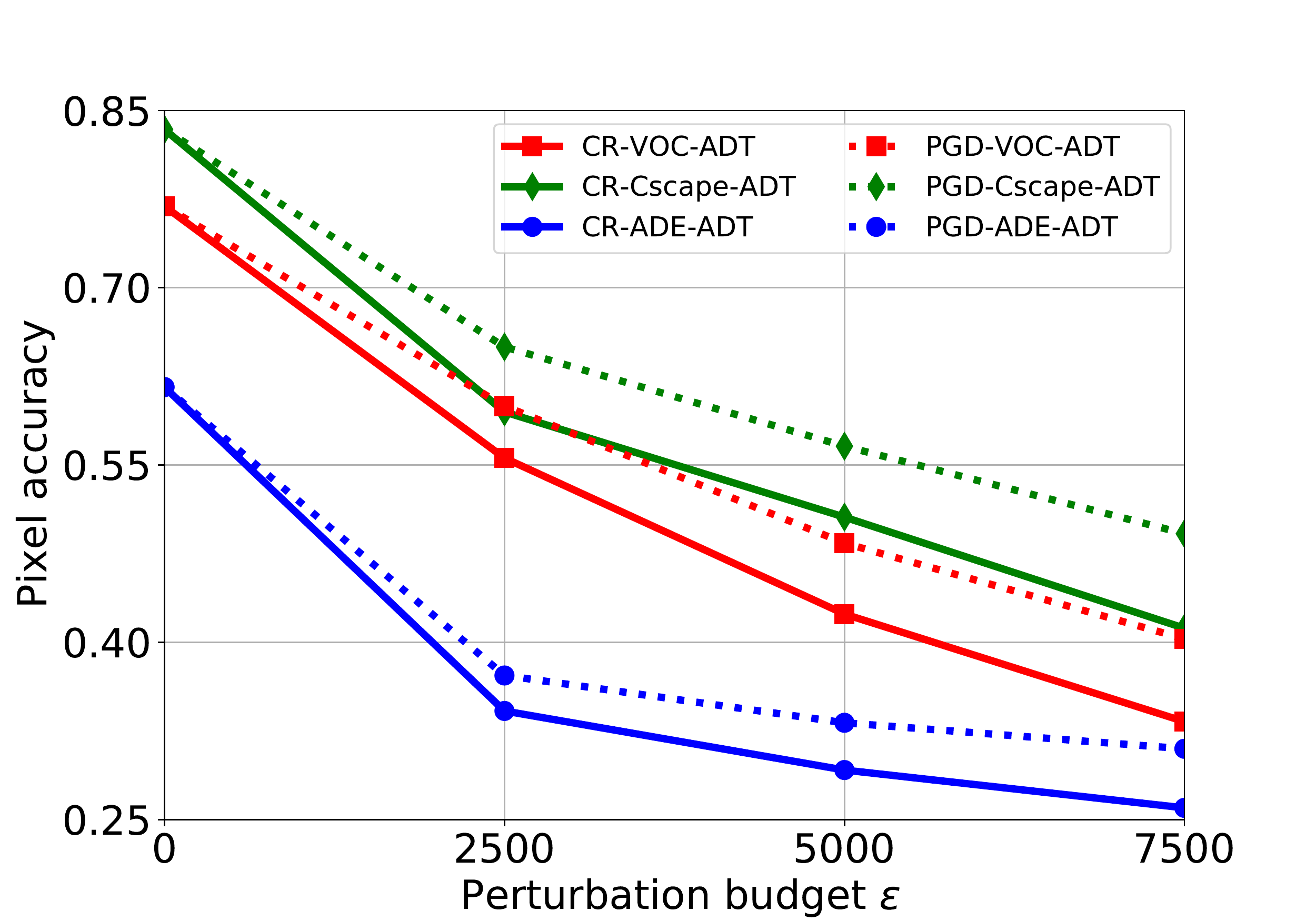}}
\subfigure[$l_2$ perturbation]{\includegraphics[width=0.28\textwidth]{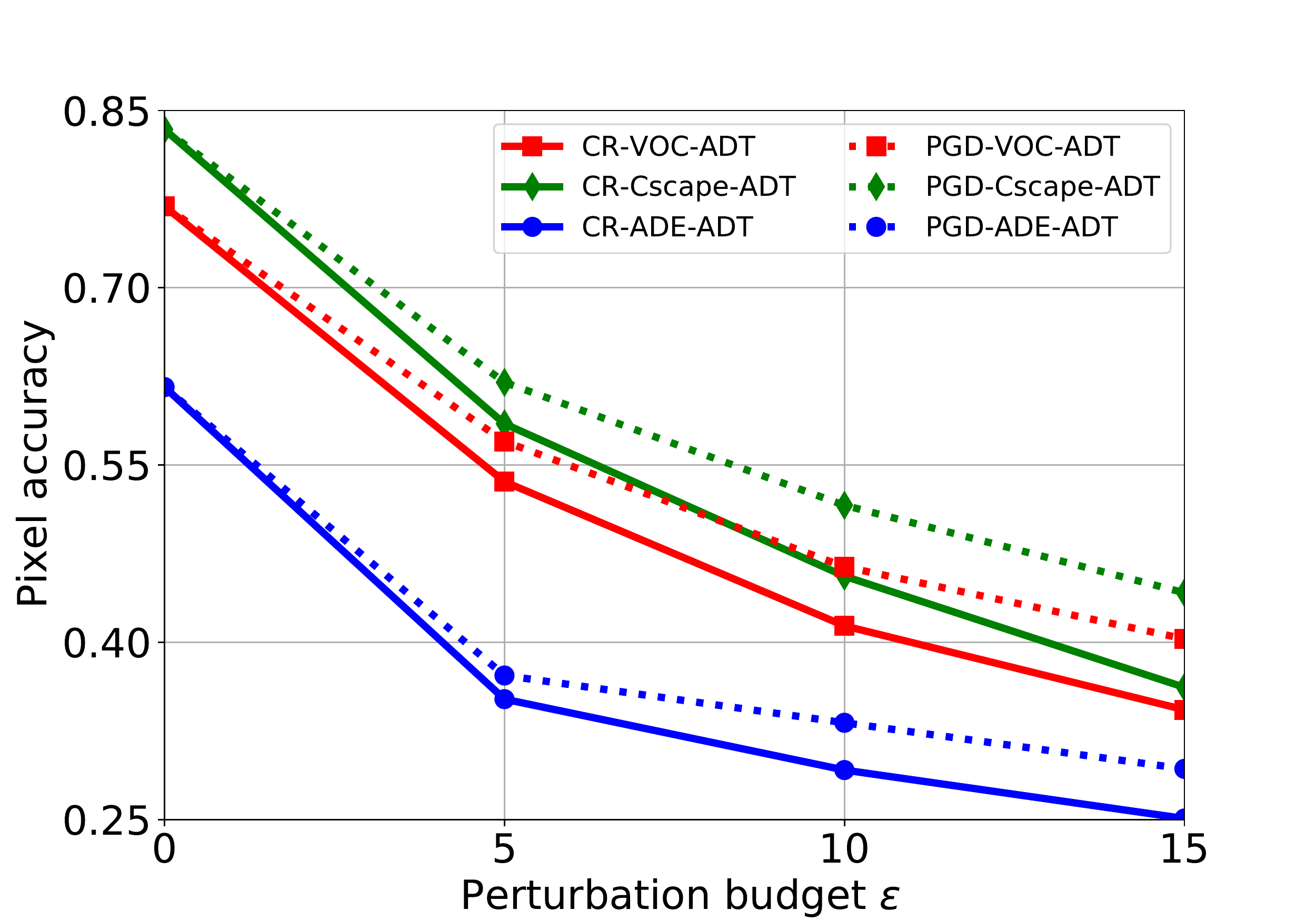}}
\subfigure[$l_\infty$ perturbation]{\includegraphics[width=0.28\textwidth]{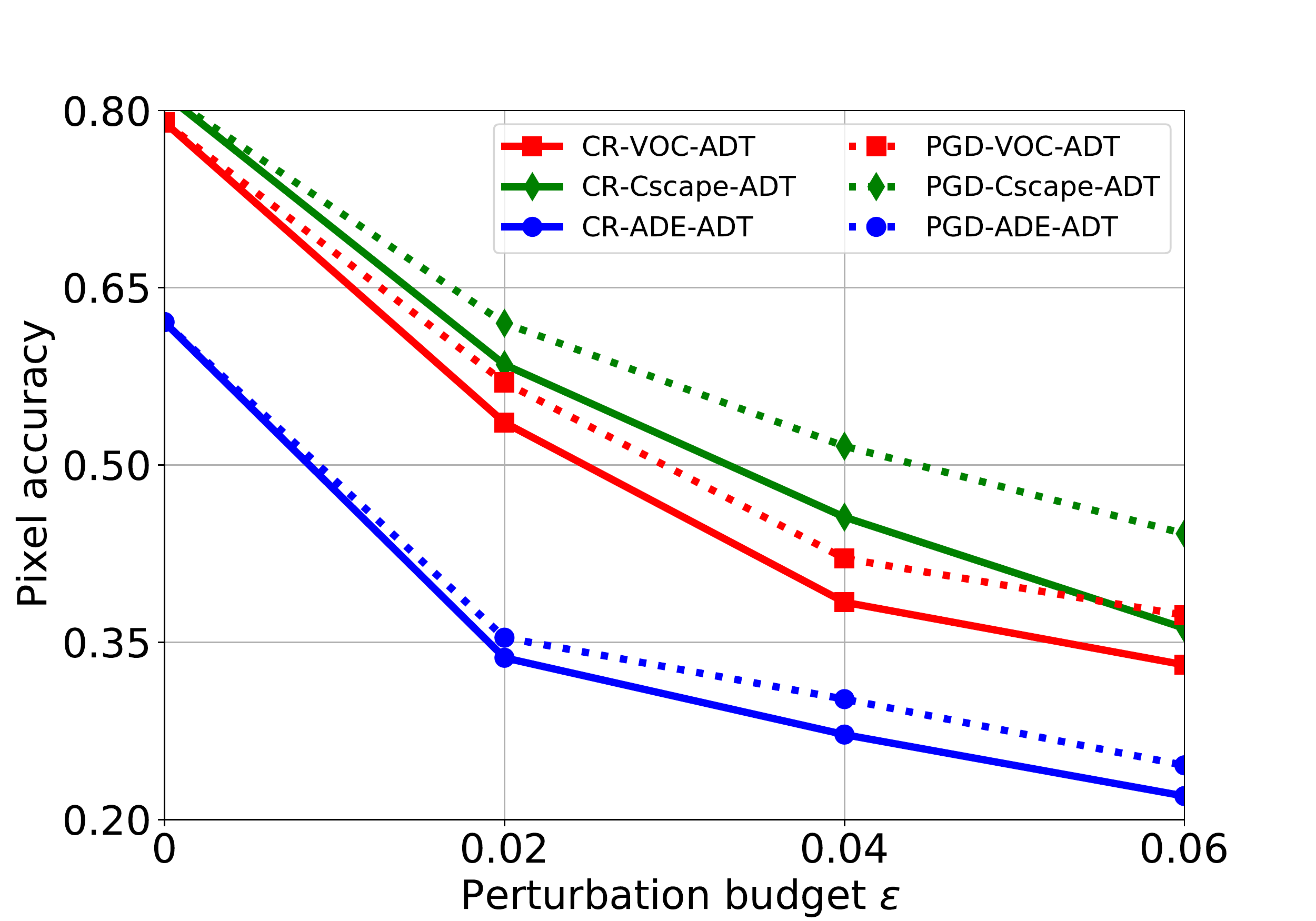}}
\vspace{-2mm}
\caption{{Defending against  white-box PGD and CR-PGD attack via fast adversarial training on the three models and datasets.}}
\label{fig:defense}
\vspace{-4mm}
\end{figure*}

\vspace{-2mm}
\subsubsection{Impact of the number of iterations/queries} 
Note that with $T$ iterations, our CR-PBGD attack has total queries  $2.5T$ (with $\textrm{INT}=2M$), while the PBGD attack has total queries $2T$. For a fair comparison, we set a same number of queries for PBGD,
i.e., we set its iteration number to be $1.25T$. For brevity, we still use a same notation $T$ when comparing them. 

Figure~\ref{fig:impact_query} shows the PixACC vs. $T$ against $l_2$ perturbation. 
Note that the results on $l_1$ and $l_\infty$ perturbations have similar tendency. 
Figure~\ref{fig:query_mio} in Appendix~\ref{app:results} shows the MIoU vs. $T$. 
We observe that  both the attacks perform better as $T$ increases. This is because a larger number of queries can reveal more information about the segmentation model to the attacker. 
Compared with PBGD, the CR-PBGD attack can decrease the pixel accuracy 
faster.

\subsection{Defenses}
\label{sec:eval_defense}

One defense is attempting to remove the certified radius information of all pixels, 
thus making the certified radius information no longer useful for attackers. However, this method will make the image segmentation model useless. Specifically, remember that a pixel's certified radius indicates the pixel’s \textit{intrinsic} confidence to be correctly predicted. To remove certified radius information, we must require the pixel's outputted confidence scores to be even across all labels, which means the image segmentation models' performance is random guessing. 

A second defense is to design robust image segmentation models 
using the existing defenses. Particularly, we choose the state-of-the-art 
empirical defense fast adversarial training (FastADT)~\cite{wong2020fast}, and 
certified defense SEGCERTIFY~\cite{fischer2021scalable}. 
To avoid the sense of false security~\cite{carlini2019evaluating}, we only 
defend against the white-box CR-PGD attack. 
We first compare FastADT and SEGCERTIFY. 
Note that SEGCERTIFY can only defend against the $l_2$ perturbation. By setting an $l_2$ perturbation budget as 10, the PixAccs with SEGCERTIFY are 21.9\%, 26.3\%, and 17.5\%  on the three datasets, respectively, 
while those with FastADT are 41.4\%, 45.6\%, and 29.2\%, respectively.  
These results show 
that FastADT significantly outperforms SEGCERTIFY. Next, we 
focus on adopting FastADT to defend against CR-PGD. 
Figure~\ref{fig:defense} and Figure~\ref{fig:defense_miou} in Appendix~\ref{app:results} show PixAcc and MIoU with FastADT vs. perturbation budget $\epsilon$ on the three models and datasets against CR-PGD, 
respectively. As a comparison, we also show defense results against the PGD attack. 
We observe that: (i) FastADT achieves an accuracy-robustness tradeoff, i.e., the clean (undefended) pixel accuracy decreases at the cost of maintaining the robust accuracy against the attacks.
(ii) CR-PGD is more effective than PGD against FastADT, which again  verified that pixel-wise certified radius is important to design better attacks.

A third defense is to enhance the existing strongest defenses, i.e., FastADT.  
The current FastADT is trained on adversarial perturbations generated by the state-of-the-art PGD attack and then used to defend against our CR-PGD attack. Here, we propose a variant of FastADT, called CR-FastADT, which assumes the defender knows the details of our CR-PGD attack, and trains on adversarial perturbations generated by our CR-PGD. For instance, we evaluate CR-FastADT on Pascal VOC with PSPNet.  When setting the $L_2$ perturbation  to be 5, 10, 15, the PixAccs with CR-FastADT are 59.8\%, 47.9\%, and 38.7\% respectively, which are 
marginally (1.3\%, 2.3\%, and 2.5\%) higher than those with FastADT, i.e., 58.5\%, 45.6\%, and 36.2\%. This verifies that CR-FastADT is a slightly better defense than FastADT, but  not that much.

\begin{table}[!t]\renewcommand{\arraystretch}{0.85}
\caption{Comparing FGSM and CR-FGSM.}
\footnotesize
\addtolength{\tabcolsep}{-3pt}
\begin{tabular}{c|ccc|ccc}
\hline
\multirow{2}{*}{\bf Model} & \multicolumn{3}{c|}{\bf FGSM}  & \multicolumn{3}{c}{\bf CR-FGSM}                                                 \\ \cline{2-7} 
                  & \multicolumn{1}{c|}{$l_1$} & \multicolumn{1}{c|}{$l_2$} & \multicolumn{1}{c|}{$l_\infty$} & \multicolumn{1}{c|}{$l_1$} & \multicolumn{1}{c|}{$l_2$} & \multicolumn{1}{c}{$l_\infty$} \\ \hline \hline
     {\bf PSPNet-Pascal}             & \multicolumn{1}{c|}{$79.8\%$} & \multicolumn{1}{c|}{$80.7\%$} & $74.7\%$                       & \multicolumn{1}{c|}{$78.7\%$} & \multicolumn{1}{c|}{$79.1\%$} &    $73.2\%$                   \\ \hline
       {\bf HRNet-Cityscape}            & \multicolumn{1}{c|}{$37.7\%$} & \multicolumn{1}{c|}{$63.0\%$} &              $17.0\%$         & \multicolumn{1}{c|}{$36.1\%$} & \multicolumn{1}{c|}{$61.6\%$} & $15.6\%$  \\ \hline
       {\bf PSANet-ADE20K}              & \multicolumn{1}{c|}{$59.4\%$} & \multicolumn{1}{c|}{$59.9\%$} &$54.4\%$                       & \multicolumn{1}{c|}{$57.9\%$} & \multicolumn{1}{c|}{$58.6\%$} &  $53.4\%$                     \\ \hline
\end{tabular}
\label{tbl:FGSM}
\vspace{-2mm}
\end{table}

\section{Discussion}
\label{sec:discussion}
\vspace{-3mm}

\vspace{+0.5mm}
\noindent {\bf Applying our certified radius-guided attack framework to other attacks.} 
In the paper, we mainly apply the pixel-wise certified radius in the 
PGD attack. Actually, it can be also applied in other attacks such as the FGSM attack~\cite{arnab2018robustness}
(Details are shown in the Appendix~\ref{supp:methods}) and
enhance its attack performance as well. 
For instance, Table~\ref{tbl:FGSM} shows the pixel accuracy with FGSM and FGSM with certified radius (CR-FGSM) on the three image segmentation models and datasets.
We set all parameters same as PGD/CR-PGD. For instance,  
the perturbation budget $\epsilon$ 
is $\epsilon=750,1.5,0.006$ for $l_1$, $l_2$, and $l_\infty$  perturbations, respectively. 
We observe that CR-FGSM consistently produces a lower pixel accuracy than FGSM in all cases, showing that certified radius can also guide FGSM to achieve better attack performance.

\vspace{+0.5mm}
\noindent {{\bf Randomized smoothing vs. approximate local Lipschitz based certified radius.}
Weng et al.~\cite{wengevaluating} proposed CLEVER, which uses sampling to estimate the local Lipschitz constant, then derives the certified radius for image classification models. CLEVER has two sampling relevant hyperparameters $N_b$ and $N_s$. In the untargeted attack case (same as our setting), CLEVER requires sampling the model $N_b \times N_s \times|\mathcal{Y}|$ times to derive the certified radius, where $|\mathcal{Y}|$ is the number of classes. The default value of $N_b$ and $N_s$ are 50 and 1024, respectively. We can also adapt CLEVER to derive pixel's certified radius for segmentation models and use it to guide PGD attack.  

With these defaults values, we test that CLEVER is 4 orders of magnitude slower than randomized smoothing. We then reduce $N_b$ and $N_s$. 
When $N_b=10$ and  $N_s=32$, CLEVER is 2 orders of magnitude slower, and its attack effectiveness is less effective than ours. E.g., on Pascal VOC dataset and PSPNet model, with $l_2$ perturbation budget $1.5$, it achieves an attack accuracy $35.4\%$, while our CR-PGD attack achieves $30.9\%$. 
In the extreme case, we set $N_b=10$ and $N_s=1$ and CLEVER is still 1 order of magnitude slower, and it only achieves $41.6\%$ attack accuracy, even worse than vanilla PGD attack which achieves  $39.7\%$. This is because the calculated pixels' certified radii are very inaccurate when $N_s=1$. 
}

\vspace{+0.5mm}
\noindent {\bf Evaluating our attack framework for target attacks.}
Our attack framework is mainly for untargeted attacks, i.e., it aims to misclassify as many pixels as possible, while not requiring what wrong labels to be. This is actually due to the inherent properties of certified radius---if a pixel has a 
small certified radius, 
this pixel is easily misclassified to be \textit{any} wrong label, but not a specific one. 
On the other hand, targeted attacks misclassify pixels to be specific wrong labels. 
Nevertheless, we can still adapt our attack framework to perform the targeted attack, where we replace the existing attack loss for untargeted attacks to be that for target attacks. 
We experiment on Pascal VOC and PSPNet with a random target label and 500 testing images and set $l_2$ perturbation budget as 10. We show that 95.3\% of pixels are misclassified to be the target label using our adapted attack, and 96.5\% of pixels  are misclassified to be the target label using 
the state-of-the-art target attack~\cite{ozbulak2019impact}.
This result means our attack is still effective and achieves comparable results with specially designed targeted attacks.

\vspace{+0.5mm}
\noindent {\bf Comparing white-box CR-PGD vs. black-box CR-BPGD.} 
We directly compare our white-box CR-PGD and black-box CR-BPGD in the same setting, and consider $l_2$ perturbation. Specifically, by setting the perturbation budget $\epsilon=5, 10, 15$, pixel accuracies with CR-PGD are 17.9\%, 13.7\%, and 10.4\%, respectively, while that with CR-BPGD are 61.7\%, 52.9\%, and 45.5\%, respectively. 
The results show that white-box attacks are much more effective than black-box attacks and thus there is still room to improve the performance of black-box attacks.

\vspace{+0.5mm}
\noindent {\bf Adversarial training with CR-PGD samples}. We evaluate CR-PGD for adversarial training and recalculate the CR for the pixels with the (e.g., 20\%) least CR. Using CR-PGD for adversarial training increases the CR of these pixels by 11\%. This shows that CR-PGD can also increase the robustness of ``easily perturbed'' pixels.

\vspace{+0.5mm}
\noindent {\bf Defending against black-box attacks.} 
We use FastADT to defend against our black-box CR-PBGD attack. 
We evaluate FastADT on Pascal VOC with the PSPNet model and set $l_2$ perturbation  to be 10. 
We note that the PixAcc with no defense is 52.9\%, but can be largely increased to 71.5\% when applying FastADT.  As emphasized, 
a defender cares more on defending against the ``strongest" white-box attack, as defending against the ``weakest" black-box attack does not mean the defense is effective enough in practice. This is because an attacker can always leverage better techniques to enhance the attack.

\vspace{+0.5mm}
\noindent {\bf Applying our attacks in the real world and the difficulty.} Let's 
consider tumor detection and traffic sign prediction 
systems. An insider in an insurance company can be a white-box attacker, i.e., s/he knows the tumor detection
algorithm details. S/He can then modify her/his medical images to attack the algorithm offline by using our white-box attack and submitting modified images for fraud insurance claims. An outsider that uses a deployed traffic sign prediction system can be a black-box attacker. For example, s/he can iteratively query the system with perturbed ``STOP'' sign images 
and optimize the perturbation via our black-box attacks. The attack ends when the final perturbed ``STOP'' sign  
is classified, e.g., as a ``SPEED" sign. S/He finally prints this perturbed ``STOP'' sign for physical-world attacks. The difficulties lie in that this attack may involve many queries, but note that there indeed exist physical-world attacks using the perturbed ``STOP" sign~\cite{eykholt2018robust}.

\section{Related Work}
\label{sec:related}

\vspace{-2mm}
\noindent {\bf Attacks.} 
A few white-box attacks~\cite{xie2017adversarial,fischer2017adversarial,hendrik2017universal,arnab2018robustness} have been proposed on image segmentation models.   For instance, Xie et al.~\cite{xie2017adversarial} proposed the first gradient based attack to 
segmentation models. 
Motivated by that pixels are separately classified in image segmentation, they developed the Dense Adversary Generation (DAG) attack 
that considers all pixels together and optimizes the summation of all pixels' losses to generate adversarial perturbations.
Arnab et al.~\cite{arnab2018robustness} presented the first systematic evaluation of adversarial perturbations on modern image segmentation models. They adopted the FGSM~\cite{goodfellow2014explaining,kurakin2016adversarial} and PGD~\cite{madry2018towards} as the baseline attack. 
 Existing black-box attacks, e.g., NES~\cite{ilyas2018black}, SimBA~\cite{guo2019simple} and ZOO~\cite{chen2017zoo}, to image classification models can be adapted to attack image segmentation models. However, they are very query inefficient.

\vspace{+0.5mm}
\noindent {\bf Defenses.}
A few defenses~\cite{xiao2018characterizing,he2019non,bar2019robustness,mao2020multitask,xu2021dynamic,fischer2021scalable} have been proposed recently to improve the robustness of segmentation models against adversarial perturbations. For instance,  Xiao et al.~\cite{xiao2018characterizing} first characterized adversarial perturbations based on spatial context information in segmentation models and then proposed to detect adversarial regions using spatial consistency information. 
Xu et al.~\cite{xu2021dynamic} proposed a dynamic divide-and-conquer adversarial training method. 
Latterly, Fischer et al.~\cite{fischer2021scalable} adopted randomized smoothing to develop the {first} certified segmentation. 
Note that we also use  randomized smoothing in this paper, but  
our goal is not to use it to defend against attacks.

\vspace{+0.5mm}
\noindent {\bf Certified radius.} 
Various methods~\cite{scheibler2015towards,carlini2017provably,ehlers2017formal,katz2017reluplex,wong2017provable,wong2018scaling,raghunathan2018certified,raghunathan2018semidefinite,cheng2017maximum,fischetti2018deep,bunel2018unified,dvijotham2018dual,gehr2018ai2,mirman2018differentiable,singh2018fast,weng2018towards,zhang2018efficient}
have been proposed to derive the certified radius for image classification models against adversarial
perturbations.
However, these methods are not scalable. 
Randomized smoothing~\cite{lecuyer2018certified,li2018second,cohen2019certified,lee2019stratified,zhai2020macer,levine2020robustness,yang2020randomized,kumar2020curse,mohapatra2020higher,wang2020certifying,jia2020certifiedcommunity,wang2021certified,jia2020certified,jia2022almost,hong2022unicr} was the first method to certify the robustness of large 
models and achieved the state-of-the-art certified radius. For instance, Cohen et al.~\cite{cohen2019certified} obtained a tight $l_2$ certified radius 
with Gaussian noise on normally trained image classification models. 
Salman et al.~\cite{salman2019provably} improved 
\cite{cohen2019certified} by combining the design of an adaptive attack against smoothed soft image classifiers and adversarial training on the attacked classifiers. 
Many follow-up works~\cite{jia2020certified,li2021tss,fischer2020certified,wang2021certified,jia2020certifiedcommunity,bojchevski2020efficient,chiang2020detection} extended randomized smoothing in various ways and applications. For instance, 
Fischer et al.~\cite{fischer2020certified} and Li et al.~\cite{li2021tss} proposed to 
certify the robustness against geometric perturbations. 
Chiang et al.~\cite{chiang2020detection} introduced median smoothing and applied it to certify object detectors. 
Jia et al.~\cite{jia2020certifiedcommunity} and Wang et al.~\cite{wang2021certified}
applied randomized smoothing in the graph domain and derived the certified radius 
for community detection, node/graph classifications methods against graph structure perturbation.

In contrast, we propose to leverage certified radius 
derived by randomized smoothing 
to design better attacks against image segmentation models\footnote{Concurrently, we note that \cite{wang2023turning} also proposed more effective attacks to graph neural networks based on certified robustness.}.

\section{Conclusion}
\label{sec:conclusion}

\vspace{-2mm}
We study attacks to image segmentation models. 
Compared with image classification models, image segmentation models have 
richer information (e.g., predictions on each pixel instead of on an whole image) 
that can be exploited by attackers. 
We propose to leverage {certified radius}, the first work that uses it from the attacker perspective, and derive the pixel-wise certified radius based on randomized smoothing. 
Based on it, we design a {certified radius}-guide attack framework  for both white-box and black-box attacks. Our framework can be seamlessly incorporated into any existing attacks to design more effective attacks.
Under black-box attacks, 
we also design a random-free gradient estimator based on bandit:
it is query-efficient, unbiased and stable. 
We use our gradient estimator to instantiate PBGD and certified radius-guided PBGD attacks, both with a tight sublinear regret bound.    
Extensive evaluations verify the effectiveness and generalizability of our certified radius-guided attack framework.

\vspace{+0.5mm}
\noindent{\bf Acknowledgments.} 
We thank the anonymous reviewers for their constructive feedback. 
This work was supported by Wang's startup funding, Cisco Research Award, and National Science Foundation under grant No. 2216926.

\bibliographystyle{IEEEtran}
\bibliography{refs}

\appendices

\section{Appendices}

\begin{algorithm}[h]
\small
\caption{\small Certified radius-guided white-box PGD attacks to image segmentation models}
\label{alg:cr_PGD}
\LinesNumbered
\KwIn{Image segmentation model $F_\theta$, testing image $x$ with pixel labels $y$, perturbation budget $\epsilon$, learning rate $\alpha$, \#iterations $T$, weight parameters $a$ and $b$, \#samples $M$, Gaussian variance $\sigma$, interval INT.}
\KwOut{Adversarial perturbation $\delta^{(T)}$.}
Initialize: $\delta^{(0)} = 0 \in \mathbb{B} = \{ \delta:  \|\delta\|_p \leq \epsilon \}$. \\
\For{$t=0,1, \cdots,T-1$}{
	\eIf {t \textrm{mod} INT != 0}
	{ 
	Reuse the pixel weights:
	}
	{
	Sample $M$ noises: $\beta_j \sim \mathcal{N}(0,\sigma^2 I), j\in\{1,2,...,M\}$; \\
Define the perturbed image: $x^{(t)} = x + \delta^{(t)}$; \\
Compute the smoothed segmentation model: $G(x^{(t)}) = \frac{1}{M} \sum_{j=1}^M F_\theta(x^{(t)} +\beta_j)$; \\
Estimate the lower bound probabilities of the top label for each pixel $x^{(t)}_n$:
$ \underline{p_{n}} = \max_c {G(x^{(t)})_{n,c}}$; \\
	Calculate the certified radius for each pixel $x^{(t)}_n$: 
$cr(x^{(t)}_n) =\sigma \Phi^{-1}(\underline{p_{n}})$; \\
Assign a weight to each pixel $x^{(t)}_n$: 
	}

Define the certified radius-guided loss:
$L_{cr}(F_\theta(x^{(t)}),y) = \frac{1}{N} \sum_{n=1}^N w^{(t)}_n \cdot L(F_\theta(x_n^{(t)}), y_n) $; \\
	Run CR-PGD to update the adversarial perturbation: $\delta^{(t+1)}=\textrm{Proj}_{\mathbb{B}} (\delta^{(t)} + \alpha \cdot \nabla_{\delta^{(t)}} L_{\textrm{cr}}(F_\theta(x^{(t)}),y))$.
}
\Return{$\delta^{(T)}$}
\end{algorithm}

\setlength{\textfloatsep}{1mm}

\begin{algorithm}[!th]
\small
\caption{\small Black-box PBGD and CR-PBGD to image segmentation models}
\label{alg:BGD_attack}
\LinesNumbered
\KwIn{Image segmentation model $F_\theta$, testing image $x$ with pixel labels $y$, perturbation budget $\epsilon$, learning rate $\alpha$, parameter $\gamma$, \#iters $T$, {weight parameters $a$ and $b$, \#samples $M$, Gaussian variance $\sigma$}, CRFlag, interval INT.}
\KwOut{Adversarial perturbation $\delta^{(T)}$.}
Initialize: $\delta^{(0)} = 0 \in \mathbb{B} = \{ \delta:  \|\delta\|_p \leq \epsilon \}$. \\
\For{$t=0,1,\cdots,T-1$}{
Sample a unit vector $\bm{u}^{(t)} \in \mathbb{S}_p$ uniformly  at random\;
    Query 
    $F_\theta$ twice to obtain predictions: $p_1 = F_\theta(x+\delta^{(t)}+\gamma \bm{u}^{(t)})$ and $p_2 = F_\theta(x+\delta^{(t)}-\gamma \bm{u}^{(t)})$\;
    	
	\eIf { (CRFlag == false) }
	{ 
	/*{\bf PBGD: Without pixel-wise certified radius}*/ \\
	Obtain the loss feedback:\\
	$\ell_1 = L(p_1,y)$ and 
	$\ell_2 = L(p_2,y)$;
	
	Estimate the gradient: $\tilde{g} = \frac{N}{2\gamma}\big( \ell_1-\ell_2 \big)\bm{u}^{(t)}$;
	
	Run PBGD to update the adversarial perturbation: 
	$\delta^{(t+1)}=\textrm{Proj}_{\mathbb{B}} (\delta^{(t)} + \alpha \cdot \tilde{g})$. 
	}
	{
	{/*{\bf CR-PBGD: With pixel-wise certified radius}*/} \\

	Calculate the pixel-wise certified radius and define $L_{cr}$ using $F_\theta, a,b, M, \sigma, t$ as in Algorithm~\ref{alg:cr_PGD};
	
	Obtain the certified radius-guided loss feedback:\\
	$\ell_{cr,1} = L_{cr}(p_1,y)$ and  
	$\ell_{cr,2} = L_{cr}(p_2,y)$;
	
    Estimate the gradient: $\tilde{g}_{cr} = \frac{N}{2\gamma}\big( \ell_{cr,1}-\ell_{cr_2} \big)\bm{u}^{(t)}$;
    
    Run CR-PBGD to update the adversarial perturbation: 
	$\delta^{(t+1)}=\textrm{Proj}_{\mathbb{B}} (\delta^{(t)} + \alpha \cdot \tilde{g}_{cr})$.
	}
}
\Return{$\delta^{(T)}$}
\end{algorithm} 
 
\setlength{\textfloatsep}{1mm}

 \begin{algorithm}[!t]
\small
\caption{DAG for $l_\infty$ perturbation}
\label{alg:DAG}
\LinesNumbered
\KwIn{Segmentation model $F_\theta$,  testing image $x$ with label $y$, 
\#epochs $T$, perturbation budget $\epsilon$.}
\KwOut{Adversarial perturbation $\delta$.}
Initialize: $x^{(1)} \leftarrow x$, $\delta^{(0)} = 0$. \\
\For{$t=1,2,\cdots,T$}{

$\delta^{(t)} \leftarrow \nabla_{\delta^{(t)}} F_\theta(x^{(t)} + \delta^{(t-1)},y) - \nabla_{\delta^{(t)}} F_\theta(x^{(t)}, y)$ \\
 
$\delta^{(t)} = \frac{0.5 \delta^{(t)}}{||\delta^{(t)}||_\infty}$ \\

$\delta = \textrm{Clip}(\delta + \delta^{(t)}, \epsilon)$ \\
$x^{(t+1)} = x^{(t)} + \delta^{(t)}$ \\
}
\Return{$\delta$}
\end{algorithm}

\subsection{Dataset Details} 
\label{app:datasets}
Detailed of the used datasets are as below:

\begin{itemize}[leftmargin=*]
    \item {\bf Pascal VOC~\cite{pascal-voc-2012}.} It 
consists of internet images labeled with 21 different classes. 
The training set has 10582 images when combined with additional annotations from ~\cite{SBD}.
The validation set contains 1449 images.

\item {\bf Cityscapes~\cite{cordts2016cityscapes}.} It consists
of road-scenes captured from car-mounted cameras and has
19 classes. The training set totals 2975 images and the validation set has 500 images. As the dataset contains
high-resolution images (2048×1024 pixels) that require
too much memory for segmentation models, we resize all images to 1024x512 before training, same as~\cite{arnab2018robustness}. 

\item {\bf ADE20K~\cite{zhou2019semantic}.} It is a densely annotated image dataset that covers 365 scenes and 150 object categories with pixel-wise
annotations for scene understanding. 
The dataset contains 27,574 images, where 25,574 images form the training set and the remaining 2,000 images as the testing set. 

\end{itemize}

\begin{figure*}[!t]
\centering
\subfigure[Pascal VOC]{\includegraphics[width=0.32\textwidth]{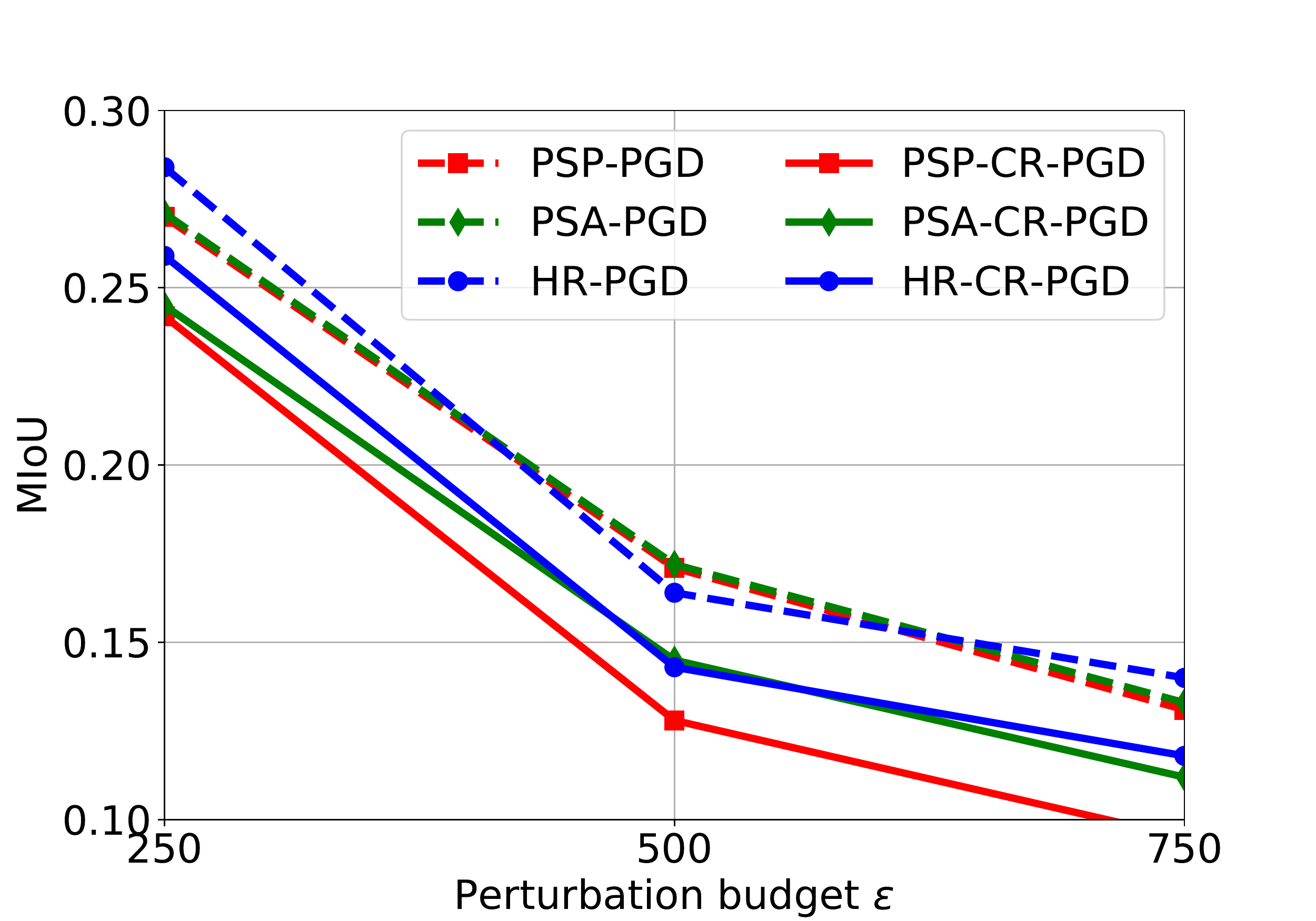}}
\subfigure[Cityscapes]{\includegraphics[width=0.32\textwidth]{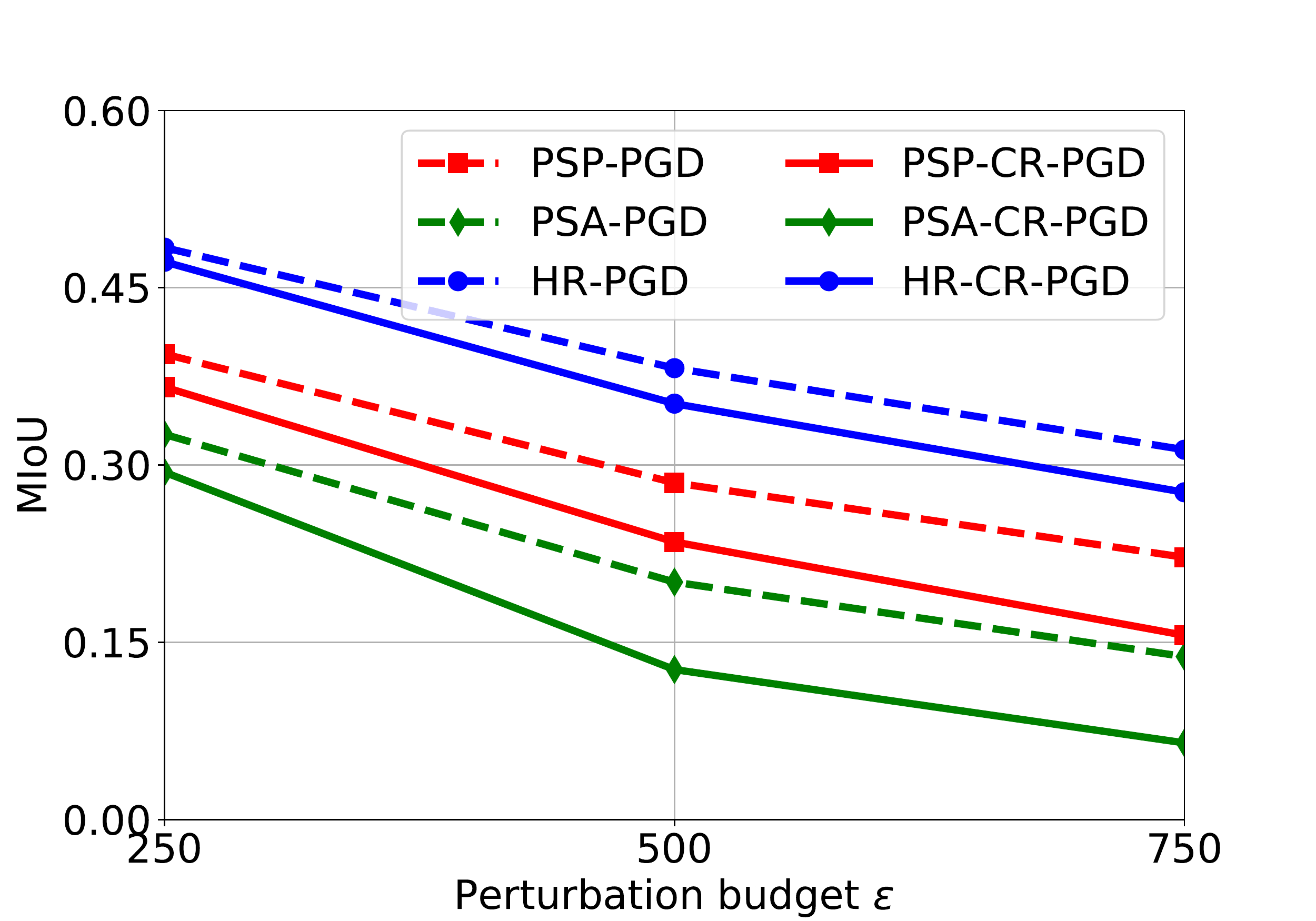}}
\subfigure[ADE20K]{\includegraphics[width=0.32\textwidth]{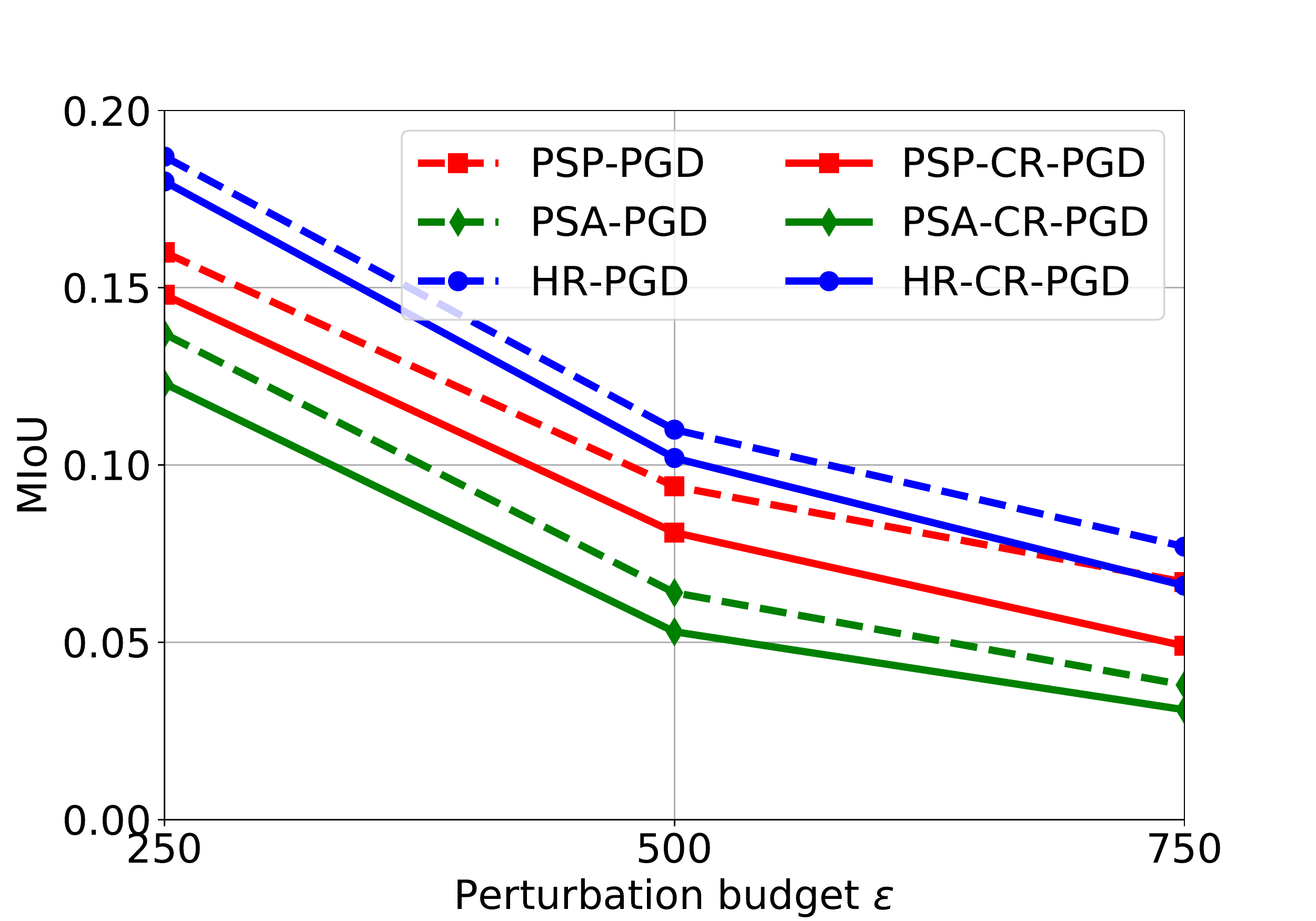}}
\vspace{-2mm}
\caption{MIoU after PGD and CR-PGD attacks with $l_1$ perturbation vs. perturbation budget $\epsilon$ on the three models and datasets} 
\label{fig:whitel1_mio}
\vspace{-4mm}
\end{figure*}

\begin{figure*}[!t]
\centering
\subfigure[Pascal VOC]{\includegraphics[width=0.32\textwidth]{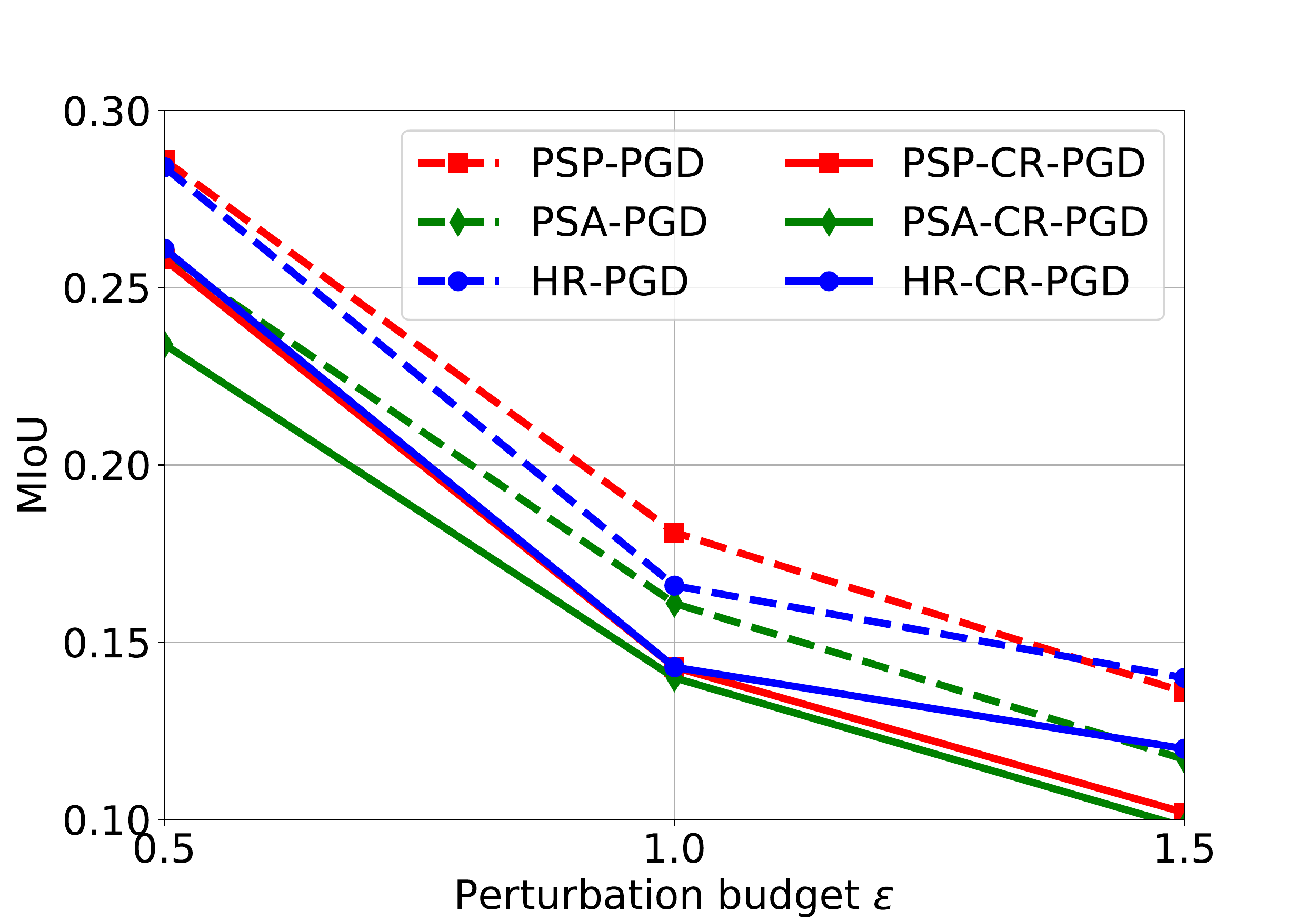}}
\subfigure[Cityscapes]{\includegraphics[width=0.32\textwidth]{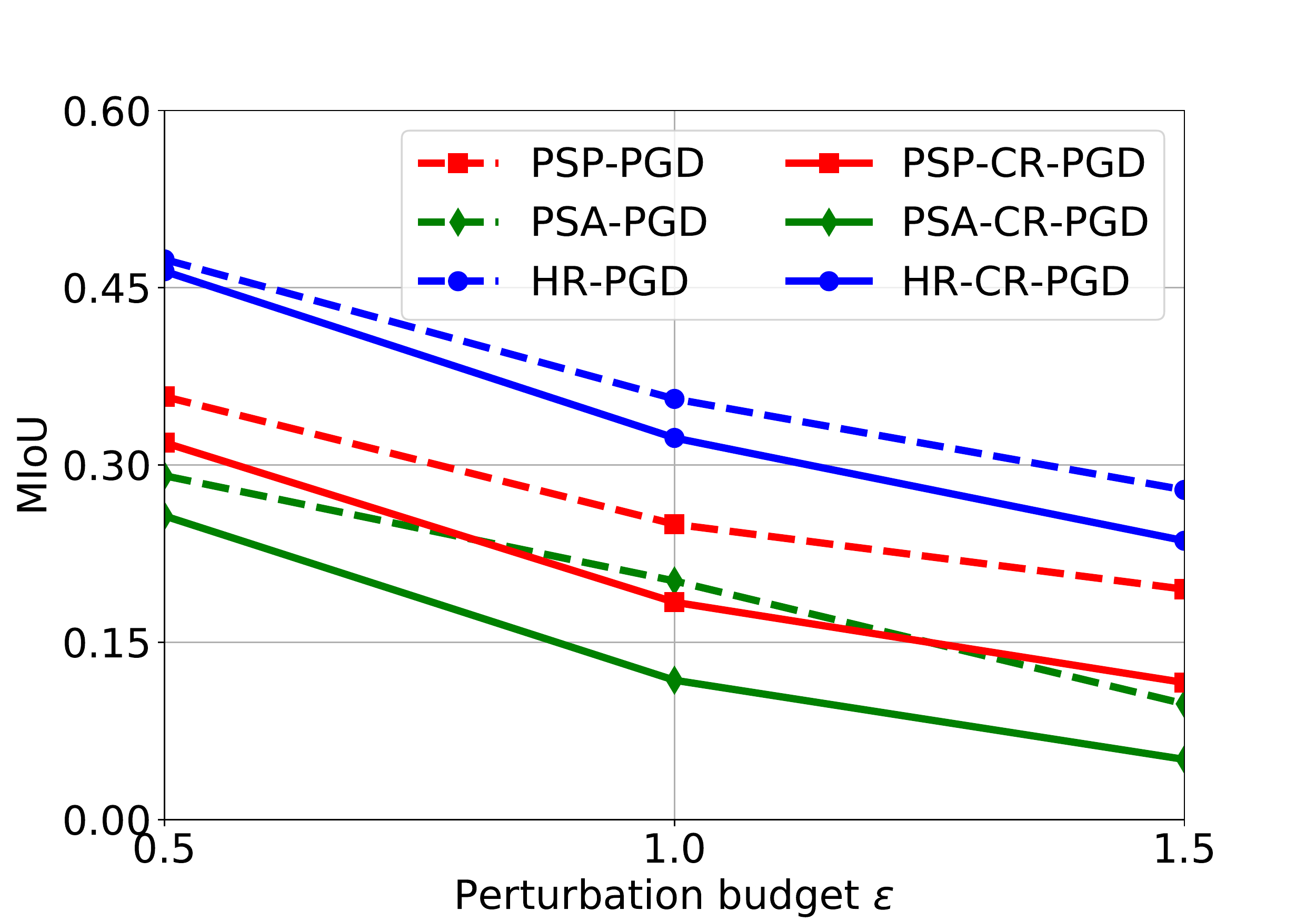}}
\subfigure[ADE20K]{\includegraphics[width=0.32\textwidth]{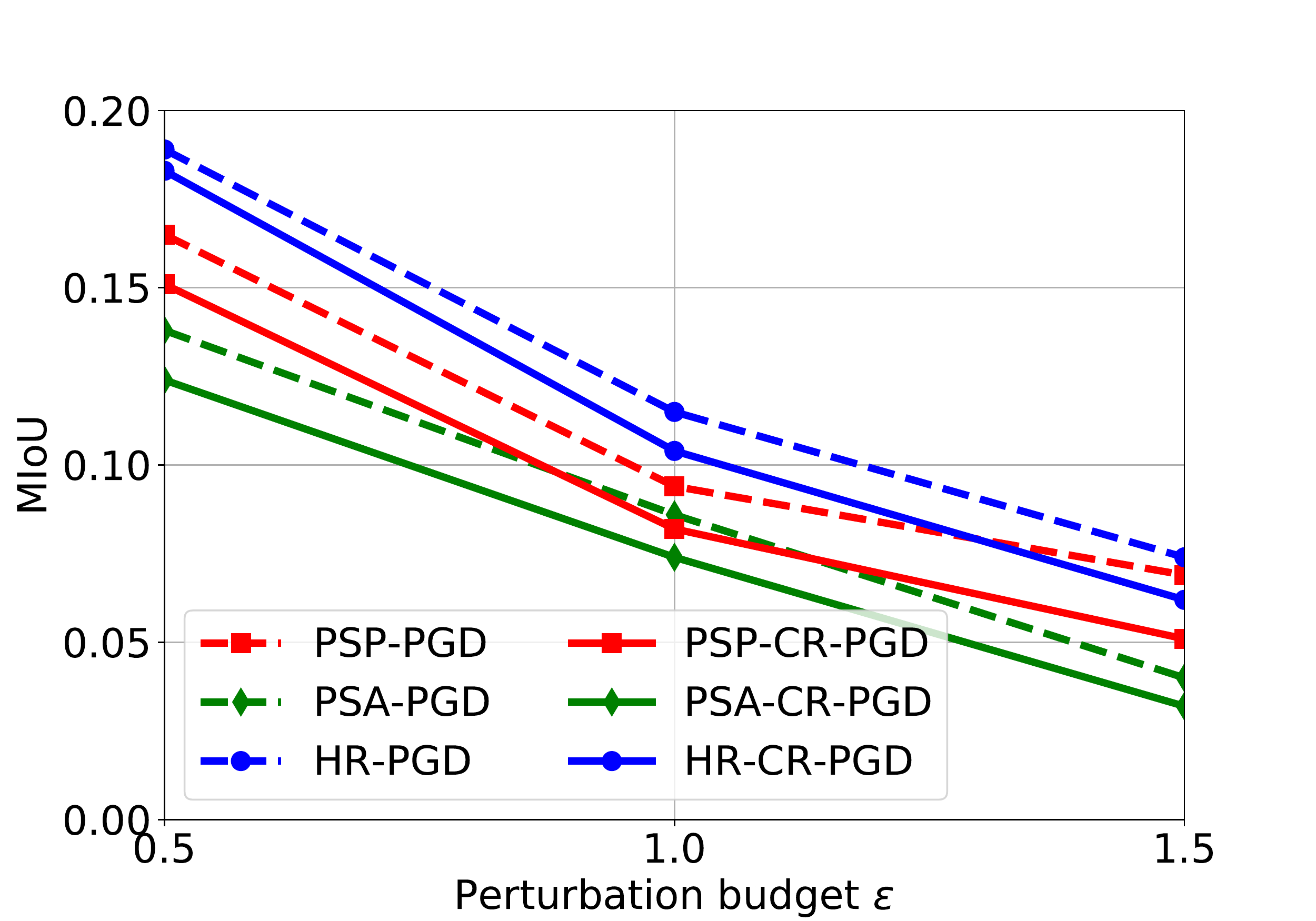}}
\vspace{-2mm}
\caption{MIoU after PGD and CR-PGD attacks with $l_2$ perturbation vs. perturbation budget $\epsilon$ on the three models and datasets} 
\label{fig:whitel2_mio}
\vspace{-4mm}
\end{figure*}

\begin{figure*}[!t]
\centering
\subfigure[Pascal VOC]{\includegraphics[width=0.32\textwidth]{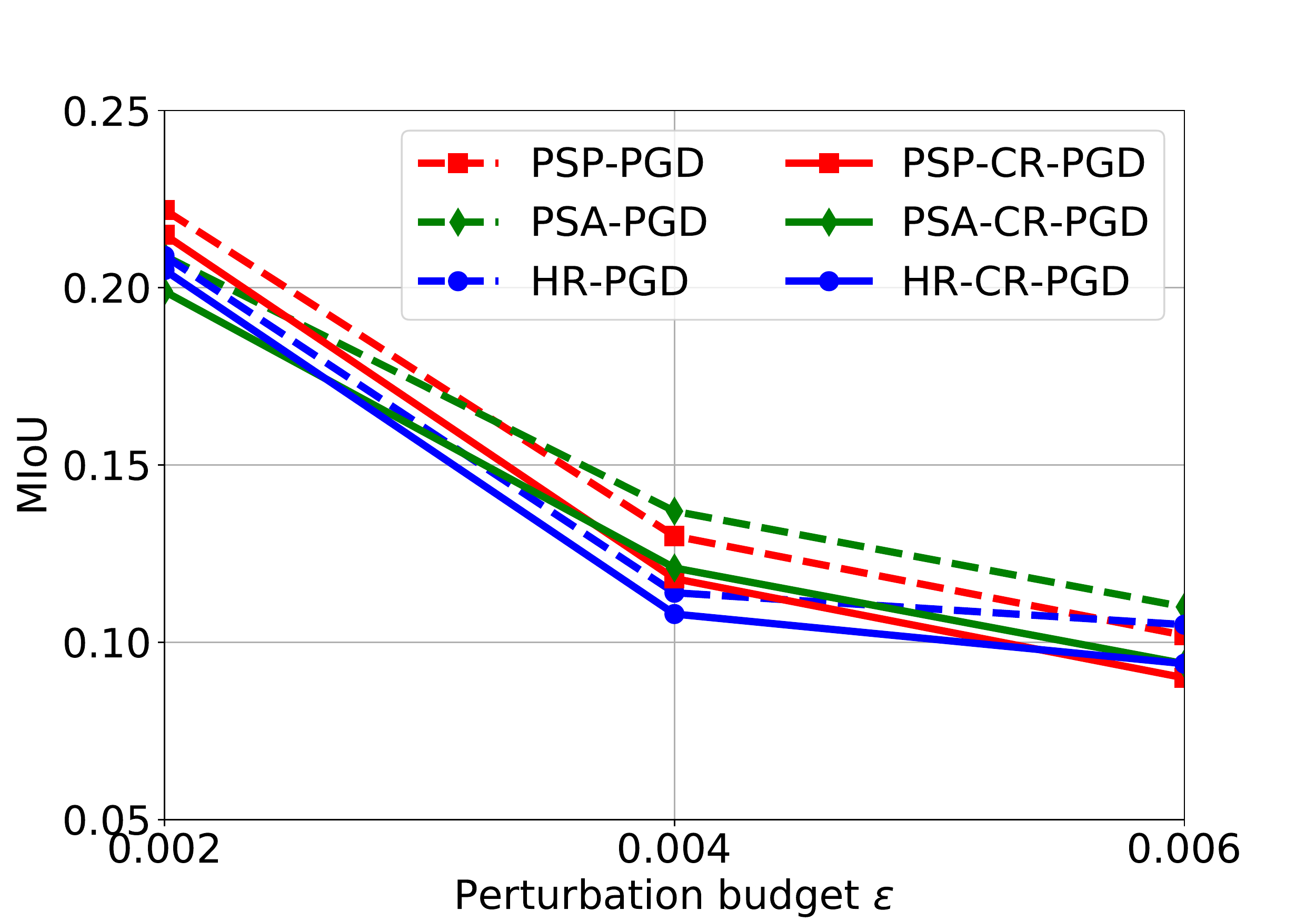}}
\subfigure[Cityscapes]{\includegraphics[width=0.32\textwidth]{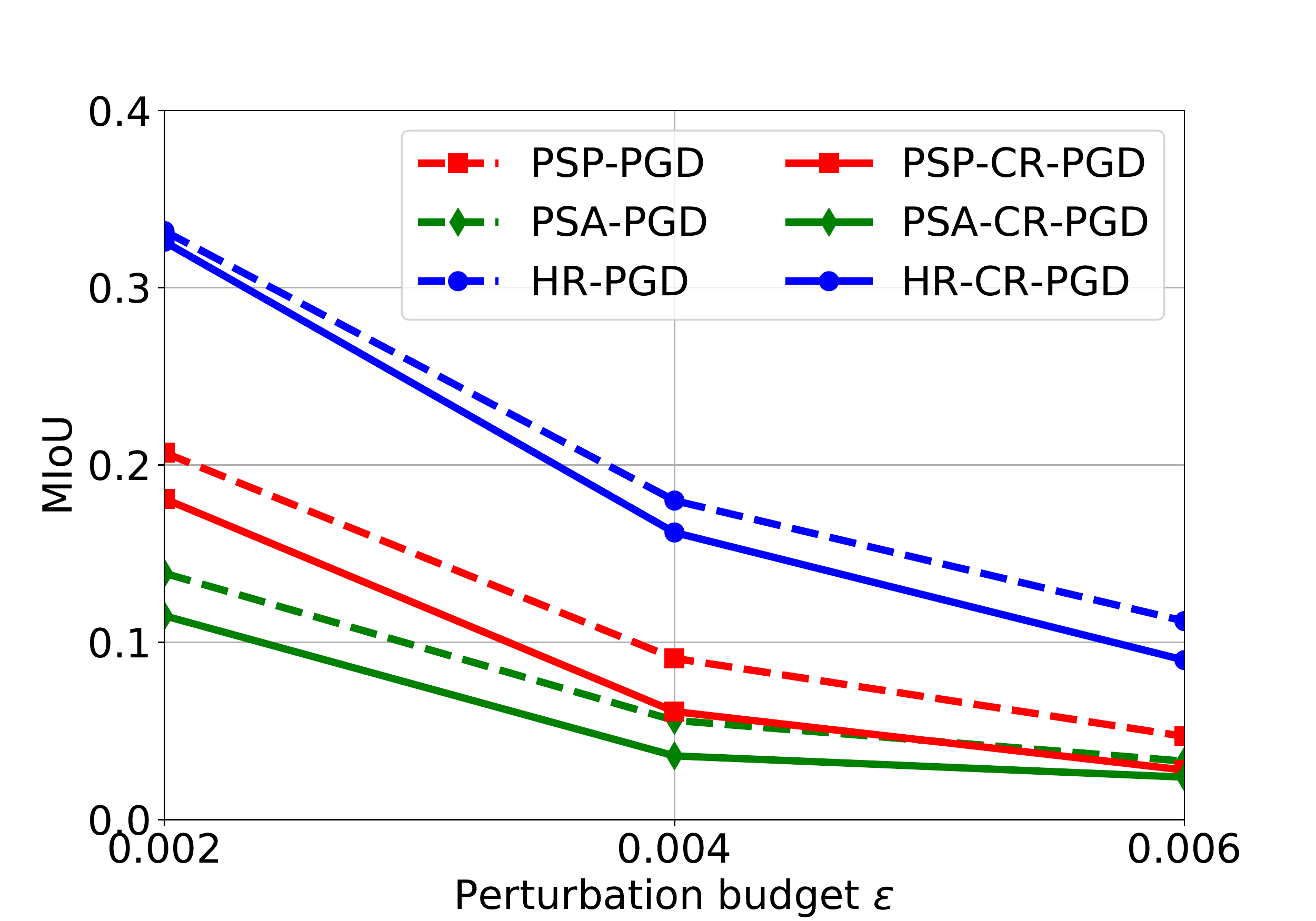}}
\subfigure[ADE20K]{\includegraphics[width=0.32\textwidth]{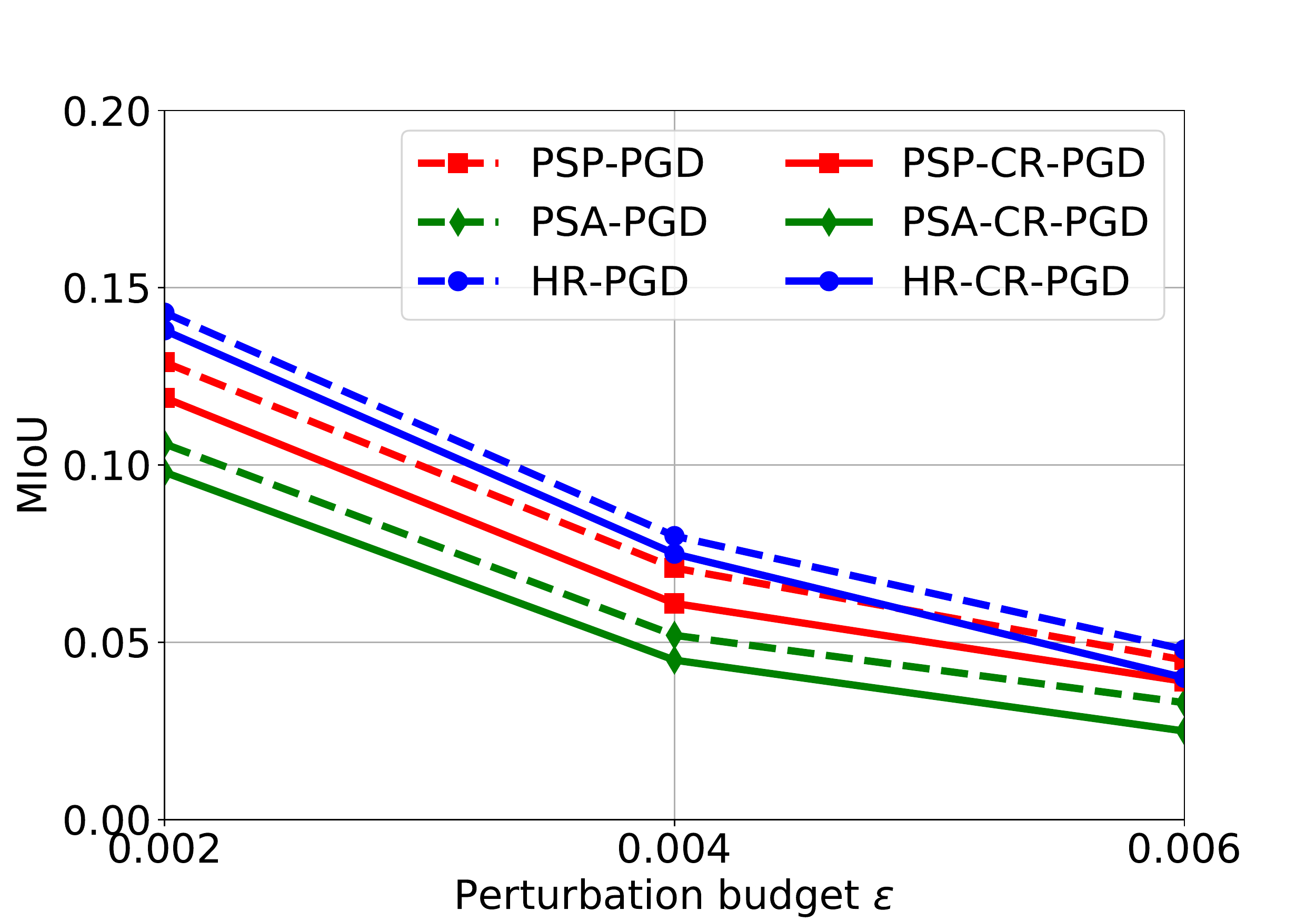}}
\vspace{-2mm}
\caption{MIoU after PGD and CR-PGD attacks with $l_\infty$ perturbation vs. perturbation budget $\epsilon$ on the three models and datasets} 
\label{fig:whitelinf_mio}
\vspace{-2mm}
\end{figure*}

\subsection{More Experimental Results}
\label{app:results}

\noindent {\bf MIoU after PGD and CR-PGD attacks.} See Figures~\ref{fig:whitel1_mio}, \ref{fig:whitel2_mio}, and ~\ref{fig:whitelinf_mio} for $l_1$, $l_2$, and $l_\infty$ perturbations, respectively. We can observe that our CR-PGD outperforms PGD.

\noindent {\bf Impact of $M$ and $\sigma$ on MIoU.} See Figures~\ref{fig:white_param_miou_m} and~\ref{fig:white_param_miou_s}, respectively. We see that CR-PGD is insensitive to $M$ and a relatively larger $\sigma$ will negatively impact our CR-PGD. 

{\noindent {\bf Impact of $\sigma$ on more datasets and models.} See Figures~\ref{fig:white_param_acc_sl1}-\ref{fig:white_param_acc_slinf}. We see that CR-PGD is insensitive to $\sigma$ within a range.} 

\noindent {\bf Defense results on MIoU.} See Figure \ref{fig:defense_miou} the defense results of FastADT on 
MIoU against CR-PGD with different $l_p$ perturbations.

\noindent {\bf Black-box attack results on MIoU.} See Figure~\ref{fig:query_mio} the impact of iterations/queries on MIoU with our PBGD and CR-PBGD black-box attacks and the $l_2$ perturbation on the three datasets.

\begin{figure*}[!t]
\centering
\vspace{-2mm}
\subfigure[$l_1$ perturbation]{\includegraphics[width=0.32\textwidth]{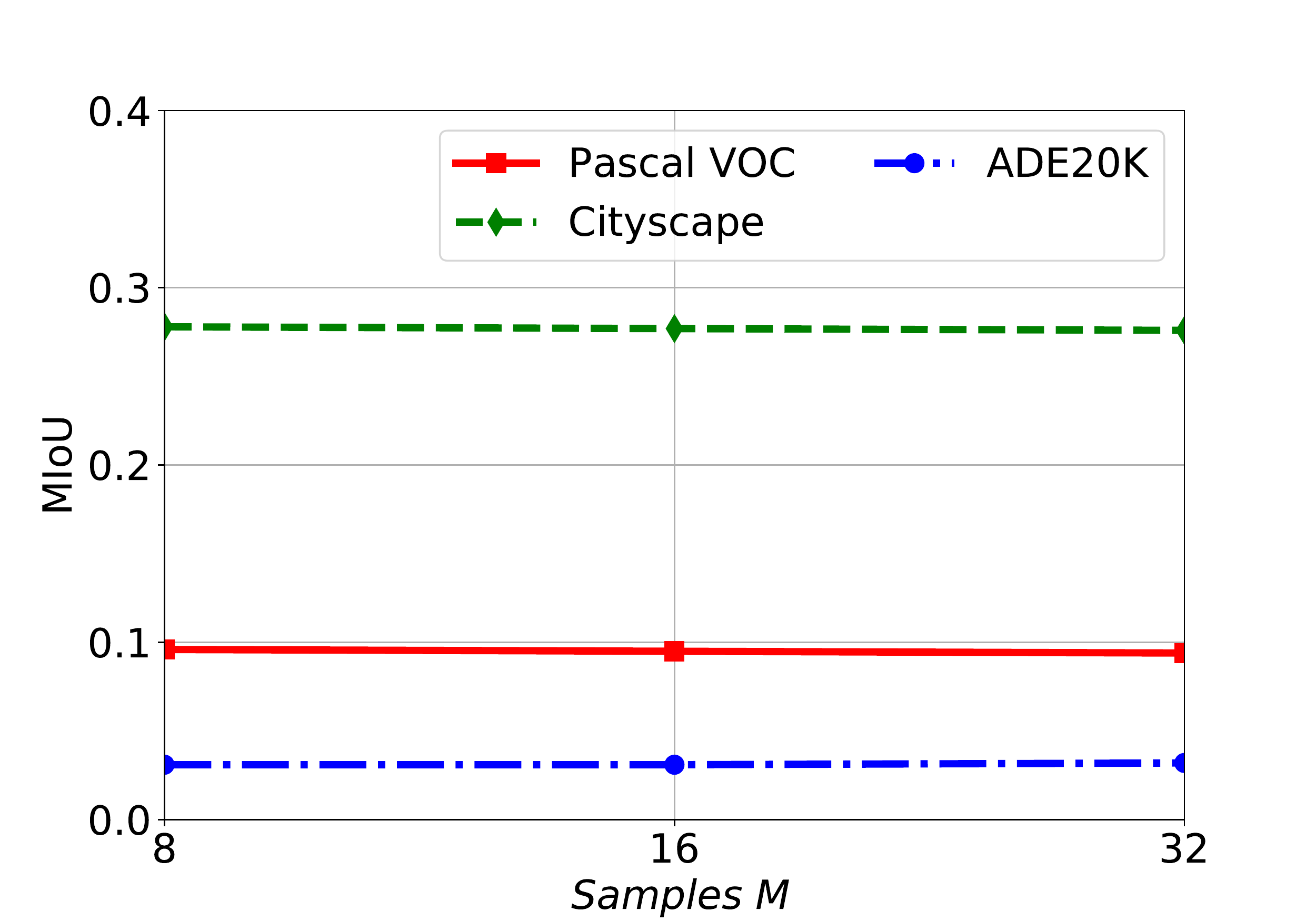}}
\subfigure[$l_2$ perturbation]{\includegraphics[width=0.32\textwidth]{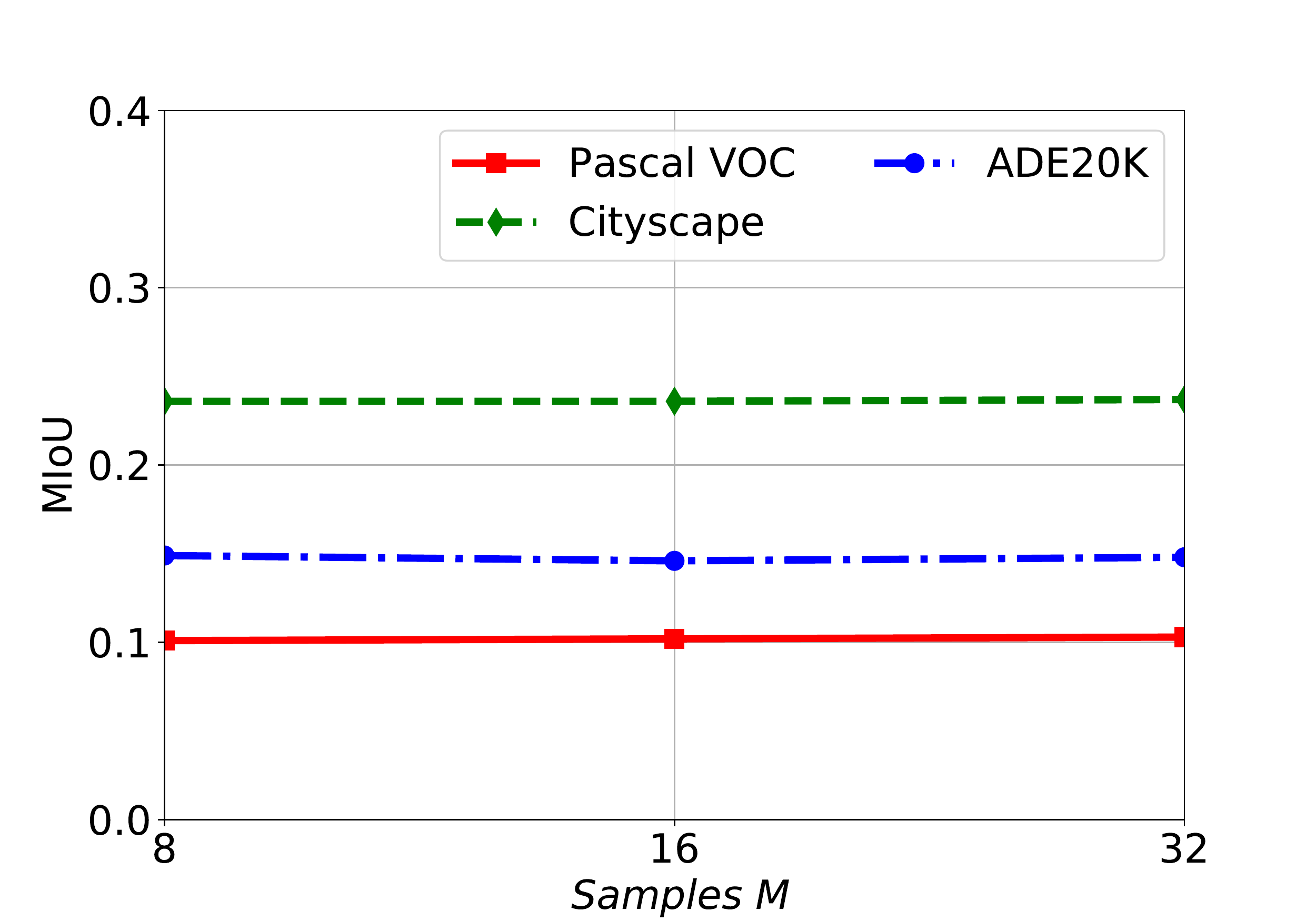}}
\subfigure[$l_\infty$ perturbation]{\includegraphics[width=0.32\textwidth]{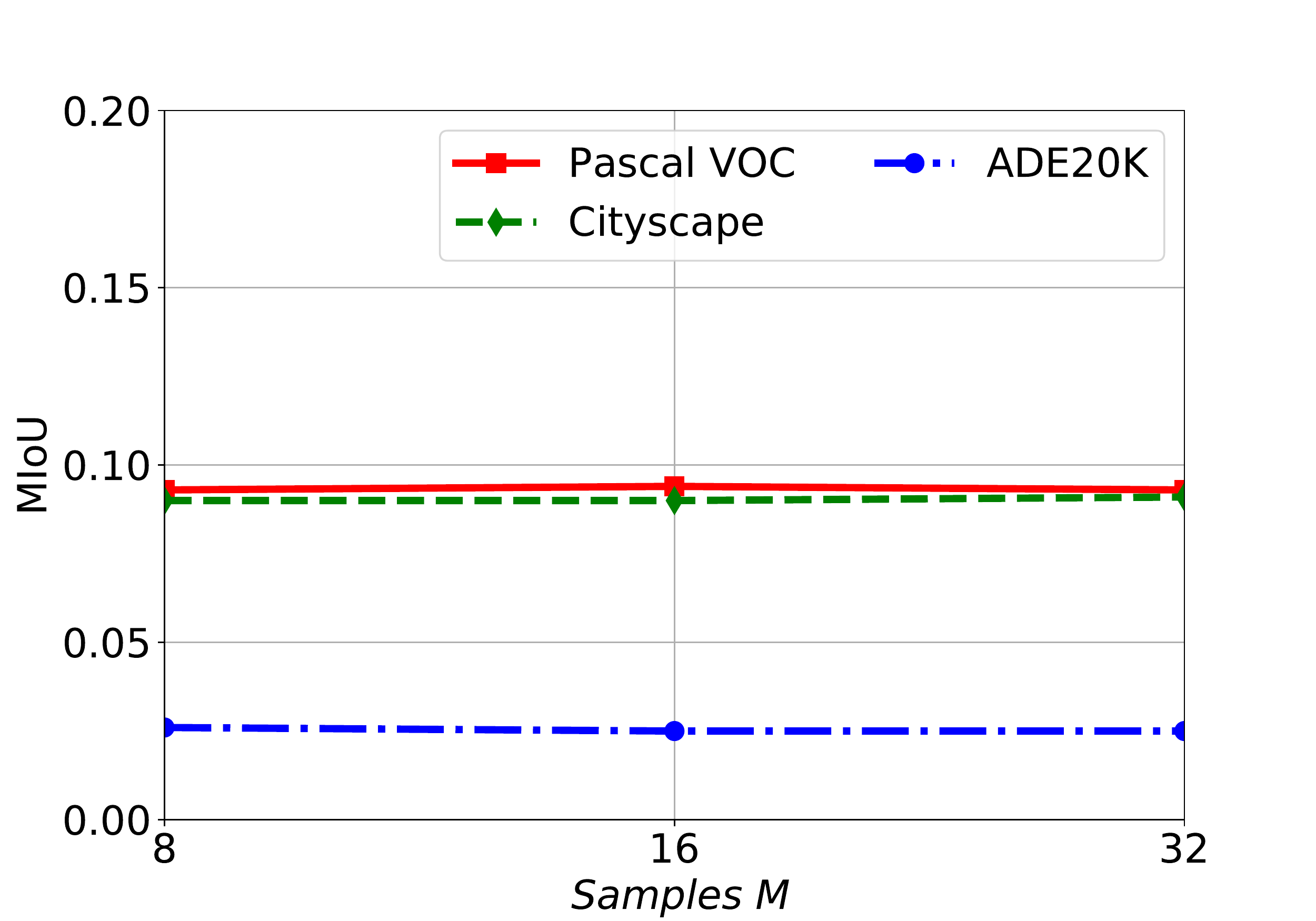}}
\vspace{-2mm}
\caption{MIoU: Impact of the \#samples $M$ on our CR-PGD attack with different $l_p$ perturbations.} 
\label{fig:white_param_miou_m}
\vspace{-4mm}
\end{figure*}

\begin{figure*}[!t]
\centering
\vspace{-2mm}
\subfigure[$l_1$ perturbation]{\includegraphics[width=0.32\textwidth]{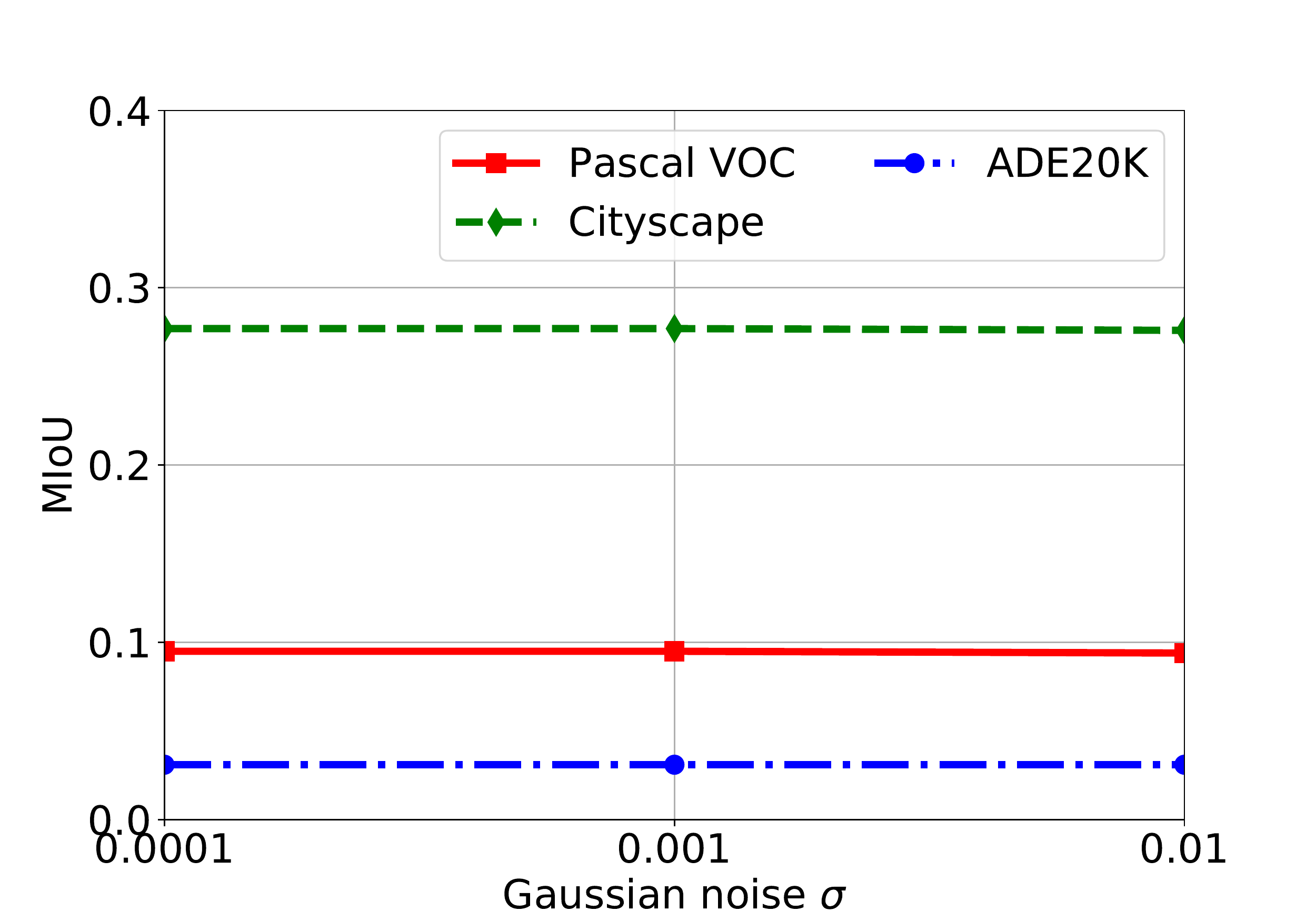}}
\subfigure[$l_2$ perturbation]{\includegraphics[width=0.32\textwidth]{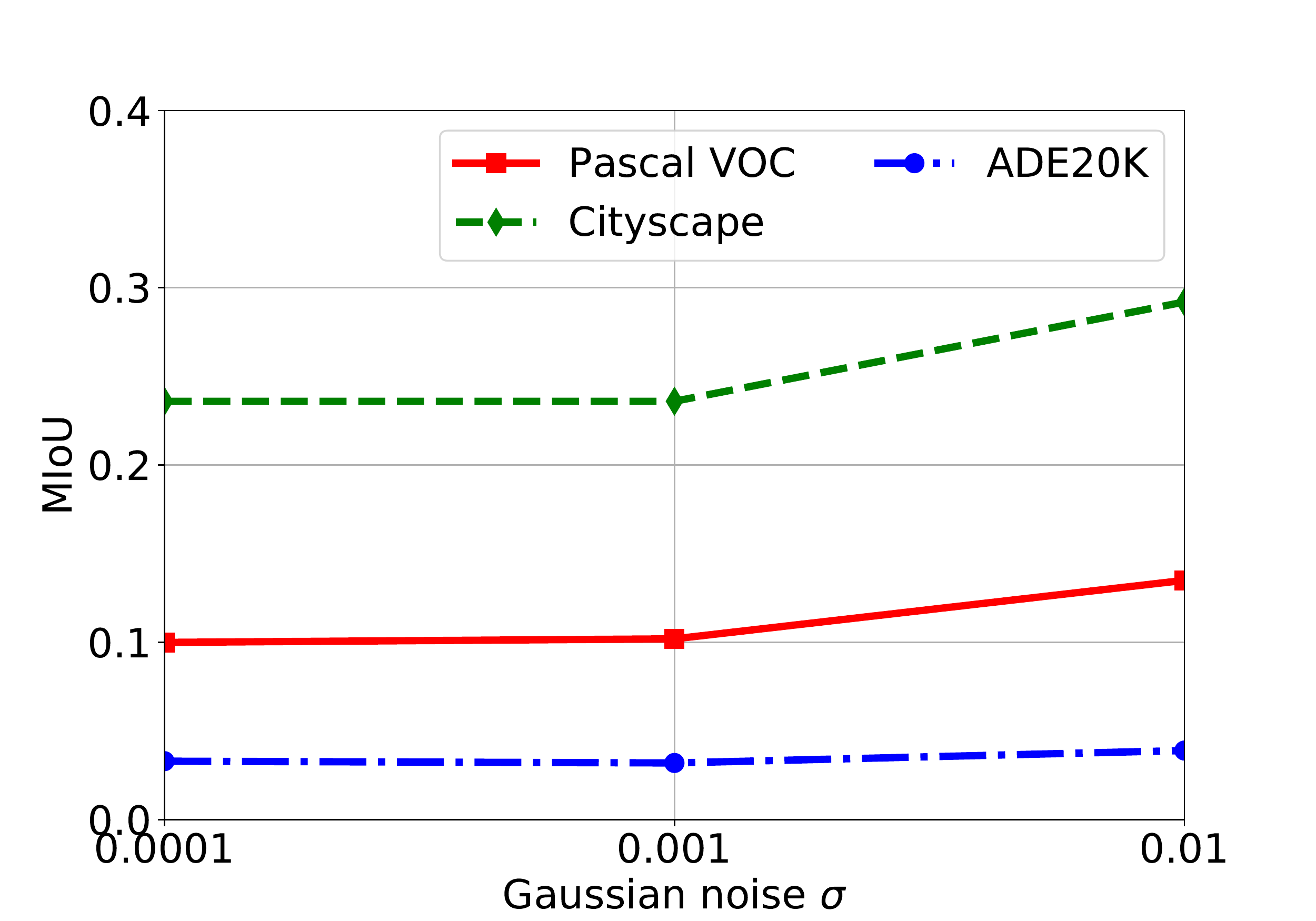}}
\subfigure[$l_\infty$ perturbation]{\includegraphics[width=0.32\textwidth]{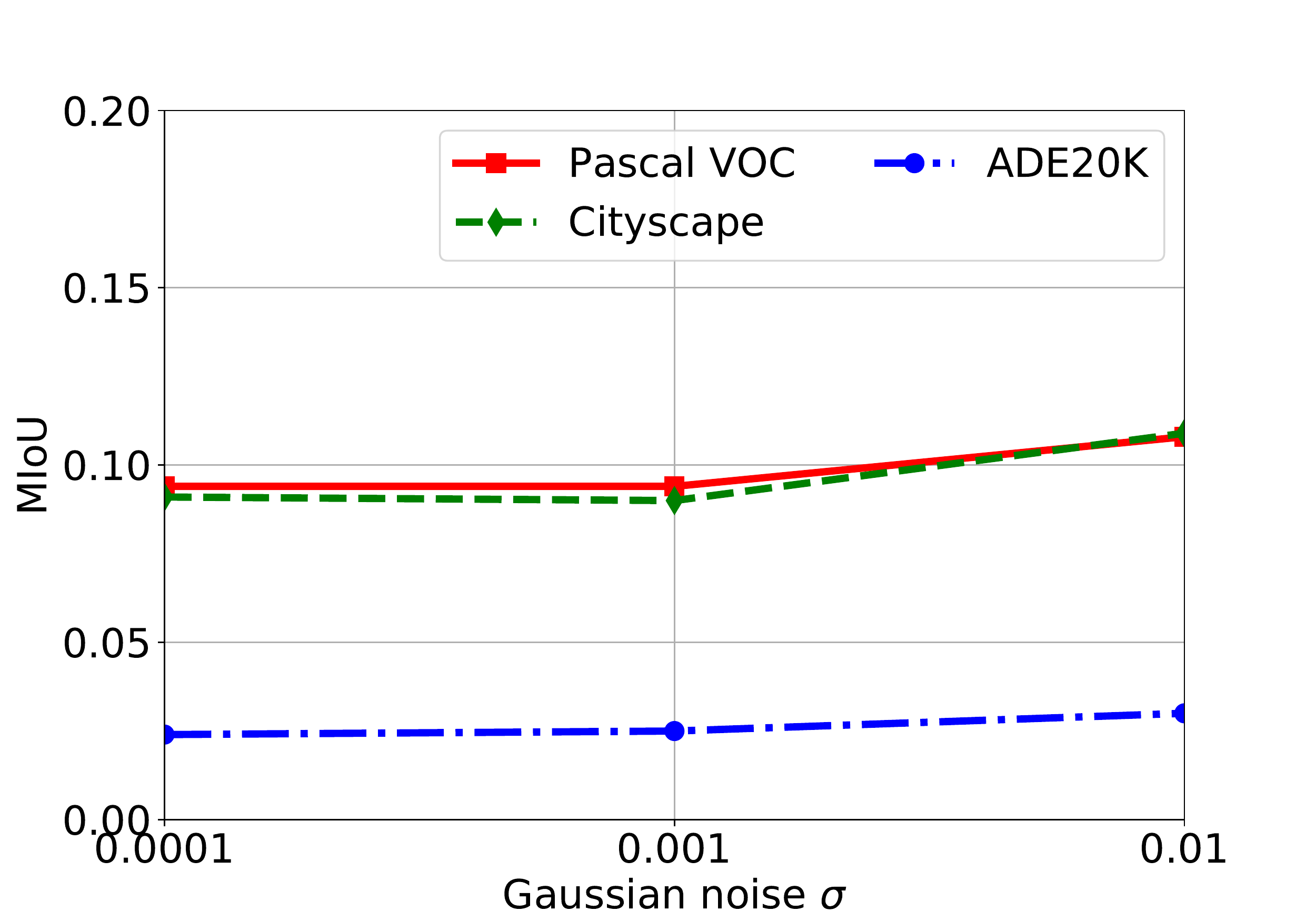}}
\vspace{-2mm}
\caption{MIoU: Impact of the Gaussian noise $\sigma$ on our CR-PGD attack with different $l_p$ perturbations.} 
\label{fig:white_param_miou_s}
\vspace{-4mm}
\end{figure*}

\begin{figure*}[!t]
\centering
\vspace{-2mm}
\subfigure[{Pascal VOC}]{\includegraphics[width=0.32\textwidth]{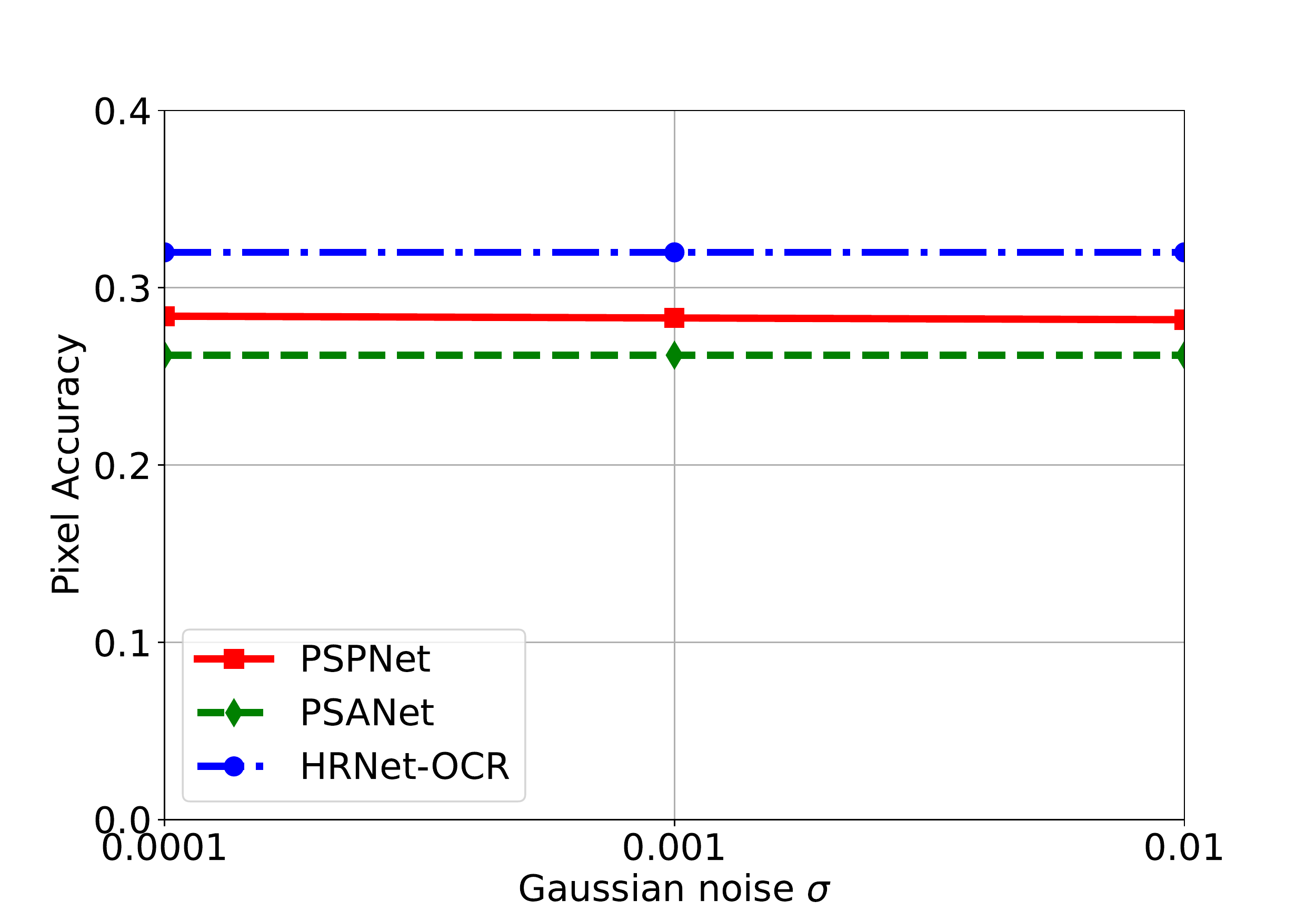}}
\subfigure[{Cityscape}]{\includegraphics[width=0.32\textwidth]{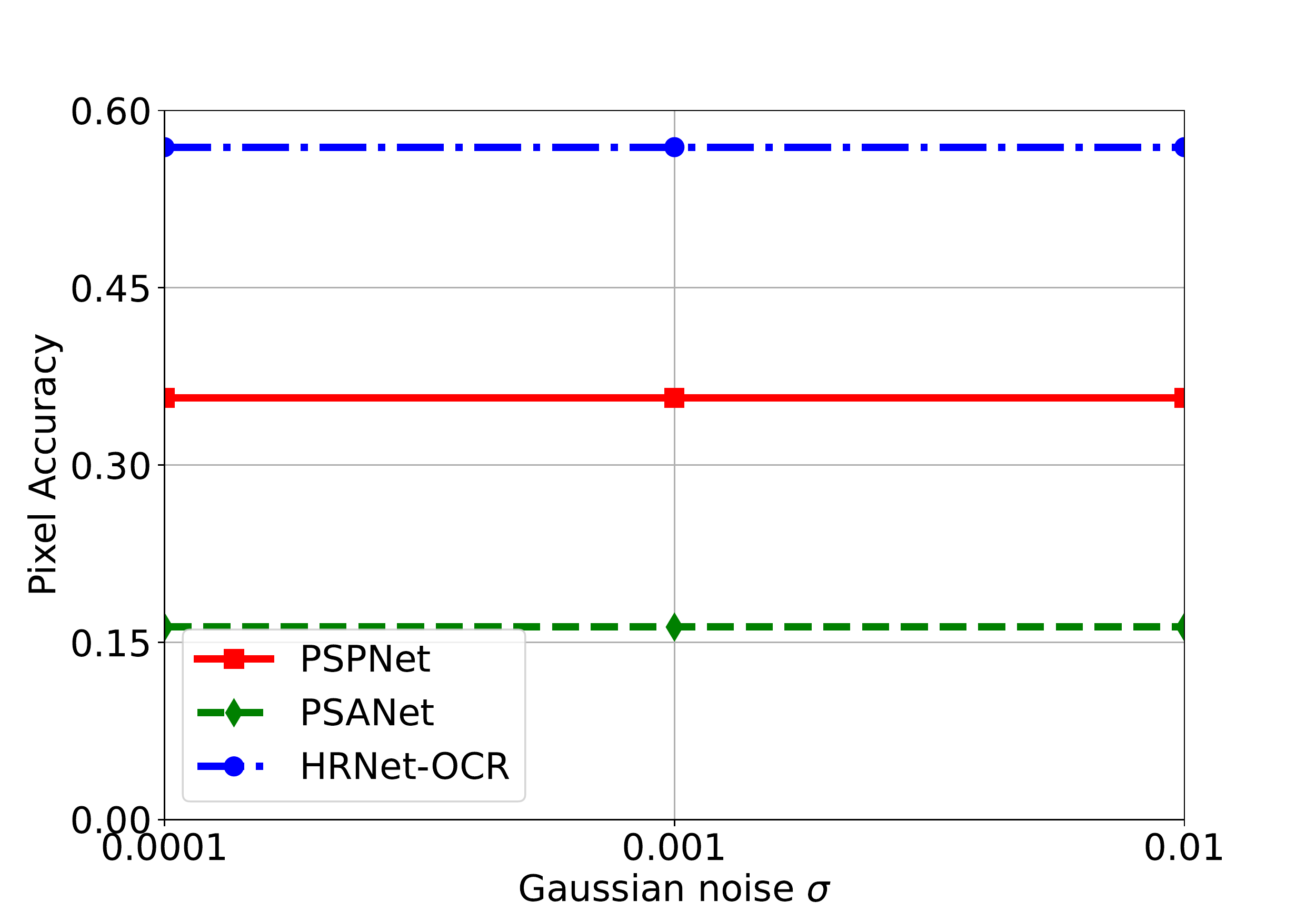}}
\subfigure[{ADE20K}]{\includegraphics[width=0.32\textwidth]{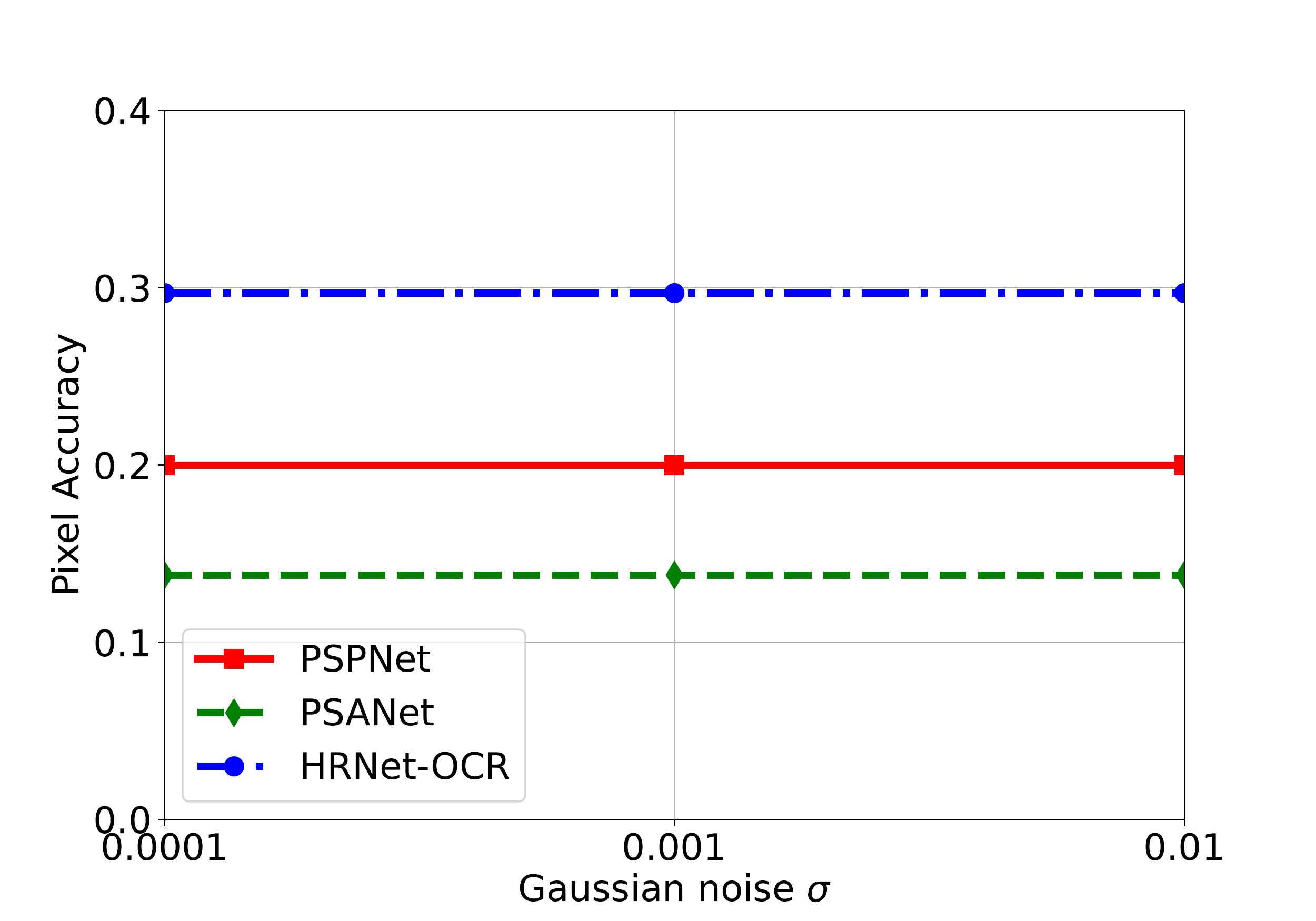}}
\vspace{-2mm}
\caption{{Impact of the Gaussian noise $\sigma$ on our CR-PGD attack with different datasets and models, under $l_1$ perturbation.} }
\label{fig:white_param_acc_sl1}
\vspace{-4mm}
\end{figure*}

\begin{figure*}[!t]
\centering
\vspace{-2mm}
\subfigure[{Pascal VOC}]{\includegraphics[width=0.32\textwidth]{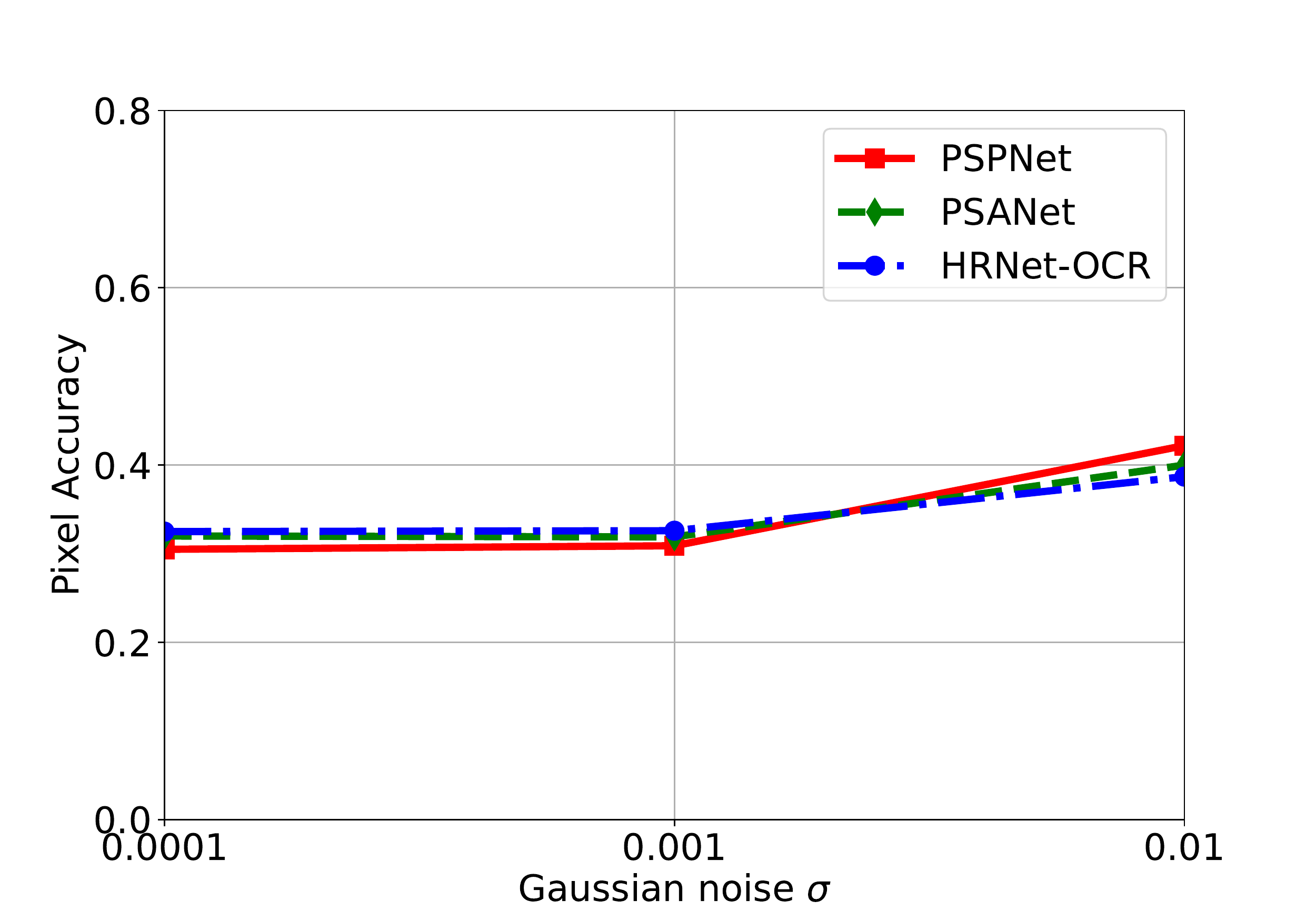}}
\subfigure[{Cityscape}]{\includegraphics[width=0.32\textwidth]{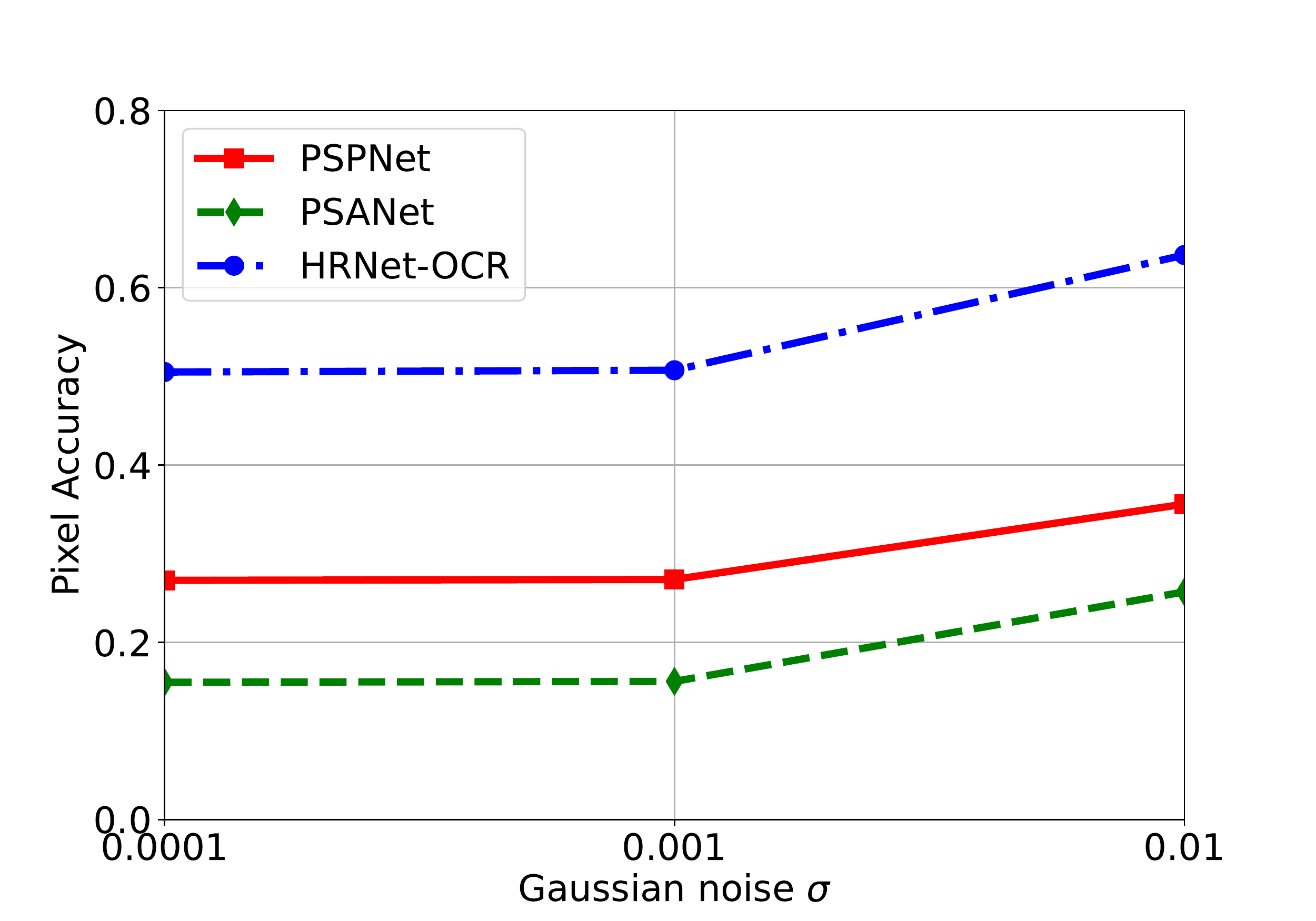}}
\subfigure[{ADE20K}]{\includegraphics[width=0.32\textwidth]{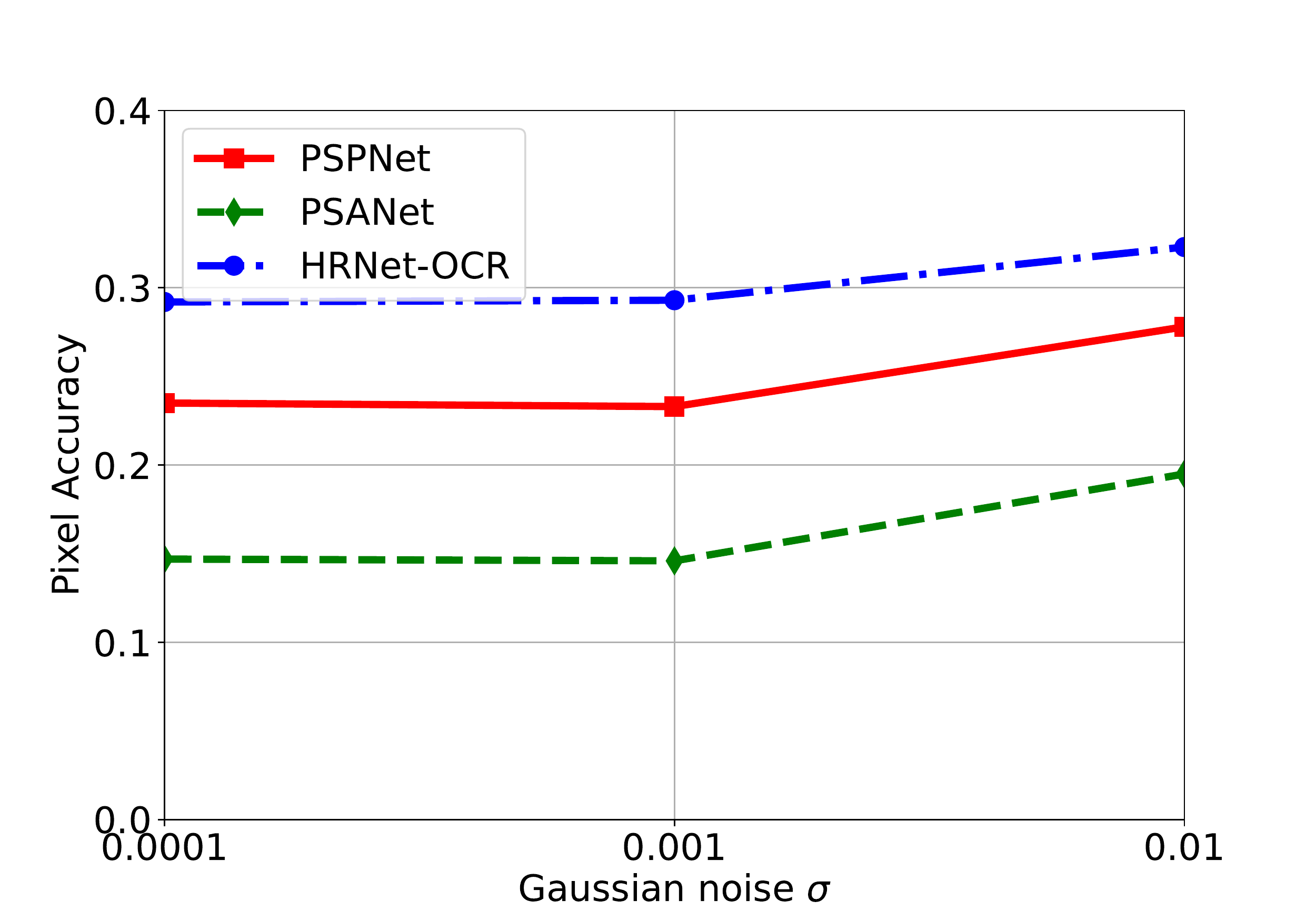}}
\vspace{-2mm}
\caption{{ Impact of the Gaussian noise $\sigma$ on our CR-PGD attack with different datasets and models, under $l_2$ perturbation.} } 
\label{fig:white_param_acc_sl2}
\vspace{-4mm}
\end{figure*}

\begin{figure*}[!t]
\centering
\vspace{-2mm}
\subfigure[{Pascal VOC}]{\includegraphics[width=0.32\textwidth]{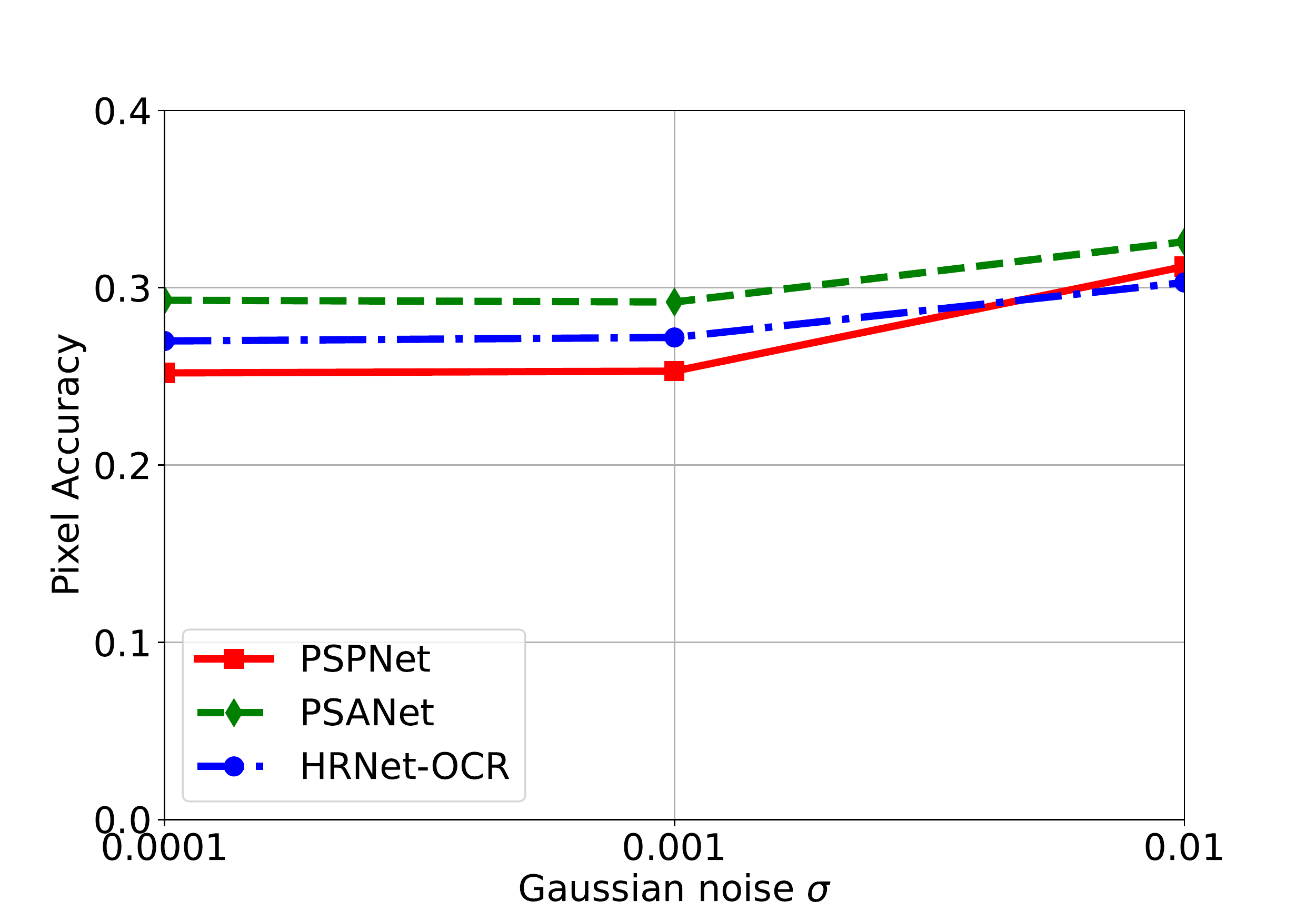}}
\subfigure[{Cityscape}]{\includegraphics[width=0.32\textwidth]{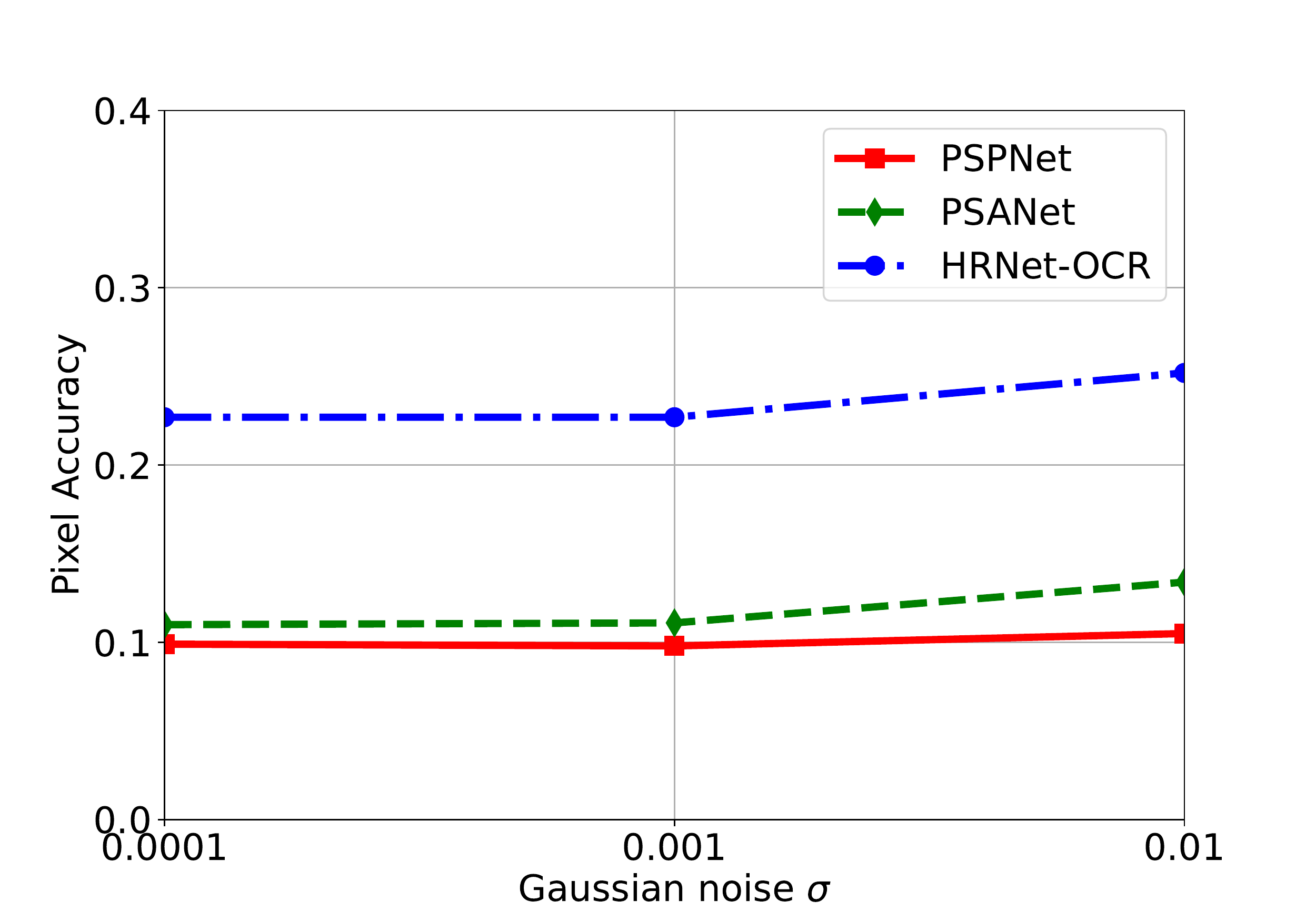}}
\subfigure[{ADE20K}]{\includegraphics[width=0.32\textwidth]{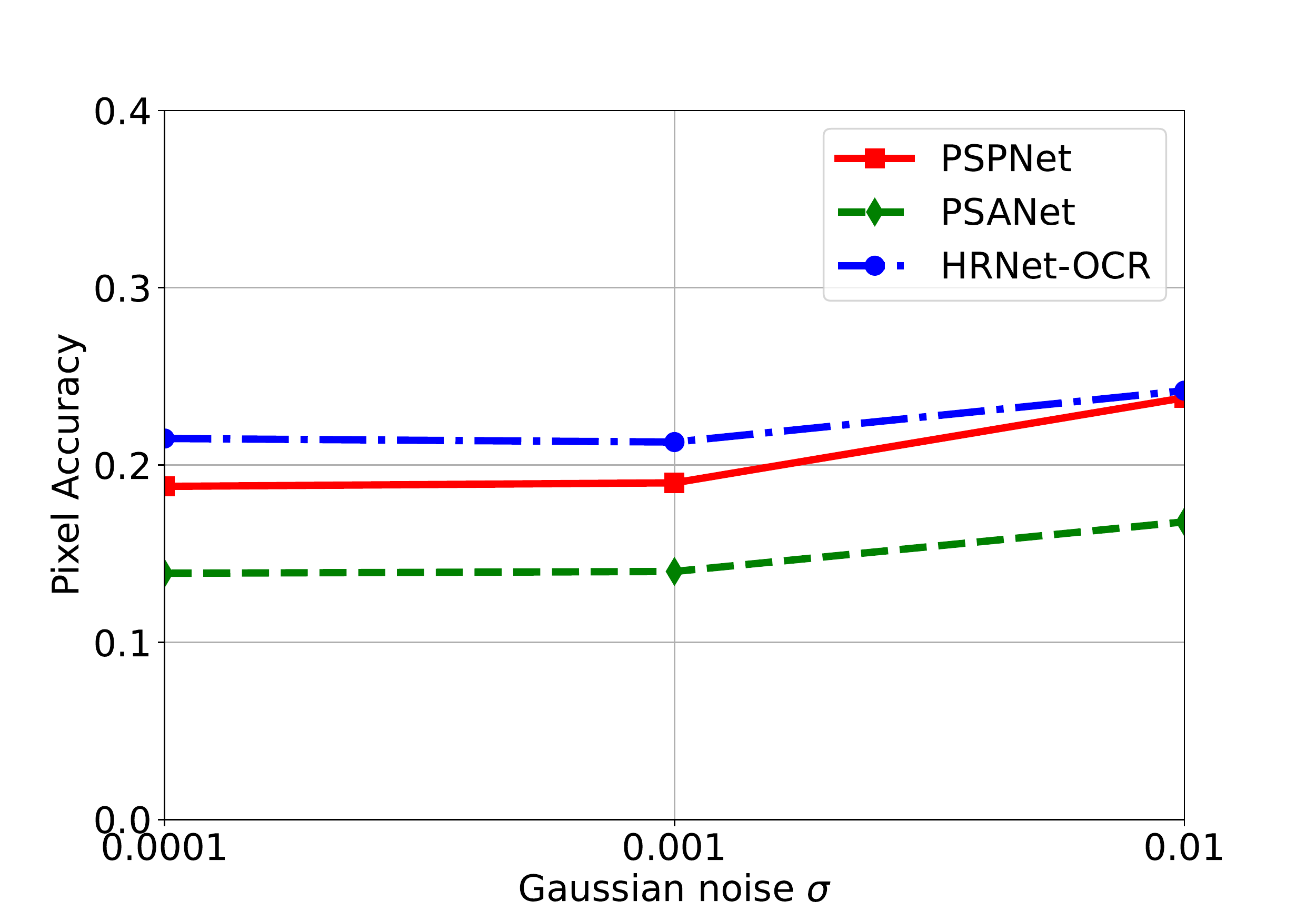}}
\vspace{-2mm}
\caption{{ Impact of the Gaussian noise $\sigma$ on our CR-PGD attack with different datasets and models, under $l_\infty$ perturbation.} } 
\label{fig:white_param_acc_slinf}
\vspace{-4mm}
\end{figure*}

\vspace{-2mm}
\subsection{Attacks and Defense}
\label{supp:methods}

We first briefly introduce the Fast Gradient Sign Method (FGSM)~\cite{goodfellow2014explaining}, Projected Gradient Descent (PGD)~\cite{madry2018towards}, and Dense Adversary Generation (DAG)~\cite{xie2017adversarial} attacks. Then, we introduce the fast adversarial training (FastADT) defense~\cite{wong2020fast}.

Suppose we are given a segmentation model $F_\theta$, a loss function $L$, and a testing image $x$ with groudtruth segmentation labels $y$, and the perturbation budget $\epsilon$.

\noindent {\bf Projected Gradient Descent (PGD).} 
PGD \emph{iteratively} finds the $l_p$ perturbations by increasing the loss of the model on a testing image $x$:
\begin{align}
\delta \leftarrow \textrm{Proj}_{\mathbb{B}} (\delta + \alpha \cdot  \nabla L(F_\theta(x+\delta),y)), 
\end{align}
where $\alpha$ is the learning rate in PGD and $\mathbb{B} = \{ \delta: ||\delta||_p \le \epsilon \}$ is the allowable perturbation set. $\textrm{Proj}_\mathbb{B}$ projects the adversarial perturbation to the allowable set $\mathbb{B}$. 
The specific forms of $l_1$, $l_2$, and $l_\infty$ perturbations are as follows:
\begin{align}
& L_1:  \delta \leftarrow \textrm{Clip}( \delta + \alpha \cdot \frac{\nabla L(F_\theta(x+\delta),y)}{||\nabla L(F_\theta(x+\delta),y)||_1})       \\
&L_2:  \delta \leftarrow \textrm{Clip}(\delta + \alpha \cdot  \frac{\nabla L(F_\theta(x+\delta),y)}{||\nabla L(F_\theta(x+\delta),y)||_2}) \\
& L_\infty:  \delta \leftarrow \textrm{Clip}( \delta + \alpha \cdot  \textrm{sign}(\nabla L(F_\theta(x+\delta),y), \epsilon),       
\end{align}
where $\textrm{Clip}(a, \epsilon)$ makes each $a_n$ in the range $[-\epsilon, \epsilon]$.

\begin{figure*}[!t]
\centering
\subfigure[$l_1$ perturbation ]{\includegraphics[width=0.32\textwidth]{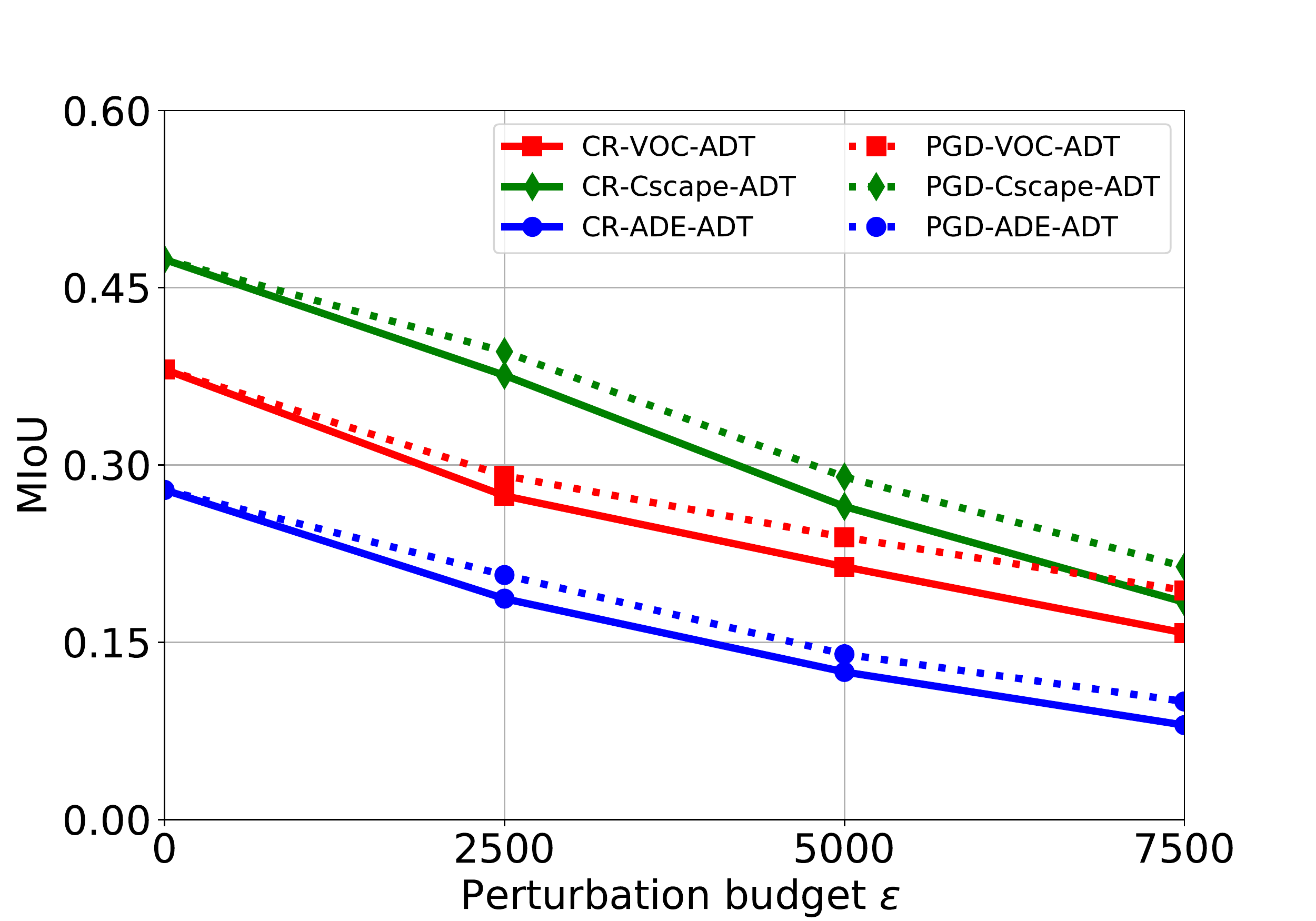}}
\subfigure[$l_2$ perturbation]{\includegraphics[width=0.32\textwidth]{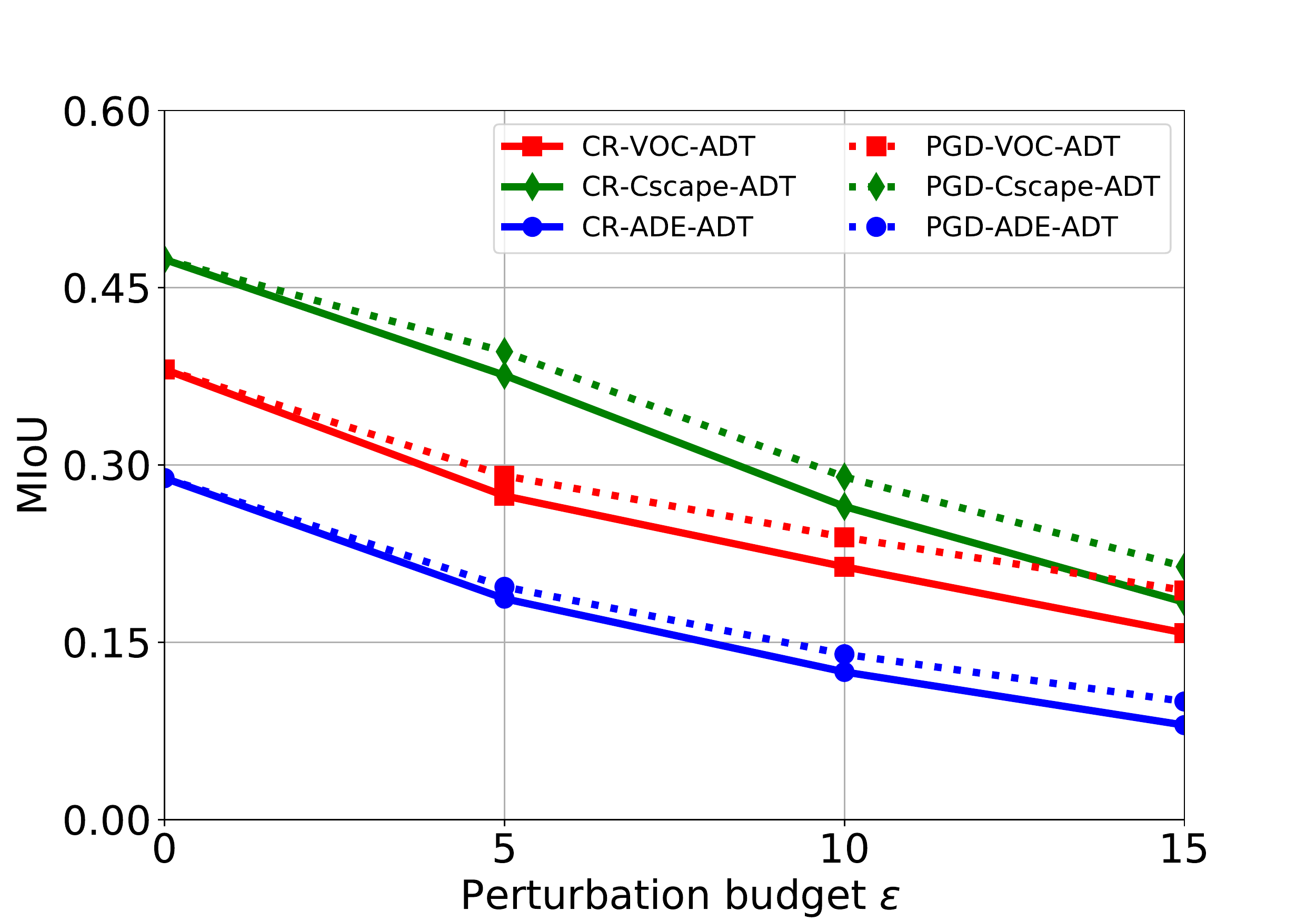}}
\subfigure[$l_\infty$ perturbation]{\includegraphics[width=0.32\textwidth]{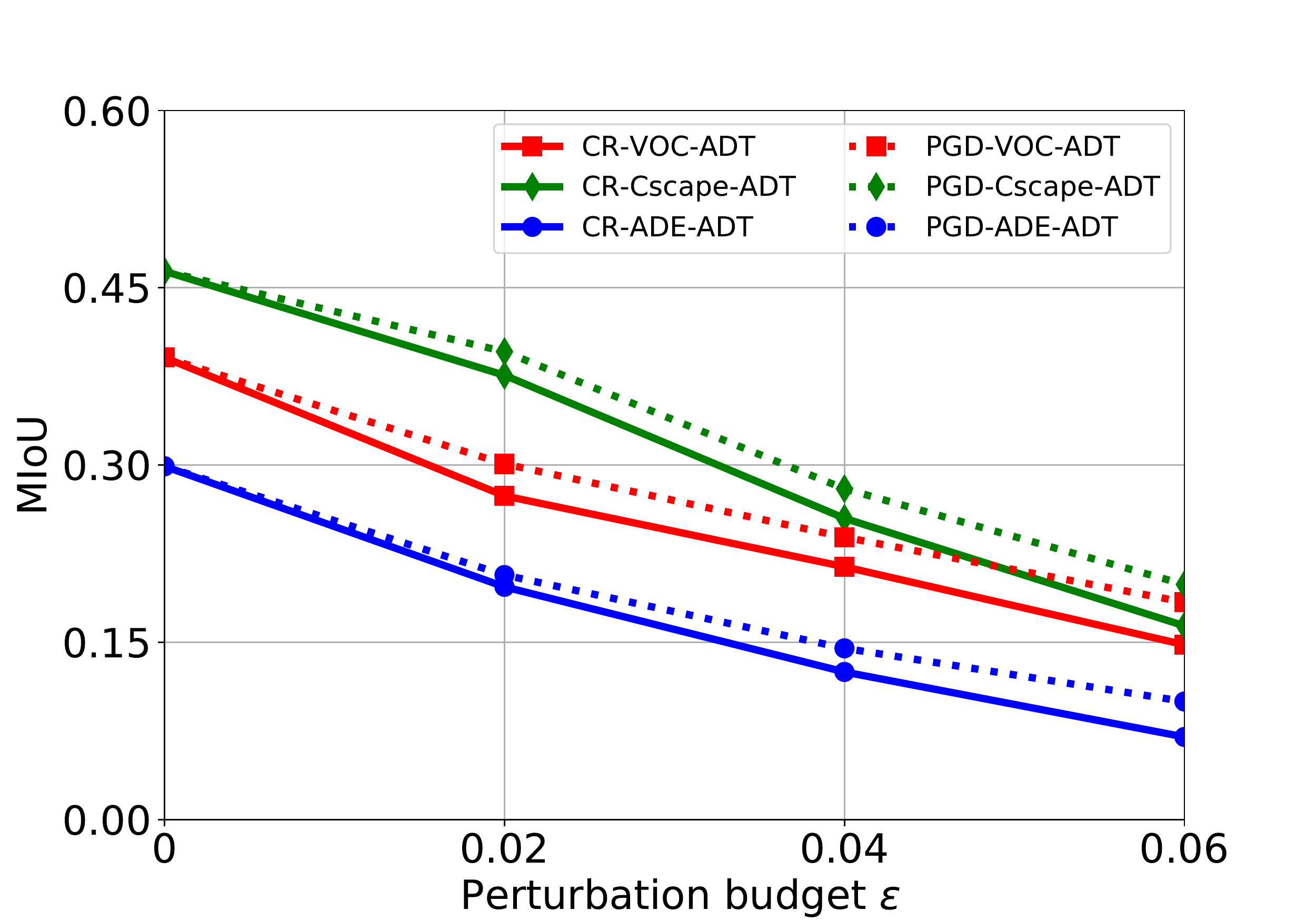}}
\vspace{-2mm}
\caption{{MIoU: Defending against our white-box CR-PGD attack via fast adversarial training.}} 
\label{fig:defense_miou}
\vspace{-2mm}
\end{figure*}

\begin{figure}[!t]
\centering
\vspace{-2mm}
{\includegraphics[width=0.36\textwidth]{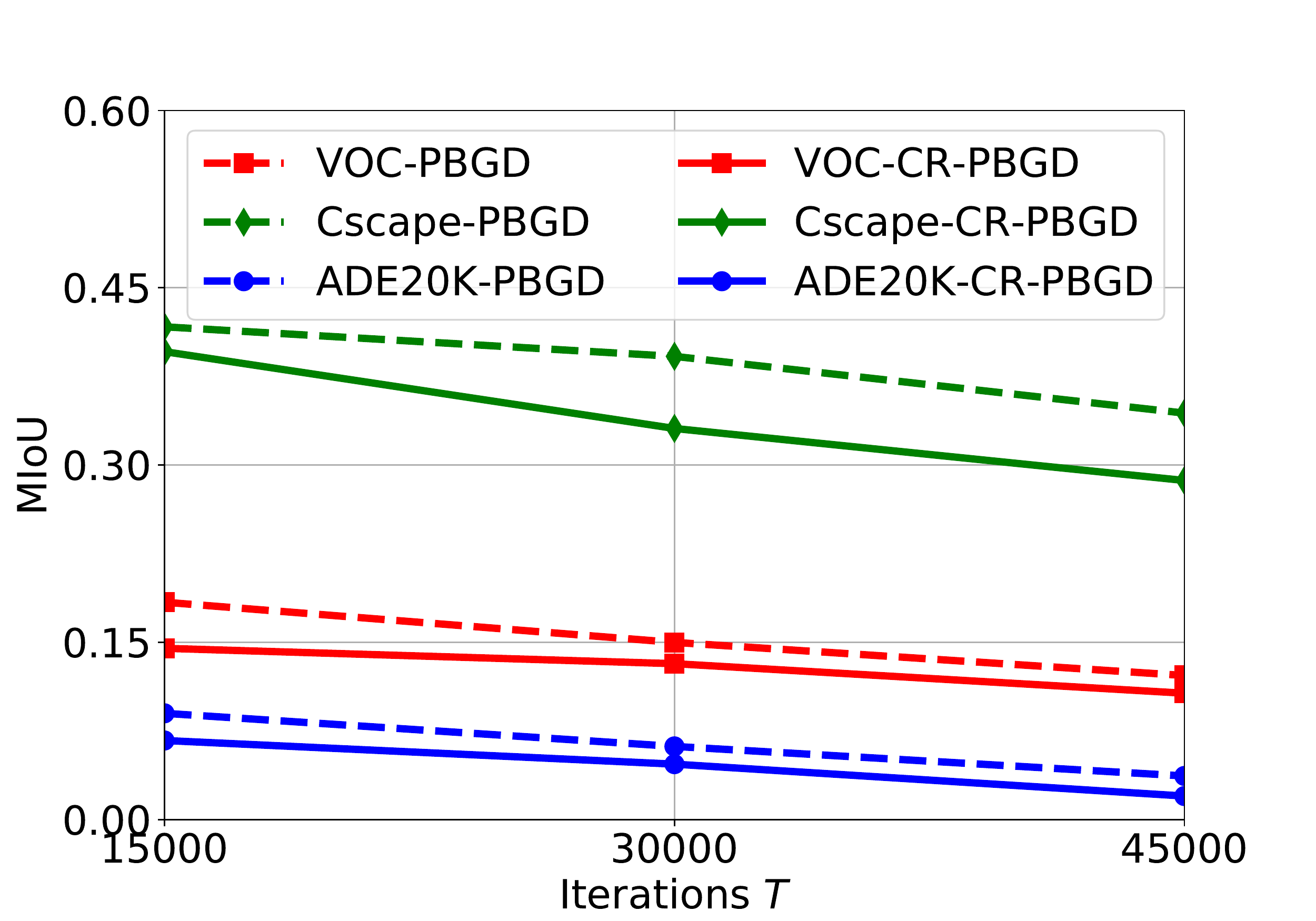}}
\vspace{-2mm}
\caption{MIoU: PGD and CR-PGD attacks with $l_2$ perturbation vs. $T$.} 
\label{fig:query_mio}
\end{figure}

\noindent {\bf Fast Gradient Sign (FGSM).}
FGSM is a variant of PGD, where it generates $l_p$ perturbations with a \emph{single step}.

\noindent {\bf Dense Adversary Generation (DAG).}
DAG generates an $l_\infty$ perturbation to make the predictions of all pixels be wrong (see Algorithm~\ref{alg:DAG} for the details). 
The perturbation $\delta$ 
is iteratively generated as follows:
\begin{align}
\footnotesize
    \delta \leftarrow \nabla_{\delta} L(F_\theta(x + \delta,y)) - \nabla_{\delta} L(F_\theta(x,y)), \,
    \delta' = \frac{0.5 \delta}{||\delta||_\infty}.
\end{align}
Algorithm~\ref{alg:DAG} shows the details of DAG.

\begin{algorithm}[t]
\small
\caption{FastADT: FGSM adv. training}
\label{alg:FastADT}
\LinesNumbered
\KwIn{Segmentation model $F_\theta$, perturbation budget $\epsilon$, learning rate $\alpha$, \#epochs $T$,
a training set $\mathbb{D}_{tr}$.}
\KwOut{Model parameters $\theta$.}
Initialize: $\delta^{(0)} = 0 \in \mathbb{B} = \{ \delta:  \|\delta\|_p \leq \epsilon \}$. \\
\For{$t=1,2,\cdots,T$}{
    \For{$(x, y) \in \mathbb{D}_{tr}$}{ 
    // Perform FGSM adversarial attack \\
    $\delta = \textrm{Uniform}(-\epsilon, \epsilon)$ \\
    
    $\delta = \textrm{Clip}(\delta + \alpha \cdot \textrm{sign} (\nabla_{\delta} L(F_\theta(x+\delta),y)))$ \\
    
    //  Update model parameters using, e.g., SGD \\
    
    $\theta = \theta - \nabla_{\theta} L(F_\theta(x+\delta),y)$ \\
    }
}
\Return{$\theta$}
\end{algorithm}

\noindent {\bf Fast Adversarial Training (FastADT).}
FastADT aims to 
speeds up PGD-based adversarial training (AT)~\cite{madry2018towards}.  It combines FGSM-based AT 
with random initialization. Though simple and efficient, it shows comparable defense performance with PGD-based AT. 
Algorithm~\ref{alg:FastADT} illustrates the FastADT.

\begin{definition}[Lipschitz continuous]
\label{def:lips}
A real-valued function $f(\cdot)$ is $C$-Lipschitz continuous with respect to $l_p$ norm, if, for any two inputs $z_1$ and $z_2$, 
\begin{equation}\label{eq:Lipschitz}
|f(z_1) - f(z_2)| \le {C} ||z_1 - z_2||_p.
\end{equation}
\end{definition}

\vspace{-2mm}
\begin{definition}[Convex function]
\label{def:cvx}
A real-valued function $f(\cdot)$ is convex if and only if the following inequality hold for any two inputs $z_1$ and $z_2$ in domain and $\rho\in(0,1)$,
\begin{equation}\label{eq:Convex}
f(\rho z_1+(1-\rho)z_2) \le \rho f(z_1) + (1-\rho) f(z_2).
\end{equation}
\end{definition}

\begin{lemma}[\cite{salman2019provably}]
\label{lem:lem1}
For any measurable function $g: \mathbb{R}^{N} \to [0,1]$, defining $\hat g(x)={E_{ \varepsilon \sim N(0,{\sigma ^2})}}g(x + \varepsilon)$, then $x \mapsto {\Phi ^{ - 1}}( \hat g(x))$ is $1/\sigma$-Lipschitz, where $\Phi ^{ - 1}$ is the inverse of the standard {Gaussian} 
CDF.
\end{lemma}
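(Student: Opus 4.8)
The plan is to prove the stated $1/\sigma$-Lipschitz bound by controlling the gradient of $h(x) := \Phi^{-1}(\hat{g}(x))$ pointwise. First I would observe that, since $\hat{g}$ is the convolution of a bounded measurable function $g$ with a Gaussian kernel $\phi_\sigma$, it is infinitely differentiable; hence it suffices to show $\|\nabla h(x)\|_2 \le 1/\sigma$ for every $x$ with $\hat{g}(x)\in(0,1)$, after which the mean value theorem delivers the Lipschitz property. The degenerate values $\hat{g}(x)\in\{0,1\}$, where $\Phi^{-1}$ is infinite, are excluded (there $\nabla\hat g=0$ as well) and handled by continuity.

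The next step is to express the gradient of the smoothed function by differentiating under the integral sign, a Gaussian integration-by-parts (Stein-type) identity. Writing $\hat{g}(x)=\int g(t)\,\phi_\sigma(t-x)\,dt$ and using $\nabla_x\phi_\sigma(t-x)=\phi_\sigma(t-x)\,(t-x)/\sigma^2$, dominated convergence (legitimate because $g$ is bounded and the Gaussian has a finite first moment) yields
\begin{equation*}
\nabla\hat{g}(x) = \frac{1}{\sigma^2}\,\mathbb{E}_{\varepsilon}\big[\varepsilon\, g(x+\varepsilon)\big],
\end{equation*}
and the chain rule, with $\Phi'=\phi$ the standard normal density, gives $\nabla h(x)=\nabla\hat{g}(x)/\phi(\Phi^{-1}(\hat{g}(x)))$.

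The heart of the argument, and the step I expect to be the main obstacle, is a tight bound on the numerator $\|\nabla\hat{g}(x)\|_2$. Let $p:=\hat{g}(x)$ and let $u$ be the unit vector along $\nabla\hat{g}(x)$, so that $\|\nabla\hat{g}(x)\|_2=\sigma^{-2}\,\mathbb{E}_{\varepsilon}[\langle u,\varepsilon\rangle\,g(x+\varepsilon)]$. I would maximize this linear functional over all measurable $g:\mathbb{R}^N\to[0,1]$ subject to the mass constraint $\mathbb{E}_{\varepsilon}[g(x+\varepsilon)]=p$. This is a Neyman--Pearson problem whose optimizer is the half-space indicator $g^\star(x+\varepsilon)=\mathbf{1}[\langle u,\varepsilon\rangle\ge\tau]$, with $\tau$ fixed by the mass constraint. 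Since $\langle u,\varepsilon\rangle\sim N(0,\sigma^2)$, matching mass gives $\tau=-\sigma\,\Phi^{-1}(p)$, and the truncated first moment $\mathbb{E}[W\,\mathbf{1}[W\ge\tau]]=\sigma\,\phi(\tau/\sigma)=\sigma\,\phi(\Phi^{-1}(p))$ (using that $\phi$ is even) yields $\|\nabla\hat{g}(x)\|_2\le\sigma^{-1}\,\phi(\Phi^{-1}(p))$.

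Combining the two bounds finishes the proof, since $\|\nabla h(x)\|_2=\|\nabla\hat{g}(x)\|_2/\phi(\Phi^{-1}(p))\le 1/\sigma$. The two subtle points I would spell out carefully are the interchange of differentiation and integration (routine via dominated convergence) and the optimality of the half-space indicator. For the latter I would argue directly: for any feasible $g$, the factors $\langle u,\varepsilon\rangle-\tau$ and $g^\star(x+\varepsilon)-g(x+\varepsilon)$ share a sign pointwise (both nonnegative where $\langle u,\varepsilon\rangle\ge\tau$, both nonpositive otherwise), so $\mathbb{E}[(\langle u,\varepsilon\rangle-\tau)(g^\star-g)]\ge 0$; the common mass $p$ cancels the $\tau$ term, giving $\mathbb{E}[\langle u,\varepsilon\rangle\,g^\star]\ge\mathbb{E}[\langle u,\varepsilon\rangle\,g]$ and hence the claimed extremal bound.
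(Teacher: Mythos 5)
Your proof is correct. Note that the paper itself does not prove this lemma --- it imports it verbatim from the cited reference \cite{salman2019provably} and uses it as a black box in the proof of Theorem~\ref{thm:pwcr} --- and your argument (differentiation under the integral to get $\nabla\hat g(x)=\sigma^{-2}\mathbb{E}[\varepsilon\,g(x+\varepsilon)]$, followed by the Neyman--Pearson extremal bound $\|\nabla\hat g(x)\|_2\le\sigma^{-1}\phi(\Phi^{-1}(\hat g(x)))$ over half-space indicators of matching mass, then the chain rule) is essentially the proof given in that reference, with the key sign-sharing step for optimality of the half-space handled correctly.
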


\subsection{Proof of Theorem \ref{thm:pwcr}}
\label{supp:pwcr}

\begin{proof} 
Our proof is based on Lemma \ref{lem:lem1} and Definition~\ref{def:lips}. 

\noindent Given a testing image $x$ with pixel labels $y$. 
For each pixel $x_n$, $F_\theta(x)_{n,c}$ is the model's probability of predicting label $c$ on $x_n$, and it satisfies $F_\theta{(x)_{n,c}}:x\to[0,1]$. From the definition of the smoothed model $G_\theta$ of $F_\theta$, we have $G_\theta{(x)_{n,c}} = {E_{\beta \sim N(0,{\sigma ^2})}}F_\theta{(x + \beta )_{n,c}}$. 
Let $y_n = \max_{c} G_\theta{(x)_{n,c}}$ and $y'_n = \max_{c \neq y_n} G_\theta{(x)_{n,c}}$. 
Using Lemma~\ref{lem:lem1}, we know $x \mapsto {\Phi ^{ - 1}}(G_\theta{(x)_{n,c}})$ is $1/\sigma$-Lipschitz. Then, for any pixel perturbation $||\delta||_2$, we have
\begin{small}
\begin{align}
& {\Phi ^{ - 1}}(G_\theta{(x + \delta )_{{n,y_n}}}) > {\Phi ^{ - 1}}(G_\theta{(x)_{{n,y_n}}}) - \frac{1}{\sigma }||\delta||_2.
\label{equ champ} \\
& {\Phi ^{ - 1}}(G_\theta{(x)_{n,y'_n}}) + \frac{1}{\sigma }||\delta||_2  > {\Phi ^{ - 1}}(G_\theta{(x + \delta )_{n,y'_n}}).
\label{equ runner}
\end{align}
\end{small}%
To guarantee that the perturbed pixel $(x + \delta)_n$ is also correctly predicted, i.e., $y_n = \arg \max_{c} G_\theta(x+\delta)_{n,c}$, we need that for any perturbation $\delta$, 
\begin{align}
\small
{\Phi ^{ - 1}}(G_\theta{(x + \delta )_{{n,y_n}}}) > {\Phi ^{ - 1}}(G_\theta{(x + \delta )_{n,y'_n}}). 
\label{equ jk}
\end{align}
In other words, we require
\begin{small}
\begin{align}
{\Phi ^{ - 1}}(G_\theta{(x)_{{n,y_n}}}) - \frac{1}{\sigma }||\delta||_2  > {\Phi ^{ - 1}}(G_\theta{(x)_{n,y'_n}}) + \frac{1}{\sigma }||\delta||_2,
\label{equ jk}
\end{align}
\end{small}%
which implies 
\begin{align}
\small
\label{cr_delta}
||\delta||_2 < \frac{\sigma}{2} (\Phi ^{ - 1}(G_\theta{(x)_{{n,y_n}}}) - \Phi ^{ - 1}(G_\theta{(x)_{n,y'_n}})).
\end{align}
According the Definition~\ref{def:pwcr}, we then have the pixel-wise $l_2$ certified radius as follows: 
\begin{small}
\begin{align*}
    cr(x_n) = \max {\|\delta\|_2} = \frac{\sigma}{2} (\Phi ^{ - 1}(G_\theta{(x)_{{n,y_n}}}) - \Phi ^{ - 1}(G_\theta{(x)_{n,y'_n}})). 
\end{align*}
\end{small}%
Note that $G_\theta{(x)_{n,y'_n}} \leq  1 - G_\theta{(x)_{{n,y_n}}}$ and $\Phi^{-1}$ is an increasing function.  By setting $G_\theta{(x)_{n,y'_n}} =   1 - G_\theta{(x)_{{n,y_n}}}$, we have 
\begin{align}
\label{eqn:pwcr_final}
cr(x_n) = \sigma \Phi ^{ - 1}(G_\theta{(x)_{n,y_n})}.
\end{align}
\end{proof}

\vspace{-6mm}

\subsection{Proof of Theorem \ref{th:grab}}
\label{supp:grab}

\begin{proof}
We first prove that $\hat{g}^{TPGE}$ is an unbiased estimator of $\nabla\hat{L}(\delta)$. The proof that $\hat{g}_{cr}^{TPGE}$ is an unbiased estimator of $\nabla\hat{L}_{cr}(\delta)$ is similar. Recall that $\hat{g}^{TPGE} = \frac{N}{2\gamma}\big(L(\delta+\gamma\hat{u})-L(\delta-\gamma\hat{u})\big)\hat{u}$ for a sampled $\hat{u} \in \mathcal{S}_p$. 
According to Equation~\ref{eq:est1}, we have the following two equations for directions $\bm{u}$ and $-\bm{u}$,
\begin{small}
\begin{align}
& \nabla\hat{L}(\delta) = \mathbb{E}_{\bm{u}\in\mathcal{S}_p}[\frac{N}{\gamma}L(\delta+\gamma\bm{u})\bm{u}],\\
& \nabla\hat{L}(\delta) = -\mathbb{E}_{\bm{u}\in\mathcal{S}_p}[\frac{N}{\gamma}L(\delta-\gamma\bm{u})\bm{u}].
\end{align}
\end{small}%
Combining the two equations and rearranging terms, we have $\nabla\hat{L}(\delta) = \mathbb{E}_{\bm{u}\in\mathcal{S}_p}[\frac{N}{2\gamma}\big(L(\delta+\gamma\bm{u})-L(\delta-\gamma\bm{u})\big)\bm{u}]$. Thus, we have $\nabla\hat{L}(\delta) = \mathbb{E}_{\hat{u}} [\hat{g}^{TPGE}]$, proving the unbiasedness of $\hat{g}^{TPGE}$.

Second, we show that $\hat{g}^{TPGE}$ has a bounded norm. The proof that $\hat{g}_{cr}^{TPGE}$ has a bounded norm is similar.
As $L(\cdot)$ is $\hat{C}$-Lipschitz continuous (See Definition~\ref{def:lips}) with respect to $l_p$ norm, we have  
$|L(\delta_1) - L(\delta_2)| \le \hat{C}||\delta_1 - \delta_2||_p.$
Then, 
\begin{equation}
\footnotesize
\begin{split}
||\hat{g}^{TPGE}||_p &= ||\frac{N}{2\gamma}\big(L(\delta+\gamma\bm{u})-L(\delta-\gamma\bm{u})\big)\bm{u}||_p\\
&=\frac{N}{2\gamma}|L(\delta+\gamma\bm{u})-L(\delta-\gamma\bm{u})|\cdot||\bm{u}||_p\\
&\stackrel{(a)}{=}\frac{N}{2\gamma}|L(\delta+\gamma\bm{u})-L(\delta-\gamma\bm{u})|\\
&\stackrel{(b)}{\le}\frac{N}{2\gamma}\hat{C}|| (\delta+\gamma\bm{u}) - (\delta-\gamma\bm{u})||_p\\
&=\frac{N}{2\gamma}\hat{C}||2\gamma\bm{u}||_p = N\hat{C},
\end{split}
\end{equation}
where (a) is due to that $u$ is a random unit vector and (b) is due to the Lipschitz continuity of $L$. 
\end{proof}

\subsection{Proof of Theorem \ref{th:bgd}}
\label{supp:main}

\begin{proof}
Before analyzing the regret bound of our proposed PBGD and CR-PBGD black-box attack, we first show the regret bound in the white-box setting where the Lipschitz continuous and convex loss function is revealed to the attacker such that the gradient can be exactly derived. Specifically, Zinkevich \cite{zinkevich2003online} shows that the regret bound is $\mathcal{O}(\sqrt{T})$ using online projected gradient descent. When using the stochastic gradient whose expectation is equal to the true gradient, the online projected gradient descent still achieves an $\mathcal{O}(\sqrt{T})$ regret bound, which is presented in the following theorem:
\begin{theorem}[\cite{zinkevich2003online}]\label{th:ogd}
For a $\tilde{C}$-Lipschitz convex loss function $L$ and letting $\tilde{g}^{(t)}$ be the stochastic gradient in round $t$ satisfying $\mathbb{E}[\tilde{g}^{(t)}] = \nabla L$ and $||\tilde{g}^{(t)}||\le \tilde{C}$. By setting a learning rate $\alpha = \frac{\sqrt{N}}{2\tilde{C}\sqrt{T}}$ in the online projected gradient descent for any total iteration $T>0$, then the regret 
incurred by the online projected gradient descent is no more than $\sqrt{N}\tilde{C}\sqrt{T}/2$, i.e.,
\begin{equation}
\small
\mathbb{E}\{\sum_{t=1}^TL(\delta^{(t)})\} - \max_{\delta}\sum_{t=1}^TL(\delta)\le\sqrt{N}\tilde{C}\sqrt{T}/2 = \mathcal{O}(\sqrt{T}),
\end{equation}
where $\delta^{(t)} = \textrm{Proj}_{\mathbb{B}}(\delta^{(t-1)} + \lambda \tilde{g}^{(t)})$.
\end{theorem}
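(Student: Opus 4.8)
The statement is the classical online (projected) gradient descent regret bound of Zinkevich, here adapted to a single fixed loss $L$ and to a \emph{stochastic} but unbiased gradient oracle $\tilde{g}^{(t)}$ with $\mathbb{E}[\tilde{g}^{(t)}]=\nabla L$ and $\|\tilde{g}^{(t)}\|\le\tilde{C}$. The plan is the standard potential-function argument: track the squared distance of the iterate to the comparator, telescope it over the $T$ rounds, bound the leftover gradient-norm terms, convert the resulting linear (inner-product) regret into a function-value regret through the first-order property of $L$, and take expectations to deal with the oracle's randomness. Since $L$ does not vary across rounds, the comparator is $\delta_*\in\mathbb{B}$ with $\max_\delta\sum_t L(\delta)=T L(\delta_*)$, so the quantity to control is the gap $\sum_t\mathbb{E}[L(\delta_*)-L(\delta^{(t)})]$.

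First I would expand the potential using the update $\delta^{(t+1)}=\textrm{Proj}_{\mathbb{B}}(\delta^{(t)}+\alpha\tilde{g}^{(t)})$ and the nonexpansiveness of Euclidean projection onto the convex set $\mathbb{B}$ (which applies because $\delta_*\in\mathbb{B}$):
\begin{equation}
\|\delta^{(t+1)}-\delta_*\|_2^2\le\|\delta^{(t)}-\delta_*\|_2^2+2\alpha\langle\tilde{g}^{(t)},\delta^{(t)}-\delta_*\rangle+\alpha^2\|\tilde{g}^{(t)}\|_2^2.
\end{equation}
Rearranging isolates $\langle\tilde{g}^{(t)},\delta_*-\delta^{(t)}\rangle$ on the left, with a telescoping distance difference plus $\tfrac{\alpha}{2}\|\tilde{g}^{(t)}\|_2^2$ on the right. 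Summing over $t=1,\dots,T$ collapses the distance terms to $\|\delta^{(1)}-\delta_*\|_2^2/(2\alpha)$, and using $\|\tilde{g}^{(t)}\|\le\tilde{C}$ bounds $\sum_t\|\tilde{g}^{(t)}\|_2^2\le T\tilde{C}^2$. The comparator is confined to the $l_p$-ball $\mathbb{B}$, whose $l_2$-diameter contributes the dimension factor $\sqrt{N}$; choosing $\alpha=\frac{\sqrt{N}}{2\tilde{C}\sqrt{T}}$ balances the $1/(2\alpha)$ and $\alpha T\tilde{C}^2/2$ terms and yields the claimed $\mathcal{O}(\sqrt{T})$ bound $\sqrt{N}\tilde{C}\sqrt{T}/2$. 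In our instantiation $\tilde{C}=N\hat{C}$ from Theorem~\ref{th:grab}, which is how the $N^{3/2}$ factor in Theorem~\ref{th:bgd} ultimately appears.

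The two steps I expect to be the delicate part are the linear-to-function-value conversion and the handling of randomness, which must be interleaved. I would condition on the history $\mathcal{F}_{t-1}$; because the iterate $\delta^{(t)}$ is $\mathcal{F}_{t-1}$-measurable while the oracle is unbiased, the tower rule gives $\mathbb{E}[\langle\tilde{g}^{(t)},\delta_*-\delta^{(t)}\rangle]=\mathbb{E}[\langle\nabla L(\delta^{(t)}),\delta_*-\delta^{(t)}\rangle]$. The first-order inequality for $L$ (Definition~\ref{def:cvx}, in the maximization form appropriate to the ascent update) then gives $L(\delta_*)-L(\delta^{(t)})\le\langle\nabla L(\delta^{(t)}),\delta_*-\delta^{(t)}\rangle$, so summing and combining with the telescoped bound produces $\sum_t\mathbb{E}[L(\delta_*)-L(\delta^{(t)})]\le\sqrt{N}\tilde{C}\sqrt{T}/2$, which is exactly the asserted regret. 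The main obstacle is bookkeeping the stochastic conditioning correctly---applying unbiasedness conditionally on the random iterate rather than marginally, and verifying that the almost-sure norm bound $\|\tilde{g}^{(t)}\|\le\tilde{C}$ survives the expectation so that $\sum_t\mathbb{E}\|\tilde{g}^{(t)}\|_2^2\le T\tilde{C}^2$---while matching the diameter of $\mathbb{B}$ to the exact $\sqrt{N}$ constant in the stated bound.
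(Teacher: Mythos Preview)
Your argument is the standard Zinkevich potential-function proof and is correct in outline: nonexpansiveness of the Euclidean projection, telescoping $\|\delta^{(t)}-\delta_*\|_2^2$, bounding $\sum_t\|\tilde g^{(t)}\|_2^2\le T\tilde C^2$, conditioning on $\mathcal{F}_{t-1}$ to replace $\tilde g^{(t)}$ by $\nabla L(\delta^{(t)})$, and balancing the two remaining terms via the stated step size. That is exactly how this result is proved in the cited reference.

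However, there is nothing to compare against: the paper does \emph{not} prove Theorem~\ref{th:ogd}. It is stated as a known result from \cite{zinkevich2003online} and invoked as a black box inside the proof of Theorem~\ref{th:bgd} (applied to the smoothed loss $\hat L$ with $\tilde C=N\hat C$, which is where the $N^{3/2}$ in Theorem~\ref{th:bgd} originates, just as you note). So you have supplied a proof the paper intentionally delegates to the literature.

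One caveat worth flagging, since you already sensed it: the first-order inequality you use, $L(\delta_*)-L(\delta^{(t)})\le\langle\nabla L(\delta^{(t)}),\delta_*-\delta^{(t)}\rangle$, is the \emph{concavity} bound, whereas both the theorem and the surrounding text call $L$ convex. The paper is doing gradient \emph{ascent} to maximize an attack loss, so the non-trivial reading of the statement requires concavity (equivalently, convexity of $-L$ with descent on $-L$). Your parenthetical ``in the maximization form appropriate to the ascent update'' is the right fix; just be explicit that you are using the concave first-order condition, since Definition~\ref{def:cvx} as written gives the opposite inequality.
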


Next, we derive the regret bound for PBGD for simplicity and the proof for CR-PBGD  is similar. 
Note that PBGD performs the exact gradient descent on the smoothed  function $\hat{L}$ because $\hat{g}^{TPGE}$ is an unbiased gradient estimator of $\nabla \hat{L}$. 
Based on Theorem \ref{th:grab}, we have $||\hat{g}||_p\le N\hat{C}$. From Theorem \ref{th:ogd}, we can bound the expected regret on $\hat{L}(\delta)$ as follows,
\begin{small}
\begin{equation}\label{eq:initRB}
\sum_{t=1}^T\mathbb{E}[\hat{L}(\delta^{(t)})] - \max_\delta \sum_{t=1}^T\hat{L}(\delta) \le N^{3/2}\hat{C}\sqrt{T}/2.
\end{equation} 
\end{small}%
According to the definition of $\hat{L}(\delta)$ in Equation \ref{eqn:smoothedloss}, we can bound the difference between $\hat{L}(\delta)$ and $L(\delta)$ as,
\begin{equation}
\small
\begin{split}
|\hat{L}(\delta) - L(\delta)| &\stackrel{(a)}{=} |\mathbb{E}_{\bm{v}\in\mathcal{B}_p}[L(\delta +\gamma\bm{v})] - L(\delta)|\\
&=\mathbb{E}_{\bm{v}\in\mathcal{B}_p}[|L(\delta +\gamma\bm{v}) - L(\delta)|]\\
&\stackrel{(b)}{\le}\mathbb{E}_{\bm{v}\in\mathcal{B}_p}[\hat{C}||\gamma\bm{v}||_p]
=\hat{C}\gamma,
\end{split}
\label{loss_conv}
\end{equation}
where (a) is due to the definition of $\hat{L}$ and (b) is due to the Lipschitz continuity of the loss function $L$. 

Then, we can bound 
$|\hat{L}(\delta^{(t)}) - L(\delta)|$ as follows:
\begin{equation}
\small
\begin{split}
|\hat{L}( \delta^{(t)}) - L(\delta)| &= |\hat{L}(\delta^{(t)}) - L(\delta^{(t)}) + L(\delta^{(t)}) - L(\delta)|\\
&\le |\hat{L}(\delta^{(t)}) - L(\delta^{(t)})| + |L(\delta^{(t)}) - L(\delta)|\\
&\stackrel{(a)}{\le} \hat{C}\gamma + \hat{C}\gamma = 2\hat{C}\gamma,
\end{split}
\end{equation}
where we apply Equation~\ref{loss_conv} to the first term and the Lipschitz continuity of attack loss $L(\delta)$ for the second term.

With the above inequality, we can obtain the lower bound of  Equation~\ref{eq:initRB} as,
\begin{small}
\begin{equation}\label{eq:initRB2}
\begin{split}
&\sum_{t=1}^T\mathbb{E}[\hat{L}(\delta^{(t)})] - \max_\delta \sum_{t=1}^T\hat{L}(\delta)\\
&\ge \sum_{t=1}^T\mathbb{E}[L(\delta^{(t)}) - 2\hat{C}\gamma] - \max_\delta \sum_{t=1}^T(L(\delta^{(t)}) + \hat{C}\gamma)
\end{split}
\end{equation}
\end{small}
Combining Equation~\ref{eq:initRB} and Equation~\ref{eq:initRB2}, rearranging terms, we can bound the regret as,
\begin{equation}
\small
\begin{split}
R^{PBGD}_{\mathcal{A}}(T) &=\sum_{t=1}^T\mathbb{E}[L(\delta^{(t)})] - \max_\delta \sum_{t=1}^TL(\delta)\\
& = \sum_{t=1}^T\mathbb{E}[L(\delta^{(t)})] - T L(\delta_*) \\
&\le N^{3/2}\hat{C}\sqrt{T}/2 + 3T\hat{C}\gamma,
\end{split}
\end{equation}
By equalizing the two terms in r.h.s of the above inequality, we can minimize the regret bound, which is $N^{3/2}\hat{C}\sqrt{T}$. By doing so, the approximation parameter $\gamma$ is set to be $\gamma=\frac{N^{3/2}}{6\sqrt{T}}$, thus completing the proof. 
\end{proof}

\subsection{Theoretical Contributions of Our Bound}
\label{sec:regret}
Our derived sublinear regret bound is not only  useful to design effective black-box attacks, but also has theoretical contributions 
for bandit optimization. 
We first review the different techniques used in the existing bandit methods, and then show our contribution.  

\vspace{+0.5mm}
\noindent {\bf Different bandit methods use different techniques.} 
Under the same assumption (i.e., convex loss with loss feedback), 
state-of-the-art bandit methods~\cite{larson2019derivative,ilyas2018black,hazan2014bandit,bubeck2017kernel,suggala2021efficient,saha2011improved} use different techniques to derive the regret bound (except \cite{larson2019derivative,ilyas2018black} that do not have a regret bound). Specifically, \cite{hazan2016optimal,suggala2021efficient} are based on the original loss function, while \cite{saha2011improved} and our method construct a smoothed loss function for the original loss function via randomization. \cite{hazan2016optimal} does not estimate gradients and derives a probabilistic regret bound based on the ellipsoid method. \cite{bubeck2017kernel} does not estimate gradients and aims to recover the full loss function via the loss feedback using kernel methods, but can only derive a probabilistic regret bound. \cite{suggala2021efficient} estimates the gradient and Hessians of the original loss functions from a one-point feedback and uses self-concordant regularizers to iteratively update perturbations; \cite{suggala2021efficient} also derives a probabilistic regret bound. \cite{saha2011improved} estimates (unbiased) gradients for the smoothed function based on one-point feedback, and iteratively obtains perturbations using self-concordant regularizers (but not a gradient-descent based method), similar to \cite{suggala2021efficient}.  Our method estimates unbiased gradients for the smooth loss function based on two-point feedback. In addition, it iteratively updates perturbations based on the graceful gradient-descent framework.  
These two advantages enable our derived regret bound to be deterministic and tight. 

\vspace{+0.5mm}
\noindent {\bf Our regret bound is deterministic and tight.} 
State-of-the-art bandit methods~\cite{larson2019derivative,ilyas2018black,hazan2014bandit,bubeck2017kernel,suggala2021efficient,saha2011improved} 
either do not have a regret bound,  or they have a probabilistic regret bound, or a loose deterministic regret bound---indicating that their performance is suboptimal when applied to the black-box attack problem. Specifically, 
\cite{larson2019derivative,ilyas2018black} do not derive a regret bound; \cite{hazan2014bandit,bubeck2017kernel,suggala2021efficient} have a probabilistic regret bound  $O(T^{1/2} \log(1/\epsilon)$  with probability $1-\epsilon$; and \cite{saha2011improved} has a loose deterministic regret bound  $O(T^{2/3})$. 
In contrast, our bandit algorithm derives a tight and deterministic $O(T^{1/2})$ regret bound, indicating our attack reaches optimal performance with probability 100\% and the optimal rate.

\end{document}